\documentclass[pdflatex,sn-mathphys-num]{sn-jnl}


\usepackage{graphicx}%
\usepackage{multirow}%
\usepackage{amsmath,amssymb,amsfonts}%
\usepackage{amsthm}%
\usepackage{mathrsfs}%
\usepackage[title]{appendix}%
\usepackage{xcolor}%
\usepackage{textcomp}%
\usepackage{manyfoot}%
\usepackage{booktabs}%
\usepackage{algorithm}%
\usepackage{algorithmicx}%
\usepackage{algpseudocode}%
\usepackage{listings}%

\usepackage{bm}
\usepackage{subcaption}
\usepackage{todonotes}


\theoremstyle{thmstyleone}%
\newtheorem{theorem}{Theorem}
\newtheorem{proposition}[theorem]{Proposition}%
\newtheorem{corollary}{Corollary}

\theoremstyle{thmstyletwo}%
\newtheorem{example}{Example}%
\newtheorem{remark}{Remark}%

\theoremstyle{thmstylethree}%
\newtheorem{definition}{Definition}%
\newtheorem{property}{Property}
\newtheorem{assumption}{Assumption}

\graphicspath{{../../.}{../.}{../../images}}

\DeclareMathOperator{\R}{\mathbb{R}}
\DeclareMathOperator{\N}{\mathbb{N}}
\DeclareMathOperator{\E}{\mathbb{E}}

\DeclareMathOperator{\trace}{trace}
\DeclareMathOperator{\Var}{Var}

\DeclareMathOperator{\KL}{D_\mathrm{KL}}

\DeclareMathOperator{\Hell}{D_\mathrm{Hell}}

\DeclareMathOperator{\AlphaDiv}{D_{\alpha}}

\DeclareMathOperator*{\argmin}{arg\,min}

\renewcommand{\d}{\mathrm{d}}
\newcommand{\Xc}{\mathcal{X}}
\newcommand{\Yc}{\mathcal{Y}}

\DeclareMathOperator{\rhoref}{\rho_{\mathrm{ref}}}
\newcommand{\pitar}{\pi}
\newcommand{\pibridge}[1]{\pi^{(#1)}}
\newcommand{\pitartilde}{\widetilde{\pi}}
\newcommand{\pibridgetilde}[1]{\widetilde{\pi}^{(#1)}}

\newcommand{\pipullbacktilde}[1]{\rho^{(#1)}}

\raggedbottom

\begin{document}
	
	\title[Sequential transport maps using SoS density estimation and $\alpha$-divergences]{Sequential transport maps using SoS density estimation and $\alpha$-divergences}

	\author*[1]{\fnm{Benjamin} \sur{Zanger}}\email{benjamin.zanger@inria.fr}
	\equalcont{These authors contributed equally to this work.}
	
	\author[1]{\fnm{Olivier} \sur{Zahm}}\email{olivier.zahm@inria.fr}
	\equalcont{These authors contributed equally to this work.}
	
	\author[2]{\fnm{Tiangang} \sur{Cui}}\email{tiangang.cui@sydney.edu.au}
	
	\author[1]{\fnm{Martin} \sur{Schreiber}}\email{martin.schreiber@univ-grenoble-alpes.fr}
	
	\affil*[1]{\orgname{Université Grenoble Alpes, Inria, CNRS, Grenoble INP, LJK}, \orgaddress{\street{150 Pl. du Torrent}, \city{Saint-Martin-d'Hères}, \postcode{38400}, \country{France}}}
	
	\affil[2]{\orgdiv{School of Mathematics and Statistics}, \orgname{University of Sydney}, \orgaddress{\street{NSW 2006}, \country{Australia}}}

	
	\abstract{Transport-based density estimation methods are receiving growing interest because of their ability to efficiently generate samples from the approximated density.
		We further investigate the sequential transport maps framework proposed from~\cite{cui_scalable_2023, cui_self-reinforced_2023}, which builds on a sequence of composed Knothe--Rosenblatt (KR) maps.
		Each of those maps are built by first estimating an intermediate density of moderate complexity, and then by computing the exact KR map from a reference density to the precomputed approximate density.
		In our work, we explore the use of Sum-of-Squares (SoS) densities and $\alpha$-divergences for approximating the intermediate densities.
		Combining SoS densities with $\alpha$-divergence interestingly yields \emph{convex} optimization problems which can be efficiently solved using semidefinite programming.
		The main advantage of $\alpha$-divergences is to enable working with \emph{unnormalized} densities, which provides benefits both numerically and theoretically.
		In particular, we provide a new convergence analyses of the sequential transport maps based on information geometric properties of $\alpha$-divergences.
		The choice of intermediate densities is also crucial for the efficiency of the method. While tempered (or annealed) densities are the state-of-the-art, we introduce diffusion-based intermediate densities which permits to approximate densities known from samples only. 
		Such intermediate densities are well-established in machine learning for generative modeling.
		Finally we propose low-dimensional maps (or lazy maps) for dealing with high-dimensional problems and numerically demonstrate our methods on Bayesian inference problems and unsupervised learning tasks.}

	\keywords{Sequential density estimation, Measure transport, Knothe--Rosenblatt rearrangement, Bayesian inference, Unsupervised learning, Tempering, Diffusion models}
	
	
	
	\maketitle

	\section{Introduction}

	Density estimation is a fundamental problem in data sciences.
	Transport-based methods are receiving growing interest because of their ability to sample easily from the approximated density \cite{marzouk_introduction_2016,grenioux2023sampling,parno2018transport,rezende2015variational,maurais2024sampling}.
	These methods aim at building a deterministic diffeomorphism $\mathcal{T}$, also called a transport map, which pushes forward an arbitrary reference probability density $\rhoref$ to a given target probability density $\pitar$ to be approximated.
	This pushforward density, denoted by $\mathcal{T}_\sharp \rhoref$, is the density of the random vector $\mathcal{T}(\bm \xi)$, where $\bm \xi\sim\rhoref$.
	Variational methods consist in solving a problem of the form
	\begin{equation}\label{eq:variational_TM}
		\min_{\mathcal{T} \in\mathcal{M}} \mathrm{D}(\pitar ||\mathcal{T}_\sharp \rhoref),
	\end{equation}
	where the statistical divergence $\mathrm{D}(\cdot || \cdot)$ and the set of diffeomorphisms $\mathcal{M}$ are typically chosen so that problem \eqref{eq:variational_TM} is a tractable problem.
	Typically, the Kullback--Leibler (KL) divergence is employed for $\mathrm{D}(\cdot\ || \cdot)$ when only samples from $\pitar$ are available (learning from data), while the reverse KL divergence is utilized when $\pitar$ can be evaluated up to a normalizing constant (learning from unnormalized density). Minimizing the reverse KL divergence, however, fails to capture multi-modal densities $\pitar$, a well known problem for such zero-forcing divergences, see e.g.~\cite{cichocki_families_2010,noe2019boltzmann,felardos_designing_2023}.
	Regarding the set $\mathcal{M}$, the first cornerstone for handling high-dimensional diffeomorphic maps is the monotone triangular map, where each $i$-th map component depends on the first $i$ variables only and is monotone in the $i$-th variable \cite{baptista2023representation,jaini2019sum,dinh_density_2017}. As to increase the approximation power, the second cornerstone is to compose multiple monotone triangular map.
	This is the basic idea of many map parametrization used in the Normalizing flows literature \cite{papamakarios_normalizing_2021,felardos_designing_2023,rezende2015variational} to cite just a few.
	Let us emphasis that, in principle, there exists infinitely many maps $\mathcal{T}$ which satisfy $\mathcal{T}_\sharp \rhoref = \pitar$. 
	This is contrarily to the Monge optimal transport problem which seeks the map $\mathcal{T}$ that minimizes some transport-related loss $\int(\bm x - \mathcal{T}(\bm x))^2\d\rhoref(\bm x)$ under the constraint $\mathcal{T}_\sharp \rhoref =\pitar$, see \cite{villani2009optimal,peyre2019computational}.
	In contrast, the goal of \eqref{eq:variational_TM} is to build \emph{a tractable} map $\mathcal{T}\in\mathcal{M}$ such that $\mathcal{T}_\sharp \rhoref \approx \pitar$.
	In other terms, problem \eqref{eq:variational_TM} is not concerned with the paths by which $\mathcal{T}$ transports $\rhoref$ to $\pitar$; it solely focuses on the resulting distance between $\mathcal{T}_\sharp \rhoref$ and $\pitar$.

	\begin{figure}
		\includegraphics[width=\textwidth]{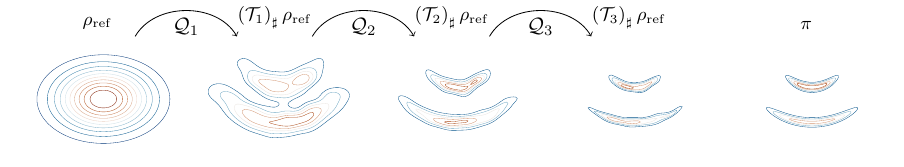}
		\caption{Visualization of the approximation of a bimodal density $\pitar$ (right) using $L=3$ intermediate tempered  densities estimated using SoS \eqref{eq:approx_class_SoS} and a Gaussian reference density $\rhoref$.}
		\label{fig:approx_double_banana_intro}
	\end{figure}
	
	An emerging strategy for solving \eqref{eq:variational_TM} is to first approximate $\pitar$ with an approximate density $\widetilde\pi$ and then to compute a map $\mathcal{T}$ which exactly pushes forward $\rhoref$ to $\widetilde\pi$.
	Among the infinitely many maps $\mathcal{T}$ which satisfy $\mathcal{T}_\sharp\rhoref = \widetilde\pi$, the Knothe--Rosenblatt (KR) map is rather simple to evaluate since it requires only computing the cumulative distribution functions (CDFs) of the conditional marginals of $\widetilde\pi$.
	In the seminal work \cite{dolgov2020approximation}, $\widetilde\pi$ is built in the tensor-train format, an approximation class which permits to efficiently compute the KR map, see also \cite{cui_deep_2021,cui_scalable_2023,cui2024deep}. Polynomial approximation methods have also been used recently in \cite{cui_self-reinforced_2023,westermann_measure_2023} for building transport maps.
	In all these papers, the statistical divergence $\mathrm{D}(\cdot || \cdot)$ is chosen to be the Hellinger distance, which is the $L^2$ distance between square root densities. Conveniently, this distance permits building approximations $\widetilde\pi$ to $\pitar$ using standard $L^2$ function approximation techniques, like polynomial interpolation, least-squares or tensor methods.
	In general, however, the variational problem $\min_{\widetilde\pi}\mathrm{D}(\pitar||\widetilde\pi)$ is difficult to solve when $\pitar$ is multimodal or when it concentrates on a low-dimensional manifold.
	The solution proposed in \cite{cui_deep_2021} consists in introducing an arbitrary sequence of bridging densities
	\begin{equation}\label{eq:bridging_intro}
		\pibridge{1},\pibridge{2},\hdots,\pibridge{L} = \pitar,
	\end{equation}
	with increasing complexity. 
	Similar to Sequential Monte Carlo Samplers \cite{del_moral_sequential_2006}, this sequence allows for breaking down the challenging approximation problem \eqref{eq:variational_TM} into a sequence of intermediate problems of more manageable complexity.
	A similar approach was also proposed in~\cite{eigel_less_2024} to guide the dynamics in Langevin based samplers for faster convergence and better capturing multimodal densities.
	A classical choice of bridging densities are tempered densities
	$\pibridge{\ell}(\bm x) \propto \pitar(\bm x)^{\beta_\ell} \pi_0(\bm x)^{1 - \beta_\ell}$ with parameters $0\leq\beta_0\leq\hdots\leq\beta_L=1$ and arbitrary density $\pi_0$~\cite{cui_deep_2021,cui_self-reinforced_2023,cui_scalable_2023}, although other relevant choices are possible depending on the application \cite{cui2024deep, del_moral_sequential_2006}.
	The sequential strategy consists in building $L$ transport maps $\mathcal{Q}_{1},\hdots,\mathcal{Q}_{L}$ one after the other by solving
	\begin{equation}\label{eq:variational_density_problem}
		\min_{ \mathcal{Q}_{\ell} \in\mathcal{M} } \mathrm{D}( \pibridge{\ell} || ( \mathcal{T}_{\ell-1} \circ \mathcal{Q}_{\ell} )_\sharp \rhoref) ,
		\quad\text{where}\quad
		\mathcal{T}_{\ell-1} = \mathcal{Q}_{1}\circ\hdots\circ \mathcal{Q}_{\ell-1}.%
	\end{equation}
	An illustration of such an sequential approximation using $L=3$ steps is depicted in Figure~\ref{fig:approx_double_banana_intro}.
	For suitable statistical distances, so that $\mathrm{D}(\pi || \mathcal{T}_\sharp \rho) = \mathrm{D}(\mathcal{T}^\sharp \pi || \rho)$, these problems are equivalent to estimating the pullback density $(\mathcal{T}_{\ell-1})^\sharp\pibridge{\ell}$ with an intermediate approximation $\pipullbacktilde{\ell}=(\mathcal{Q}_{\ell} )_\sharp \rhoref$.
	Again, this can be done by first building the approximate density $\pipullbacktilde{\ell}$ and then by extracting the KR map $\mathcal{Q}_{\ell}$ which pushes forward $\rhoref$ to the precomputed $\pipullbacktilde{\ell}$.

	Our first contribution is to
	employ Sum-of-Squares (SoS) densities to approximate the intermediate densities $\rho^{(\ell)}$ using $\alpha$-divergences $\AlphaDiv(\cdot||\cdot)$. Hence, we sequentially solve the variational density approximation problem
	\begin{equation}\label{eq:SoS_density_approximation}
		\min_{A_\ell \succeq 0} \AlphaDiv( \pitar|| (\mathcal{T}_{\ell-1})_\sharp \rho^{(\ell)}),
	\end{equation}
	where
	\begin{equation}\label{eq:approx_class_SoS}
		\rho^{(\ell)}(\bm x) = \left( \Phi(\bm x)^\top A_\ell \Phi(\bm x) \right) \rhoref(\bm x),
	\end{equation}
	for some arbitrary orthonormal basis function $\Phi$ in $L^2(\rhoref)$.
	Here, the positivity of the matrix $A_\ell \succeq 0$ ensures the density $\rho^{(\ell)}$ to be positive.
	Since the $\alpha$-divergence is defined for general \emph{unnormalized} densities, it is not necessary to know the normalizing constant of $\pi$ beforehand nor to impose $\int  \rho^{(\ell)}(\bm x)\d\bm x= 1$ when solving \eqref{eq:SoS_density_approximation} ($\rho^{(\ell)}$ can be normalized in a second step).
	Note that SoS densities~\eqref{eq:approx_class_SoS} is a generalization of the polynomial squared method used in \cite{cui_self-reinforced_2023,westermann_measure_2023} where $\pipullbacktilde{\ell} (\bm x) \propto (  v_\ell^\top \Phi(\bm x) )^2 \rhoref(\bm x)$.
	SoS densities permit to efficiently compute the KR map $\mathcal{Q}_{\ell}$ such that $(\mathcal{Q}_{\ell})_\sharp\rhoref = \pipullbacktilde{\ell}$, which is a crucial property for the proposed methodology.
	Let us mention that SoS functions have also been used in \cite{jaini2019sum,akata_riemannian_2021} for parametrizing the map $\mathcal{T}$ directly in order to ensure the monotonicity of each map component.
	The proposed $\alpha$-divergences $\AlphaDiv(\cdot || \cdot)$ with parameter $ \alpha \in \R $ include the Hellinger distance and KL-divergence used in previous works.
	Most importantly, the use of $\alpha$-divergence for performing SoS density estimation as in~\eqref{eq:SoS_density_approximation} results in a \emph{convex} optimization problems which can be efficiently solved using off-the-shelf toolboxes.

	Our second contribution is to extend the methodology to the scenario where \emph{only samples} $\bm X^{(1)},\hdots,\bm X^{(N)}$ from $\pitar$ are available, as opposed to \emph{e.g.} \cite{cui_self-reinforced_2023,eigel_less_2024,westermann_measure_2023,cui_deep_2021,cui_scalable_2023} which rely on \emph{point-evaluations} of the target density $\pitar$.
	When learning from samples, however,
	the tempered bridiging densities $\pibridge{\ell}(\bm x) \propto \pitar(\bm x)^{\beta_\ell} \pi_0(\bm x)^{1-\beta_\ell}$ cannot be employed because $\pitar(\bm x)$ is not accessible. We rather consider diffusion-based bridging densities
	\begin{equation}\label{eq:bridge_diffusion}
		\pibridge{\ell}(\bm x) = \int \kappa_{t_\ell}(\bm x,\bm y) \pitar(\bm y) \d \bm y ,
	\end{equation}
	with time parameters $t_1 \geq\hdots\geq t_L = 0$, where $\kappa_{t}(\cdot,\cdot)$ is the kernel of a diffusion process such as the Ornstein-Uhlenbeck process.
	By construction, samples $\bm X_\ell^{(i)}\sim \pibridge{\ell}$ are readily obtained simply by rescalling and adding noise to the samples $\bm X^{(i)}\sim \pitar$.
	This idea is at the root of diffusion models \cite{sohl2015deep,song2021scorebased}.
	The use of diffusion-based bridging densities as in \eqref{eq:bridge_diffusion} is also receiving growing attention to accelerate MCMC sampling, see \cite{grenioux2024stochastic,akhound2024iterated}.

	Our third contribution is a novel convergence analysis using the geometric properties of $\alpha$-divergences.
	This analysis unifies and extends previous analyses proposed in \cite{cui_self-reinforced_2023,westermann_measure_2023,cui_deep_2021,cui_scalable_2023} and, more interestingly, it guides the choice of briding densities. In particular, we show that a smart choice of $\beta_\ell$ for tempered densities or of $t_\ell$ for diffusion-based densities yield a convergence rate of $\mathcal{O}(1/L^2)$ with respect to the number of layer $L$.

	The rest of the paper is as follows.
	We introduce $\alpha$-divergences in Section~\ref{sec:AlphaDiv} and SoS functions in Section~\ref{sec:SoS_function_class}.
	We then explain the sequential framework using bridging densities and provide convergence analysis of sequential transport maps in Section~\ref{sec:SequentialTM}.
	Practical considerations for high-dimensionality and an algorithm for working with datasets is presented in Section~\ref{sec:highdim}.
	Finally, in Section~\ref{sec:Numerical_examples}, numerical examples demonstrate the feasibility of the proposed methods.
	
	\section{Variational density estimation using $\alpha$-divergence}\label{sec:AlphaDiv}

	We propose to use $\alpha$-divergences~\cite{cichocki_families_2010} for the variational density estimation. Using the definition of $\alpha$-divergences from~\cite{zhu_bayesian_1995}, for a given $\alpha\in\R$, the $\alpha$-divergence between two \emph{unnormalized} densities $f$ and $g$ defined on $\Xc\subseteq\R^d$ (meaning integrable and positive functions) reads
	\begin{equation}\label{eq:def_alpha_divergences}
		\AlphaDiv\left(f || g\right) = \int_{\Xc} \phi_{\alpha}\left(\frac{f(\bm x)}{g(\bm x)}\right) g(\bm x) \d\bm x, \quad \text{with } \phi_{\alpha}(t) = \begin{cases}
			\frac{t^{\alpha} - 1}{\alpha (\alpha -1)} - \frac{t - 1}{\alpha - 1}  &\alpha\notin\{0, 1\} \\
			t \ln(t) - t + 1  &\alpha = 1 \\
			- \ln(t) + t - 1  &\alpha = 0.
		\end{cases}
	\end{equation}
	Denoting by $\pi_f(\bm x)\propto f(\bm x)$ and $\pi_g(\bm x)\propto g(\bm x)$ the probability densities obtained by normalizing respectively $f$ and $g$, the $\alpha$-divergences for \emph{normalized} densities simplify to
	\begin{equation}\label{eq:def_alpha_divergences_normalized}
		\mathrm{D}_{\alpha}^{n}(\pi_f || \pi_g) = \int \phi^n_{\alpha}\left(\frac{\pi_f}{\pi_g}\right) \d \pi_f, \quad \text{with } \phi_{\alpha}^{n}(t) = \begin{cases}
			\frac{t^{\alpha} - 1}{\alpha (\alpha -1)} &\alpha\notin\{0, 1\} \\
			t \ln(t)  &\alpha = 1 \\
			- \ln(t) &\alpha = 0.
		\end{cases}
	\end{equation}
	Notice that for all $\alpha\in\R$, the function $\phi_\alpha:\R_{\geq0}\rightarrow\R_{\geq0}$ is positive, convex $\phi_\alpha''(t)>0$, and is minimial at $t=1$, meaning $\phi_\alpha'(1)=0$, while this is not the case for $\phi_{\alpha}^{n}$.
	The choices $\alpha\in\{0,1/2,1,2\}$ yield respectively to
	\begin{align}\label{eq:alpha_div_relation_to_other_div}
		\mathrm{D}_{1}\left( \pi_f || \pi_g\right) &= \KL\left( \pi_f || \pi_g \right), 	&
		\mathrm{D}_{0}\left( \pi_f || \pi_g\right) &= \KL\left( \pi_g || \pi_f\right),\\
		\mathrm{D}_{1/2}\left( \pi_f || \pi_g\right) &= 4 \Hell( \pi_f || \pi_g )^2,	 	&
		\mathrm{D}_{2}\left( \pi_f || \pi_g\right) &= \frac{1}{2}\chi^2(\pi_f || \pi_g),
	\end{align}
	where $\KL\left( \pi_f || \pi_g \right) = \int \ln(\frac{\pi_f}{\pi_g} )\d\pi_f $ is the Kullback--Leibler divergence, $\Hell( \pi_f || \pi_g ) = (\frac{1}{2}\int ( \sqrt{\pi_f}-\sqrt{\pi_g} )^2\d \bm x )^{1/2}$ the Hellinger distance, and $\chi^2(\pi_f || \pi_g) = \int( \frac{\pi_f}{\pi_g}-1 )^2\d \pi_g  $ the chi-square divergence.
	
	One practical advantage of using $\alpha$-divergences with \emph{unnormalized} densities is that there is no need to enforce the approximate density to integrate to one while learning it --- the resulting approximate density is being normalized in a second step. 
	In practice, one does not require knowing the normalizing constant of the target density, which is convenient \emph{e.g.} for Bayesian inverse problems where the posterior density is known up to a multiplicative constant.
	In addition, it is argued in \cite{felardos_designing_2023} that a naive discretization of the normalized KL divergence $\KL\left( \pi_f || \pi_g \right) = \mathrm{D}_{1}^{n}(\pi_f || \pi_g) = \int \ln(\frac{\pi_f}{\pi_g} )\d\pi_f $ might yield unstable estimate, whereas the discretization of $\mathrm{D}_{1}(f||g)$ as in~\eqref{eq:def_alpha_divergences} does not suffer from such instability because of the affine term $-t+1$, see the detailled discussion in~\cite[Appendix A]{nielsen_elementary_2020}.
	In Figure~\ref{fig:alpha_div_probabilities_and_measure} we visualize the importance of the affine term in the definition of $\phi_\alpha(t)$.
	Denoting by $Z_f=\int f( \bm x) \d \bm x$ and $Z_g=\int g( \bm x) \d \bm x$ the normalizing constant of $f$ and $g$, we show in Appendix~\ref{proof:measure_prob_bound_alpha_div} that
	\begin{align}\label{eq:AlphaDivNormalized}
		\AlphaDiv\left( f || g \right) = \frac{Z_{f}^{\alpha}}{Z_{g}^{\alpha - 1}} \mathrm{D}_{\alpha}^{n}\left( \pi_f || \pi_g \right) + Z_{g} \phi_{\alpha}\left(\frac{Z_f}{Z_g}\right) ,
	\end{align}
	holds for any unnormalized densities $f,g$.
	This relation suggests that for a density $f$, minimizing $g\mapsto \AlphaDiv\left( f || g \right)$ permits to control both $\AlphaDiv( \pi_f || \pi_g )$ and $\phi_{\alpha}(\tfrac{Z_f}{Z_g})$ so that, after normalizing $g$, the probability density $\pi_g$ yields a controlled approximation to $\pi_f$.
	\begin{figure}
		\begin{subfigure}{0.45\textwidth}
			\includegraphics[width=\textwidth]{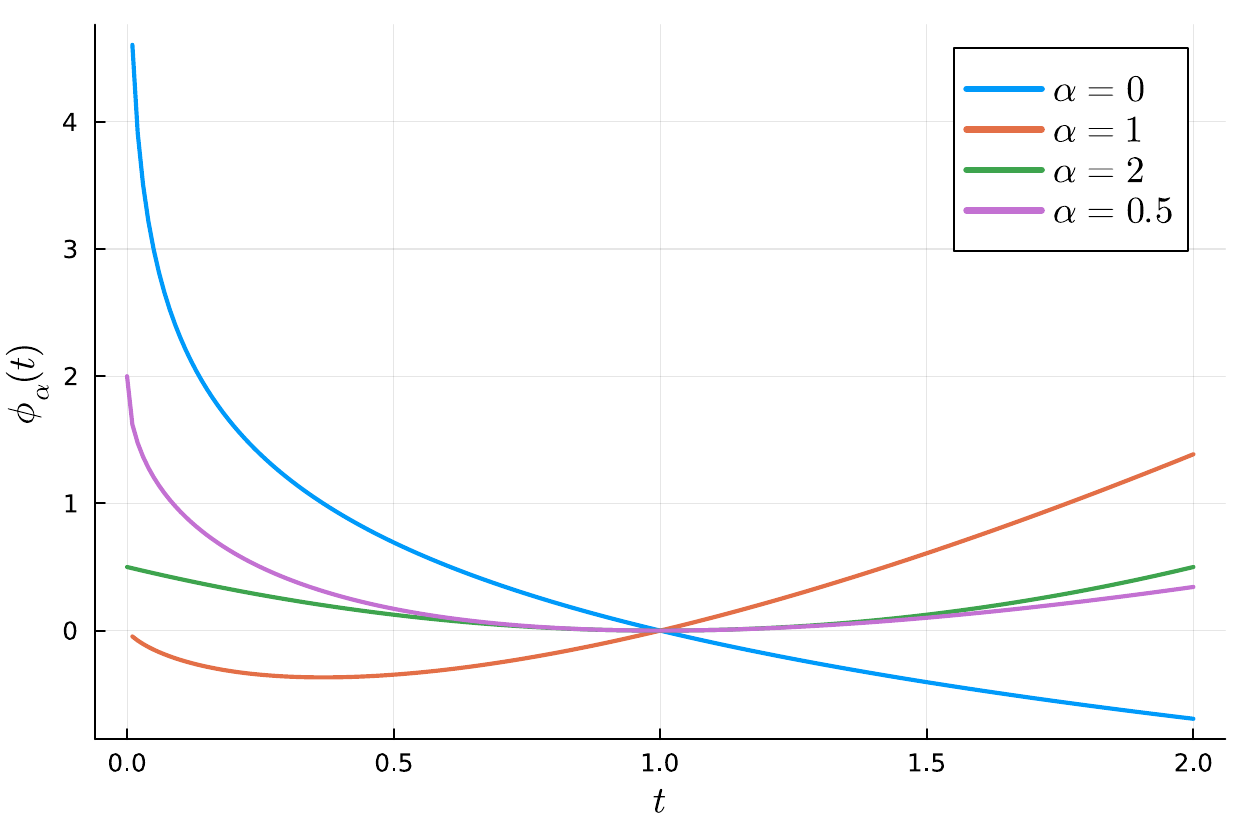}
			\caption{$\phi_{\alpha}^n$ for probability densities}
		\end{subfigure}
		\begin{subfigure}{0.45\textwidth}
			\includegraphics[width=\textwidth]{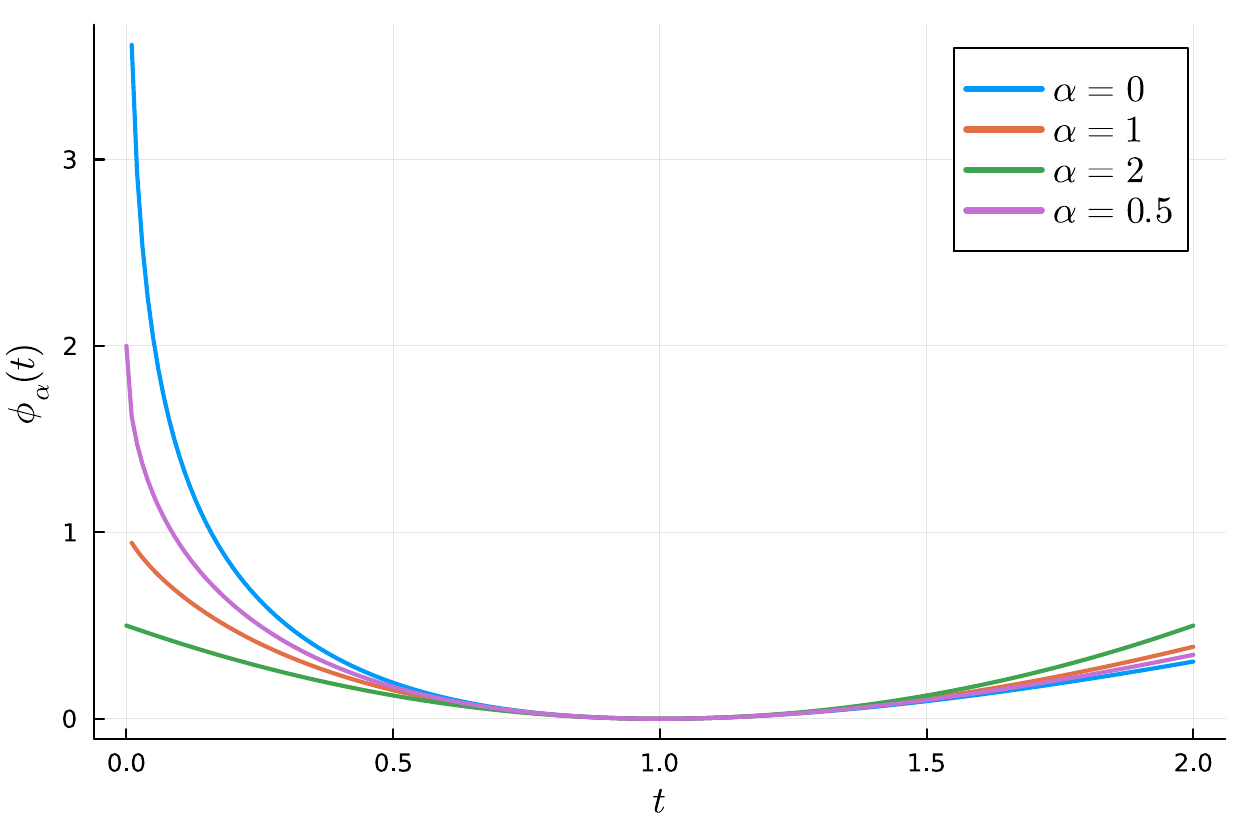}
			\caption{$\phi_{\alpha}$ for unnormalized densities}
		\end{subfigure}
		\caption{Left: functions $\phi_{\alpha}^{n}(t)$ as in Eq.~\eqref{eq:def_alpha_divergences_normalized} associated with $\alpha$-divergences for \emph{normalized} densities. Right: functions $\phi_{\alpha}(t)=\phi_{\alpha}^{n}(t)-\frac{t-1}{\alpha-1}$ as in \eqref{eq:def_alpha_divergences} associated with $\alpha$-divergences for \emph{unnormalized} densities.
		Contrarily to $\phi_{\alpha}^{n}$, the affine term $-\frac{t-1}{\alpha-1}$ preserves the convexity of $\phi_{\alpha}$ while ensuring $\phi_{\alpha}$ to admit a minimum at $t=1$.
		}\label{fig:alpha_div_probabilities_and_measure}
	\end{figure}
	According to \cite{minka_divergence_2005}, minimizing $g\mapsto \AlphaDiv\left(f || g\right)$ with $\alpha \leq 0$ forces $g(\bm x) \ll 1$ in regions where $f(\bm x) \ll 1$, thus avoiding \textit{false positives} (zero forcing property).
	Reciprocally, minimizing $g\mapsto \AlphaDiv\left(f || g\right)$ with $\alpha \gg 1$ forces $g(\bm x) \gg 1$ in regions where $f(\bm x) \gg 1$, which avoids \textit{false negatives} (zero avoiding property).
	This is coherent with the duality property
	\begin{align}\label{eq:DualityProperty}
		\AlphaDiv\left(f || g\right) = \mathrm{D}_{1-\alpha}\left(g || f\right),
	\end{align}
	which holds for any $\alpha\in\R$, see e.g.~\cite{amari_information_2016}.
	Another fundamental property of $\alpha$-divergences is the stability under transportation by a diffeomorphism $\mathcal{T}:\Xc\rightarrow\Xc$, meaning
	\begin{align}\label{eq:TransportProperty}
		\AlphaDiv\left( f || \mathcal{T}_\sharp g \right) &= \AlphaDiv\left( \mathcal{T}^\sharp f || g\right).
	\end{align}
	As detailed later in Section \ref{sec:SequentialTM}, this property permits us to reformulate problem \eqref{eq:variational_density_problem} as an approximation of $\mathcal{T}_{\ell-1}^\sharp \pibridge{\ell}$, the pullback of the $\ell$-th bridging density via the map $\mathcal{T}_{\ell-1}$.
	For the sake of completeness, the derivation of Equation \eqref{eq:TransportProperty} is given in Appendix~\ref{proof:pushforward_pullback_equiv_in_alpha_div}.
	
	In practice, we distinguish the two cases when either the target probability density $\pitar$ can be arbitrarily evaluated up to a multiplicative constant (learning from density), or when the target probability density is only known via samples from $\pitar$ (learning from data).
	The first case typically corresponds to Bayesian inverse problems, where the goal is to sample from the posterior density
	\begin{equation}\label{eq:deff}
		\pitar(\bm x)\propto f(\bm x) := \mathcal{L}(\bm x) \pi_0(\bm x).
	\end{equation}
	Here, $\pi_0$ denotes the prior density and $\mathcal{L}$ the likelihood function of a given data set conditioned on $\bm x$, typically $\mathcal{L}(\bm x)=\exp(-\frac{1}{2}\|y-u(\bm x)\|^2)$ for some forward model $u:\Xc\rightarrow\R^q$ and some observed data set $y\in\R^q$. 
	A different scenario is rare event estimation where $\mathcal{L}(\bm x) = 1_{u(\bm x)\geq p}$ is the indicator function (of level $p$) of the event $u(\bm X)\geq p $, $ \bm X\sim\pi_0$, which we need to compute the probability of.

	In the second case (learning from data), setting $\alpha=1$ yields the KL divergence $\mathrm{D}_1(\pi||g) = \int \ln(\pi)\d\pi -1-\mathcal{J}_\text{KL}(g)$, where $\mathcal{J}_\text{KL}(g)=\int\ln(g)\d\pi + \int g\d \bm x$.
	To estimate $\mathrm{D}_\alpha(f||g)$, one can use a Monte-Carlo estimation of the form of
	\begin{equation}\label{eq:DalphaHat}
		\widehat{\mathrm{D}}_\alpha(f||g) = \frac{1}{N} \sum_{i=1}^N
		\phi_{\alpha}\left(\frac{f(\bm X^{(i)})}{g(\bm X^{(i)})}\right) \frac{g(\bm X^{(i)})}{\rhoref(\bm X^{(i)})} ,
	\end{equation}
	where $\bm X^{(1)},\hdots, \bm X^{(N)}\sim\rhoref$ are independent samples drawn from a given reference measure $\rhoref$.
	On the other hand, in the case of given data, we assume that the data $\bm X^{(1)},\hdots, \bm X^{(N)}$ is independently distributed according to $\pitar$ and
	\begin{equation}\label{eq:KLHat}
		\widehat{\mathcal{J}}_\text{KL}(g) = - \frac{1}{N} \sum_{i=1}^N  \ln\left(g(\bm X^{(i)})\right) +  \int g\d \bm x.
	\end{equation}
	We discuss the choice of $\rhoref$ in Section~\ref{sec:Rosenblatt}.
	
	\begin{property}[Convexity of $\alpha$-divergence]
		The function $t\mapsto  t \phi_\alpha( u /t)$ is convex for any $\alpha\in\R$ and $u\in\R_{\geq0}$.
		As a consequence, the functions $g\mapsto \AlphaDiv(f||g)$, $g\mapsto \widehat{\mathrm{D}}_\alpha(f||g)$, and $g \mapsto \widehat{\mathcal{J}}_\text{KL}(g)$ are also convex.
	\end{property}

	\section{Sum-of-Squares densities}\label{sec:SoS_function_class}
	Sum-of-Squares (SoS) functions $g_A$ are functions of the type
	\begin{align}\label{eq:SoS_function}
		g_A(\bm x) = \left( \Phi(\bm x)^\top A \Phi(\bm x) \right) \rhoref(\bm x),
	\end{align}
	parameterized by a positive semidefinite matrix $A\succeq 0$ and where $\Phi(\bm x)=(\phi_1(\bm x),\hdots,\phi_m(\bm x))$ is a vector consisting of functions $\phi_i \in L^2_{\rhoref}(\Xc)$, where $\Xc\subseteq\R^d$ is the support of a reference density $\rhoref$.
	By construction, $g_A: \Xc \rightarrow \R_{\geq 0}$ is non negative and integrable.
	Indeed, denoting by $A=\sum_{i=1}^m \lambda_i u_iu_i^\top$ the eigenvalue decomposition of $A$ with $\lambda_i\geq0$ and $u_i\in\R^m$, the function $\bm x\mapsto \Phi(\bm x)^\top A \Phi(\bm x) = \sum_{i=1}^m\lambda_i ( u_i^\top \Phi(\bm x) )^2$ is the sum of $m$ squared functions, which explains the terminology SoS.
	Note that $g_A$ does not necessarily integrate to one. 
	We call SoS functions which integrate to one SoS \textit{densities} and introduce them in Section~\ref{sec:Rosenblatt}.
	Since the parametrization $A\mapsto g_A$ is linear, the function $A\mapsto \widehat{\mathrm{D}}_\alpha(f||g_A)$ as in \eqref{eq:DalphaHat} remains convex, as already noticed in~\cite{marteau-ferey_non-parametric_2020}.
	The resulting problem 
	\begin{equation}
		\argmin_{A\succeq0} \widehat{\mathrm{D}}_\alpha(f||g_A)
	\end{equation}
	is convex and can be solved by using semidefinite programming (SDP).
	Toolboxes to solve such problems computationally are for example \textit{JuMP.jl}~\cite{Lubin2023} and \textit{CVX}~\cite{cvx}.

	\begin{remark}
		While $A \succeq 0$ is sufficient to ensure the positivity of $g_A$, this condition can be relaxed depending on $\Xc$ and $\Phi$. For instance, for polynomial basis $\Phi$ and a semi-algebraic set $\Xc$, the condition $A \succeq 0$ can be replaced with a weaker condition e.g. based on Putinar's Positivstellensatz, see~\cite{lasserre_moments_2010, putinar_positive_1993}. This is, however, not considered in the present paper.
	\end{remark}

	\subsection{Orthonormal tensorized basis for Sum-of-Squares}\label{subsec:orthonormal_polynomial_feature_maps}

	Assuming  $\Xc = \Xc_1 \times \dots \times \Xc_d$ is a product space and $\rhoref(\bm x) = \rhoref_1(x_1)\hdots\rhoref_d(x_d)$ a product measure, we can construct $\Phi: \Xc \rightarrow \R^m$ by tensorizing univariate orthonormal functions as follow. Let $\left\{\phi^{k}_i\right\}_{i=1}^{\infty}$ be a set of orthonormal functions on $L^2_{\rhoref_k}(\Xc_k)$, $1\leq k \leq d$ and, given a multi-index $\alpha\in\N^d$, consider the following tensorization
	$$
	\phi_\alpha(\bm x) = \phi_{\alpha_1}^1(x_1) \hdots \phi_{\alpha_d}^d(x_d) .
	$$
	Then $\{\phi_\alpha\}_{\alpha\in\N^d}$ forms an orthonormal basis of $L^2_{\rhoref}(\Xc)$.
	In our work, we use multivariate polynomials to define $\phi_\alpha$ although, in principle, any set of orthonormal functions on $L^2_{\mu_k}(\Xc_k)$ can be used (trigonometric functions, wavelets,...).
	In the following, we give examples of orthogonal bases, including Legendre polynomial on $[-1,1]$, Hermite polynomials on $\R$, and transformed Legendre polynomials on generic domains.
	\begin{example}[Orthogonal polynomials]\label{example:legendre_feature_map}
		A property of any orthogonal polynomials basis $\{P_n\}_{n\geq0}$ in $L^2_{\rhoref}$ is that they can be computed efficiently via a recurrence relation, given by 
		\begin{align}
			P_n(x) = \left(a_n x + b_n\right) P_{n-1}(x) + c_n P_{n-2}(x),
		\end{align}
		where the series $a_n, b_n$, and $c_n$ depend on $\rhoref$~\cite{koornwinder_orthogonal_2013}.
		For the uniform density $\rhoref = \mathcal{U}([-1,1])$ on the bounded domain $[-1,1]$, we obtain the Legendre polynomials with
		\begin{align}
			a_n = \frac{2n+1}{n+1} \qquad b_n = 0 \qquad c_n = \frac{n}{n+1}.
		\end{align}
		The normalized Legendre polynomials $L_n$ are obtained by
		\begin{align}
			L_n(x) = \sqrt{n + \frac{1}{2}} P_n(x).
		\end{align}
		For the Gaussian measure $\rhoref=\mathcal{N}(0,1)$, on the unbounded domain $\R$, we obtain the probabilist's Hermite polynomials $H_n(x)$ with the recurrence relation
		\begin{align}
			a_{n} = 1 \qquad b_n = 0 \qquad c_n = -n. 
		\end{align}
		While Hermite polynomials could be directly used on indefinite domains, for numerical reasons, we prefer to use transformed Legendre polynomials, which we explain in the next example.
	\end{example}
	\begin{example}[Transformed Legendre polynomials~\cite{shen_approximations_2014}]\label{exam:transformed_legendre}
		Another way to obtain an orthogonal univariate basis of $L^2_{\rhoref}(\mathcal{X})$ is to consider Legendre polynomials $L_n$ on $L^2_{\mu}([-1,1])$ with $\mu=\mathcal{U}([-1,1])$ and the map $\mathcal{R}$ such that $\mathcal{R}_{\sharp}\mu = \rhoref$.
		From the normalized Legendre polynomials $\{L_i\}_{i\geq 0}$ we can create functions $\{\phi_i\}_{i\geq 0}$ given by
		\begin{align}
			\phi_i(x) = L_i( \mathcal{R}^{-1}(x) ),
		\end{align}
		which are orthonormal in $L^2_{\rhoref}(\Xc)$ since $\int \phi_i(x) \phi_j(x) \rhoref(x) \d x= \int L_i(x') L_j(x') \mu(x') \d x' = \delta_{i,j}$.
		We show two transformations from~\cite{cui_self-reinforced_2023} and~\cite{shen_approximations_2014} to transform Legendre polynomials to orthogonal functions on $\R$ in Table~\ref{table:transforms_OMF}.
		\begin{table}[htb]
			\caption{Two transforms from~\cite{cui_self-reinforced_2023} to map Legendre polynomials from the domain $[-1, 1]$ to $\R$.}\label{table:transforms_OMF}
			\centering
			\begin{tabular}{c|c|c|c|c}
				\hline
				& $\mathcal{R}(x)$ & $\left|\nabla \mathcal{R}(x)\right|$ & $\mathcal{R}^{-1}(x)$ & $ \left|\nabla \mathcal{R}^{-1}(x)\right| $ \\
				\hline
				logarithmic & $\tanh(x)$ & $1- \tanh(x)^2$ & $\frac{1}{2} \ln\left(\frac{1+x}{1-x}\right)$ & $\frac{1}{1-x^2}$\\
				\hline
				algebraic &  $\frac{x}{\sqrt{1+x^2}}$ & $\frac{1}{ \left(1+x^2\right)^{3/2}}$ & $\frac{x}{\sqrt{1 - x^2}}$ & $\frac{1}{\left(1-x^2\right)^{3/2}}$\\
				\hline
			\end{tabular}
		\end{table}
	\end{example}
	In practice, we work with a subset of indices $\mathcal{K}\subset\N^d$ with finite cardinality $|\mathcal{K}|=m$.
	In our implementation we use
	\begin{align}\label{eq:setK}
		\mathcal{K} = \left\{ \alpha \in \N^{d} : \sum_{i=1}^d \alpha_i\leq p \right\} ,
		\qquad
		m = \binom{p+d}{d} ,
	\end{align}
	which yields the set of polynomials of total degree bounded by $p$.
	In order to vectorize  $\{\phi_\alpha(\bm x)\}_{\alpha\in\mathcal{K}}$ in a vector $\Phi(\bm x)\in\R^m$, we consider a bijective map
	$$
	\sigma:\mathcal{K} \rightarrow \{1,\hdots,m\} ,
	$$
	which lists the elements of $\mathcal{K}$ in the lexicographical order.
	Other sets are possible, \emph{e.g.} downward closed sets which allow for adaptivity, see~\cite{cohen_multivariate_2018}.
	We then define the feature map $\Phi:\R^d\rightarrow\R^m$ such that $\Phi_{i}(\bm x) = \phi_{\sigma^{-1}(i)}(\bm x)$.

	\subsection{Integration of SoS functions}\label{sec:operations_on_SoS_functions}
	Let us rewrite the SoS functions from Eq.~\eqref{eq:SoS_function} to the form $g_A(\bm x)= \trace\left(A \Phi(\bm x)\Phi(\bm x)^\top \rhoref(\bm x)\right)$.
	This allows to apply linear operators $\mathcal L$ acting on functions of $\bm x$, such as integration and differentiation, to $\Phi(\bm x)\Phi(\bm x)^\top\rhoref(\bm x)$ as follow (also see~\cite{marteau-ferey_non-parametric_2020})
	\begin{align*}
		\mathcal L g_A(\bm x)
		= \trace\left(A W (\bm x) \right) ,
		\quad \text{where }W(\bm x) = \mathcal L (\Phi \Phi^\top \rhoref)(\bm x).
	\end{align*}
	As shown in the following proposition, a SoS function remains SoS after integrating over one variable.
	\begin{proposition}[Integration over one variable]\label{propo:SoS_marginalization}
		Let $\mathcal{L}$ be the integration operator over the variable $x_\ell$ with $\ell\in\{1,\hdots,d\}$ defined by
		$$\mathcal{L}g(\bm x) = \int_{\Xc_{\ell}} g(\bm x_{-\ell}, x'_\ell) \d x'_\ell .$$
		Let $g_A$ be a SoS function as in \eqref{eq:SoS_function} with $\rhoref(\bm x)=\prod_{i=1}^d \rhoref_i(x_i)$ and $(\Phi(\bm x))_{\sigma(\alpha)}=\prod_{i=1}^d \phi_{\alpha_{i}}^i(x_i)$, where $\{\phi_{1}^\ell,\phi_{2}^\ell,\hdots\}$ are orthonormal functions in $L^2_{\rhoref_\ell}(\Xc_\ell)$.
		Then we have
		\begin{equation}
			\mathcal{L}g_A(\bm x)
			= \left( \Phi_{-\ell}(\bm x_{-\ell})^\top A_{-\ell} \Phi_{-\ell}(\bm x_{-\ell}) \right) \rhoref(\bm x_{-\ell}),
		\end{equation}
		where $A_{-\ell}$ and $\Phi_{-\ell}(\bm x_{-\ell})$ are respectively the PSD matrix and the feature map defined as follow.
		Denote by $\mathcal{K}_{-\ell} = \{ \alpha_{-\ell} : \alpha \in \mathcal{K}\}$ the multi-index obtained by removing the $\ell$-th components of the elements of $\mathcal{K}$, and by $\sigma_{-\ell}:\mathcal{K}_{-\ell}\rightarrow\{1,\hdots,|\mathcal{K}_{-\ell}|\}$ the lexicographical order for $\mathcal{K}_{-\ell}$.
		The feature map $\Phi_{-\ell}$ is defined by $(\Phi_{-\ell}(\bm x_\ell))_{\sigma_{-\ell}(\alpha_{-\ell})} =  \prod_{i\neq\ell} \phi_{\alpha_{i}}^i(x_i)$.
		The PSD matrix $A_{-\ell}$ is defined by
		\begin{align}
			A_{-\ell}=P \left(A \odot M\right) P^\top,
		\end{align}
		where $\odot$ denotes the Hadamard product and $P\in\R^{|\mathcal{K}_{-\ell}| \times m}$ and $M \in\R^{m \times m}$ are given by
		\begin{align}\label{eq:defP}
			M_{\sigma( \alpha) \sigma( \beta)} = \delta_{\alpha_\ell,\beta_\ell}
			\quad\text{and}\quad
			P_{\sigma_{-\ell}(\alpha_{-\ell})\sigma(\beta)} = \prod_{k\neq \ell}\delta_{\alpha_k , \beta_k} ,
		\end{align}
		for all $\alpha,\beta \in \mathcal{K}$.
	\end{proposition}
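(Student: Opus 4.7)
The plan is to compute the integral directly by exploiting the tensor product structure of both $\Phi$ and $\rhoref$, reduce the integration over $x_\ell$ to an orthogonality relation, and then recognize the surviving sum as a quadratic form in $\Phi_{-\ell}$. First, I would expand the SoS function in coordinates,
\begin{equation*}
g_A(\bm x) = \sum_{\alpha,\beta\in\mathcal{K}} A_{\sigma(\alpha)\sigma(\beta)} \prod_{i=1}^d \phi^i_{\alpha_i}(x_i)\,\phi^i_{\beta_i}(x_i)\,\rhoref_i(x_i),
\end{equation*}
and integrate termwise with respect to $x_\ell$. Since only the $i=\ell$ factor depends on $x_\ell$, and $\{\phi_j^\ell\}_{j\geq 1}$ is orthonormal in $L^2_{\rhoref_\ell}(\Xc_\ell)$, the integral over $x_\ell$ equals $\delta_{\alpha_\ell,\beta_\ell}$. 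The remaining factors assemble into $\rhoref(\bm x_{-\ell})\prod_{i\neq\ell}\phi_{\alpha_i}^i(x_i)\phi_{\beta_i}^i(x_i)$.

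Next, I would rewrite the Kronecker factor $\delta_{\alpha_\ell,\beta_\ell}$ as an entrywise multiplication by the matrix $M$ of \eqref{eq:defP}, so that the coefficient of $\prod_{i\neq\ell}\phi_{\alpha_i}^i(x_i)\phi_{\beta_i}^i(x_i)$ becomes $(A\odot M)_{\sigma(\alpha)\sigma(\beta)}$. The subtle point is that several multi-indices $\alpha\in\mathcal{K}$ may share the same reduced multi-index $a=\alpha_{-\ell}\in\mathcal{K}_{-\ell}$, so when reindexing by $\mathcal{K}_{-\ell}$ we must sum over all such $\alpha$. This bookkeeping is exactly what the projection matrix $P$ in \eqref{eq:defP} performs: for $s=\sigma_{-\ell}(a)$ and $t=\sigma_{-\ell}(b)$,
\begin{equation*}
\bigl(P(A\odot M)P^\top\bigr)_{st} = \sum_{c\,:\,(a;c),(b;c)\in\mathcal{K}} A_{\sigma(a;c)\sigma(b;c)},
\end{equation*}
which is precisely the coefficient of $\Phi_{-\ell}(\bm x_{-\ell})_s\Phi_{-\ell}(\bm x_{-\ell})_t$ produced by the direct calculation. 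This identification yields $\mathcal{L}g_A(\bm x_{-\ell})=\rhoref(\bm x_{-\ell})\,\Phi_{-\ell}(\bm x_{-\ell})^\top A_{-\ell}\Phi_{-\ell}(\bm x_{-\ell})$ with $A_{-\ell}=P(A\odot M)P^\top$.

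Finally, I would verify that $A_{-\ell}\succeq 0$ so that $\mathcal{L}g_A$ is again a bona fide SoS function. Since $M$ is, up to permutation of indices, block-diagonal with all-ones blocks (one block per value of $\alpha_\ell$), it is a direct sum of rank-one PSD matrices and hence $M\succeq 0$. By the Schur product theorem, $A\odot M\succeq 0$, and conjugation by $P$ preserves positive semidefiniteness, giving $A_{-\ell}\succeq 0$.

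The routine part is the expansion and the use of one-dimensional orthonormality; the main obstacle is the combinatorial bookkeeping relating the index set $\mathcal{K}$ to its reduction $\mathcal{K}_{-\ell}$, which is why the auxiliary matrices $P$ and $M$ are introduced. Once the action of $P$ on $(A\odot M)$ is interpreted as \emph{summing over all lifts of a reduced multi-index back into $\mathcal{K}$}, both the equality of coefficients and the PSD property follow cleanly.
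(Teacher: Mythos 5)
Your proof is correct and follows essentially the same route as the paper: expand $g_A$ in coordinates, use the tensor-product structure of $\rhoref$ and $\Phi$ to reduce the $x_\ell$-integral to the orthonormality relation $\int\phi^\ell_{\alpha_\ell}\phi^\ell_{\beta_\ell}\,\rhoref_\ell\,\d x_\ell=\delta_{\alpha_\ell,\beta_\ell}=M_{\sigma(\alpha)\sigma(\beta)}$, and then perform the combinatorial reindexing from $\mathcal{K}$ down to $\mathcal{K}_{-\ell}$. The only organizational difference is that the paper introduces the auxiliary size-$m$ vector $\hat{\Phi}(\bm x_{-\ell})$ with duplicated entries, observes $\hat{\Phi}=P^\top\Phi_{-\ell}$, and writes the result as $\hat{\Phi}^\top(A\odot M)\hat{\Phi}$, whereas you directly identify the coefficient of $\Phi_{-\ell,s}\Phi_{-\ell,t}$ as the sum over lifts $c$ of the reduced indices and recognize it as $(P(A\odot M)P^\top)_{st}$ — these are the same computation written in two ways. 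A nice addition in your version is the explicit verification that $A_{-\ell}\succeq 0$ (via the Schur product theorem and conjugation by $P$); the paper asserts this in the statement but omits it from the proof.
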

	\begin{proof}
		See Appendix~\ref{proof:marginalization_SoS}
	\end{proof}

	The next corollary, given without proof, shows that Proposition \ref{propo:SoS_marginalization} permits to integrate over severall variables.
	
	\begin{corollary}[Integration over severall variables]\label{corrolary:margin_all_d}
		Proposition \ref{propo:SoS_marginalization} can be applied iteratively to integrate over an arbitrary set of variables indexed by $\ell\subset\{1,\hdots d\}$, meaning $\mathcal{L}g_A(\bm x) = \int g_A(\bm x_{-\ell},\bm x_\ell) \d\rhoref_{\ell}$, where $\bm x_\ell = (x_i)_{i\in\ell}$.
		In particular, the integration over all the variables $\mathcal{L}g_A = \int g_A(\bm x) \d\rhoref(\bm x)$ is given by
		\begin{equation}\label{eq:intSoS}
			\int_{\Xc} \Phi (\bm x)^\top A\Phi (\bm x) \d\rhoref(\bm x) = \trace\left(A\right).
		\end{equation}
	\end{corollary}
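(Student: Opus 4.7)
The plan splits into two parts mirroring the two claims of the corollary. First I would establish the general marginalization assertion by induction on $|\ell|$. The base case $|\ell|=1$ is exactly Proposition~\ref{propo:SoS_marginalization}. For the inductive step, the key observation is that the output of Proposition~\ref{propo:SoS_marginalization} is itself a SoS function of the same form on the reduced product space $\Xc_{-\ell}=\prod_{i\neq \ell}\Xc_i$, with reference density $\prod_{i\neq \ell}\rhoref_i$ and a tensorized feature map $\Phi_{-\ell}$ built from the univariate orthonormal families $\{\phi_i^k\}$ for $k\neq \ell$. Since these data satisfy the hypotheses of Proposition~\ref{propo:SoS_marginalization} verbatim, the proposition can be re-applied to any remaining variable, and the induction closes cleanly.

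For the trace formula~\eqref{eq:intSoS}, the most direct route bypasses iteration and uses orthonormality. Expanding the quadratic form,
\begin{equation*}
\int_{\Xc} \Phi(\bm x)^\top A \Phi(\bm x)\, \rhoref(\bm x)\, \d\bm x = \sum_{i,j=1}^m A_{ij} \int_{\Xc} \Phi_i(\bm x)\Phi_j(\bm x)\, \rhoref(\bm x)\, \d\bm x .
\end{equation*}
The inner integrals equal $\delta_{ij}$ because the tensorized basis $\{\phi_\alpha\}_{\alpha\in\mathcal{K}}$ is orthonormal in $L^2_{\rhoref}(\Xc)$, which in turn follows from the product structure of $\rhoref$ combined with the orthonormality of each univariate family $\{\phi_i^k\}$. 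Summing yields $\sum_i A_{ii}=\trace(A)$.

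I do not anticipate any genuine obstacle. The only bookkeeping subtlety is verifying that the matrix $A_{-\ell}=P(A\odot M)P^\top$ produced by Proposition~\ref{propo:SoS_marginalization} is indeed the correct representation relative to the natural tensorized feature map $\Phi_{-\ell}$ on the reduced variables, so that the induction is well-posed. This reduces to a direct check on the multi-index construction~\eqref{eq:defP}. As a consistency check, iterating Proposition~\ref{propo:SoS_marginalization} all $d$ times must collapse the function to the scalar $\trace(A)$, which would cross-validate the inductive route against the direct orthonormality argument and confirm the $P$ and $M$ definitions in~\eqref{eq:defP}.
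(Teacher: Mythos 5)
Your proof is correct; the paper states Corollary~\ref{corrolary:margin_all_d} explicitly without proof, so there is no in-text argument to compare against, but your two-part argument supplies exactly what is needed. The induction over the index set is well-posed because each application of Proposition~\ref{propo:SoS_marginalization} returns a function of the same hypothesized form: a SoS function on the reduced product space $\prod_{i\neq\ell}\Xc_i$, with product reference density $\prod_{i\neq\ell}\rhoref_i$, tensorized orthonormal feature map $\Phi_{-\ell}$ indexed by $\mathcal{K}_{-\ell}$, and PSD coefficient matrix $A_{-\ell}=P(A\odot M)P^\top$ (indeed $M$ with $M_{\sigma(\alpha)\sigma(\beta)}=\delta_{\alpha_\ell,\beta_\ell}$ is a Gram matrix, so $A\odot M\succeq 0$ by the Schur product theorem and hence $A_{-\ell}\succeq 0$, which is the positivity you implicitly need to re-enter the proposition). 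For the trace identity, your direct orthonormality computation is the cleanest route: $\int_\Xc\Phi_i\Phi_j\,\d\rhoref=\delta_{ij}$ because the vectorized tensor basis is orthonormal in $L^2_{\rhoref}(\Xc)$ and $\sigma$ is a bijection, so the quadratic form integrates to $\sum_i A_{ii}=\trace(A)$. Your remark that iterating the proposition all $d$ times must also collapse to $\trace(A)$ is a useful consistency check on the definitions of $P$ and $M$ in~\eqref{eq:defP}, but is not needed for the argument to close.
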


	\begin{remark}
		For the fully tensorized set $\mathcal{K} = \mathcal{K}_1 \times\hdots\times \mathcal{K}_d$ with $\mathcal{K}_k = \{1,\hdots, r_k\}$ the matrices $M$ and $P$ as in \eqref{eq:defP} are
		\begin{align*}
			P^\top &= I_{r_1} \otimes \dots \otimes I_{r_{\ell-1}}
			\otimes \bm 1_{r_\ell} \otimes
			I_{r_{\ell+1}} \otimes \dots \otimes I_{r_{d}}, \\
			M &= \bm 1_{r_1\times r_1} \otimes \dots\otimes \bm 1_{r_{\ell-1}\times r_{\ell-1}}
			\otimes I_{r_\ell} \otimes
			\bm 1_{r_{\ell+1}\times r_{\ell+1}} \otimes \dots\otimes \bm 1_{r_\ell\times r_\ell} ,
		\end{align*}
		where $\bm 1_{r_\ell}\in\R^{r_\ell}$ and $\bm 1_{r_\ell}\in\R^{r_\ell\times r_\ell}$ are respectively the vector and the matrix full with ones and $I_{r}$ identity matrices of size $r$.
	\end{remark}
	
	\begin{remark}[Non orthonormal basis]
		If the basis $\{\phi_1^\ell,\phi_2^\ell,\hdots \}$  is not orthonormal, then $M$ from Proposition \ref{propo:SoS_marginalization} is replaced with
		\begin{align*}
			M_{\sigma(\bm i) \sigma(\bm j)} = \int_{\Xc_l} \phi_{i_l}(x_l) \phi_{j_l}(x_l) \d\rhoref_l(x_l).
		\end{align*}
	\end{remark}

	\subsection{SoS densities and Knothe--Rosenblatt map}\label{sec:Rosenblatt}
	
	Consider the normalized SoS function, which we refer to as SoS density, defined by
	\begin{equation}\label{eq:SoSprobaDensity}
		\pi_A(\bm x) = \frac{ \Phi (\bm x)^\top A\Phi (\bm x) }{\trace(A)} \rhoref(\bm x).
	\end{equation}
	By \eqref{eq:intSoS}, $\pi_A$ integrates to one and is a well defined probability density function.
	Next, we show how to compute the Knothe--Rosenblatt (KR) map $\mathcal{Q}_A$ which is is the unique triangular map (up to variable ordering) such that $(\mathcal{Q}_A)_\sharp\rhoref = \pi_A$, see~\cite{rosenblatt_remarks_1952}.
	Let us factorize $\pi_A$ as the product of its conditional marginals
	\begin{align}
		\pi_A(x) = \pi_A(x_1) \pi_A(x_2 | x_1) \dots \pi_A(x_n | x_1, \dots, x_{n-1}),
	\end{align}
	where $\pi_A(x_i | x_1, \dots, x_{i-1})$ is the conditional marginal density given by
	\begin{align}
		\pi_A(x_i | x_1, \dots, x_{i-1}) &= \frac{\pi_A(x_1, \dots, x_{i})}{\pi_A(x_1, \dots, x_{i-1})}, \\
		\pi_A(x_1, \dots, x_{i}) &= \int \pi_A(x_1,\hdots,x_i, x_{i+1}',\hdots,x_d') \d x_{i+1}' \hdots \d x_d' .
	\end{align}
	We also denote by $\Pi_A(x_i | x_1, \dots, x_{i-1})$ the cumulative distribution function (CDF) of the $i$-th conditional of $\pi_A$, given by
	\begin{align}\label{eq:CDF}
		\Pi_A(x_i | x_1, \dots, x_{i-1}) =
		\int_{-\infty}^{x_i} \pi_A(x_i' | x_1, \dots, x_{i-1}) \d x'_i.
	\end{align}
	Let $\overline{\mathcal{Q}}_A: [0,1]^d\rightarrow\Xc$ be the triangular map defined by recursion as follow
	\begin{align}\label{eq:IRT}
		\overline{\mathcal Q}_A(\xi_1,\hdots,\xi_d) =
		\begin{pmatrix}
			x_1\\ x_2\\ \vdots \\ x_d
		\end{pmatrix}
		=
		\left(\begin{array}{l}
			\Pi_A^{-1}( \xi_1  )\\  \Pi_A^{-1}(\xi_2|x_1) \\ \vdots \\ \Pi_A^{-1}(\xi_d|x_1,\hdots,x_{d-1})
		\end{array}\right) ,
	\end{align}
	where $\Pi_A^{-1}( \cdot |x_1,\hdots,x_{i-1})$ is the reciprocal function of the monotone function $\Pi_A( \cdot |x_1,\hdots,x_{i-1})$.
	By construction, the map $\overline{\mathcal Q}_A$ pushes forward the uniform measure $\mu = \mathcal{U}([0,1]^d)$ to $\pi_A$, meaning $(\overline{\mathcal{Q}}_A)_\sharp \mu  = \pi_A $.
	Let $\overline{\mathcal{Q}}_0$ be the KR map of $\rhoref$ such that $(\overline{\mathcal{Q}}_0)_\sharp \mu = \rhoref $. Finally, we let $\mathcal{Q}_A:\Xc\mapsto \Xc$ be the KR map of $\pi_A$ such that
	\begin{align}\label{eq:defQ}
		\left(\mathcal{Q}_A\right)_\sharp \rhoref  = \pi_A ,
		\quad\text{where}\quad
		\mathcal{Q}_A(\bm x) = \overline{\mathcal{Q}}_A \circ\overline{\mathcal{Q}}_0^{-1}(\bm x) .
	\end{align}
	Hence, we can define the set $\mathcal{M}$ of the variational problem in Eq.~\eqref{eq:variational_density_problem} as KR maps parametrized with $A$, given by
	\begin{equation}
		\mathcal{M}_{\text{SoS}} = \{ \mathcal{Q}_A:\Xc\rightarrow\Xc \text{ as in \eqref{eq:defQ}, where } A\succeq0\}.
	\end{equation}
	The procedure of constructing $\overline{\mathcal{Q}}_A$ (and thus $\mathcal{Q}_A$) is summarized in Figure~\ref{fig:diagram_RT_construction}.
	By Proposition \ref{propo:SoS_marginalization}, all the marginals are readily computable and, by using an orthonormal polynomial basis, all the antiderivatives are computable in closed form.
	
	\begin{remark}[$\mathcal{M}_{\text{SoS}}$ contains the identity map]\label{rmk:SoSconstainsID}
		Because $\rhoref$ is a product measure, $\overline{\mathcal{Q}}_0^{-1}(\bm x)$ is a diagonal map with $(\overline{\mathcal{Q}}_{0}^{-1}(\bm x) )_{i} = \int_{-\infty}^{x_i} \rhoref_{i}(x_i')\d x_i'$ being the CDF of $\rhoref_{i}$. In addition, if the basis $\Phi$ contains the constant function, meaning $u_0^\top\Phi(\bm x) = 1$ for some $u_0\in\R^m$, then $\overline{\mathcal{Q}}_0 = \overline{\mathcal{Q}}_{A_0}$ with $A_0=u_0u_0^\top$. Under this condition, the set $\mathcal{M}_{\text{SoS}}$ contains the identify map $\overline{\mathcal{Q}}_{A_0} \circ \overline{\mathcal{Q}}_0^{-1} = id$, hence,
		\begin{align}\label{eq:SoSconstainsID}
			\min_{ \mathcal{Q} \in\mathcal{M}_{\text{SoS}} } \mathrm{D}( \pi  || \mathcal{Q} _\sharp \rhoref)
			\overset{\mathcal{Q} = id}{\leq}
			\mathrm{D}( \pi || \rhoref).
		\end{align}
	\end{remark}

	\begin{figure}
		\includegraphics[width=\textwidth]{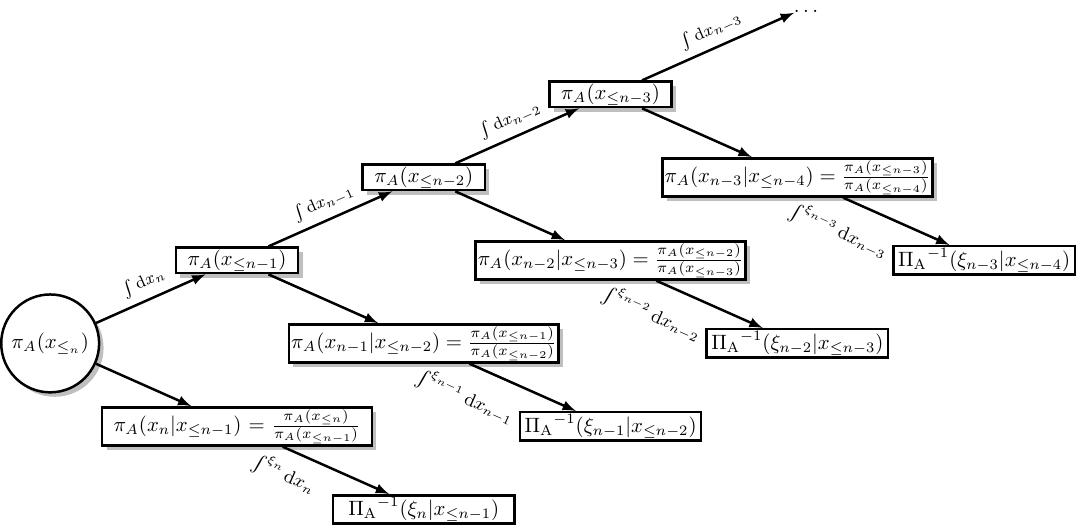}
		\caption{Visualization of the construction of $\overline{\mathcal{Q}}_A$ from a given density $\pi_A$. First, one creates the needed marginalization of $\pi_A$ and the conditionals $\pi_A(x_i|x_{1}, \dots, x_{n-1}) = \pi(x_i | x_{\leq i-1})$. Then, the CDFs are computed by calculating the antiderivatives, which gives access to the RT.
			The inverse RT is constructed by inversion of the CDFs according to formula \eqref{eq:IRT}. 
		}
		\label{fig:diagram_RT_construction}
	\end{figure}

	\begin{example}[Defining $\rhoref$ on indefinite domains]\label{example:reference_map_indefinite_domain}
		Assume $\mathrm{supp}(\pi) = \R^d$.
		The mappings from Example~\ref{exam:transformed_legendre} can be modified (by an affine change in coordinates from $[-1, 1]$ to $[0, 1]$) to satisfy $(\overline{\mathcal{Q}}_0)_\sharp \mu = \rhoref$.
	\end{example}
	
	\begin{remark}[Mapped basis functions on $\mu$]
		Assume a basis of functions orthonormal in $L^2_{\mu}([0, 1]^d)$ which are mapped to $\Xc$, in the manner of Example~\ref{exam:transformed_legendre}.
		Then we can take $\overline{\mathcal{Q}}_0 = \mathcal{R}$.
		The map $\overline{\mathcal Q}_A$ can be decomposed to $\overline{\mathcal Q}_A = \mathcal{R} \circ \widetilde{\mathcal Q}_A$ where $\widetilde{\mathcal Q}_A : [0, 1]^d \rightarrow [0, 1]^d$. Furthermore,
		\begin{align}
			\mathcal{Q}_A = \mathcal{R} \circ \widetilde{\mathcal Q}_A \circ \mathcal{R}^{-1}.
		\end{align}
		Composing such maps can be done efficiently using $\mathcal{Q}_{A_1}\circ\hdots\circ \mathcal{Q}_{A_L} = \mathcal{R} \circ (\widetilde{\mathcal{Q}}_{A_1}\circ\hdots\circ\widetilde{\mathcal{Q}}_{A_L})\circ  \mathcal{R}^{-1}$.
	\end{remark}
	
	\subsection{Conditional SoS maps}\label{subsec:conditional_densities}
	The triangular structure allows for conditional density estimation, for example, see~\cite{cui_scalable_2023, marzouk_introduction_2016, baptista2023representation, hosseini2023conditional}.
	Consider a target joint density $\pi(\bm x, \bm y)$ from which we want to sample and evaluate the conditional density $\pi(\bm x|\bm y)$.
	Assuming we have an approximation $\widetilde\pi(\bm x, \bm y)$ of $\pi(\bm x, \bm y)$ given by $\widetilde\pi(\bm x, \bm y) = \mathcal{Q}_\sharp \rhoref (\bm x, \bm y)$ where $\mathcal{Q}$ is a block triangular map as in
	\begin{align}\label{eq:triangular}
		\mathcal{Q}(\bm \xi_x, \bm \xi_y) = \left(\begin{array}{l}
				Q_{1}(\bm \xi_y) \\
				Q_{2}(\bm \xi_y, \bm \xi_x)
			\end{array}\right) ,
	\end{align}
	and where $\rhoref(\bm \xi_x,\bm \xi_y) = \rho_x(\bm \xi_x) \rho_y(\bm \xi_y)$ is a product reference distribution.
	Then, the so-called \emph{stochastic map filter} \cite{spantini2022coupling}, defined by
	\begin{align}
		Q_{x|y}(\bm \xi_x|\bm y) = Q_{2}(Q_1^{-1}(\bm y), \bm \xi_x) ,
	\end{align}
	is such that $Q_{x|y}(\cdot|\bm y)_\sharp\rho_x(\bm x) = \widetilde\pi(\bm x | \bm y)$.
	Notice that the triangular structure \eqref{eq:triangular} is preserved by composition of maps with the same triangular structure.
	It is also worth to note that when using the KL divergence ($\alpha=1$), the relation $\E\left[\KL\left( \pi(\cdot|\bm Y) || \widetilde\pi(\cdot|\bm Y) \right) \right] \leq \KL\left( \pi || \widetilde\pi \right)$ holds for $\bm Y\sim\pi(\bm y)$. This means that controlling the KL divergence between the joint densities permits to control the KL divergence between the conditionals in expectation over $\bm Y$.

	\section{Sequential transport maps using $\alpha$-divergence}\label{sec:SequentialTM}

	We now build transport maps sequentially as in \eqref{eq:variational_density_problem} by using an arbitrary sequence of bridging densities $\{\pibridge{\ell}\}_{\ell=1}^L$.
	When the target density $\pi(\bm x)\propto \mathcal{L}(\bm x) \pi_0(\bm x)$ as in \eqref{eq:deff} can be evaluated up to a normalizing constant, a natural choice of bridging densities are \emph{tempered bridging densities},
	\begin{equation}\label{eq:tempered}
	 \pibridge{\ell}(\bm x) =  \mathcal{L}(\bm x)^{\beta_\ell} \pi_0(\bm x) ,
	\end{equation}
	with $0 < \beta_1 \leq \dots \leq \beta_L = 1$ arbitrarily fixed so that $\pi^{(L)}\propto\pi$.
	Other choices are also possible depending on the application, see for instance \cite{del_moral_sequential_2006} for filtering problems or~\cite{cui2024deep} for rare event estimation problems.
	When only samples $\bm X^{(1)},\hdots,\bm X^{(N)}$ from $\pi$ are given, a natural choice of bridging densities are the \emph{diffusion-based bridging densities} of the form
	\begin{equation}\label{eq:density_diffusion}
		\pibridge{\ell}(\bm x) = \int \kappa_{t_\ell}(\bm x,\bm y) \pitar(\bm y)\d \bm y ,
	\end{equation}
	where $\kappa_t(\bm x,\bm y) \propto \exp(-\frac{\|\bm x-e^{-t} \bm y\|^2}{2(1-e^{-2t})})$ is the kernel associated with the Ornstein-Uhlenbeck diffusion process and $t_1\geq \hdots \geq t_L=0$ is an arbitrary sequence so that $\pibridge{L}=\pi$.
	This corresponds to the diffusion models~\cite{sohl2015deep,song2021scorebased}. Samples from $\pibridge{\ell}$ as in \eqref{eq:density_diffusion} are thus trivially generated as follow
	\begin{equation}\label{eq:blurred_samples}
	 \bm X^{(i)}_\ell = \exp(-t_\ell)  \bm X^{(i)} + \sqrt{1-\exp(-2 t_\ell)}  \bm Z^{(i)} ,
	\end{equation}
	where $\bm Z^{(i)}\sim\mathcal{N}(0,I_d)$ are independent samples drawn from the standard Gaussian distribution.
	Note that diffusion-based bridging densities \eqref{eq:density_diffusion} are also considered in \cite{grenioux2024stochastic,akhound2024iterated} in order to sample from a given density.
	Figure~\ref{fig:vis_ornstein_uhlenbeck} gives a visual comparison of the tempered~\eqref{eq:tempered} and diffusion-based~\eqref{eq:density_diffusion} bridging densities for a two--mode Gaussian target density $\pi$.
	\begin{figure}
		\includegraphics[width=\textwidth]{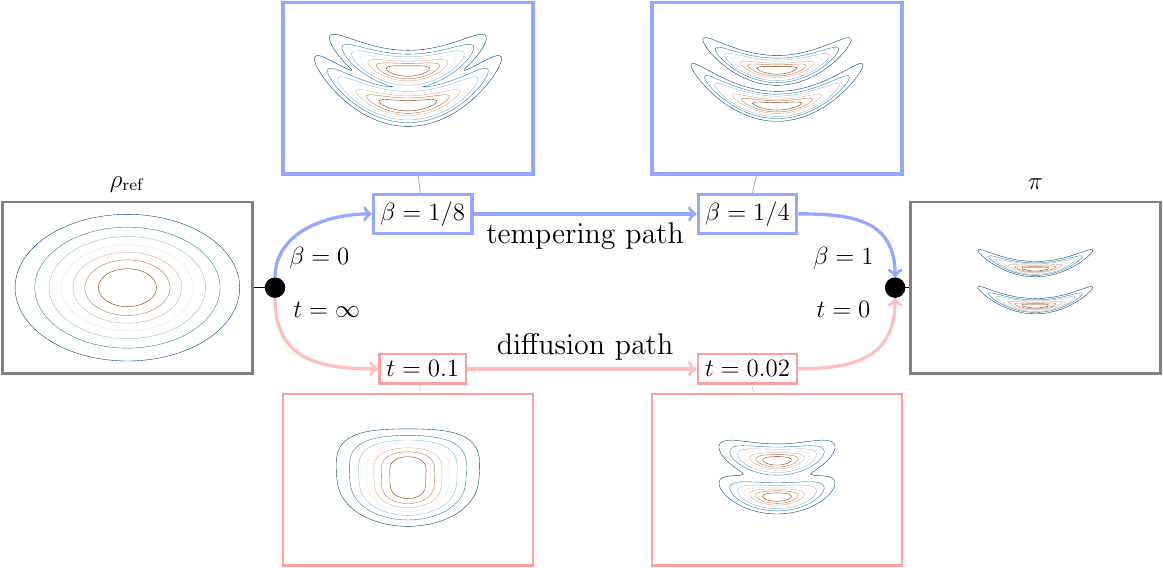}
		\caption{Visual comparison of tempered (top) and diffusion based (bottom) bridging densities for a banana distribution (right) with a Gaussian reference distribution (left).}
		\label{fig:vis_ornstein_uhlenbeck}
	\end{figure}

	Let us assume that the $\alpha$-divergence between two consecutive bridging densities is bounded by a quantity which decreases with the number $L$ of bridging densities.

	\begin{assumption}\label{assu:bound_divergence_of_bridging_densities}
		The sequence of bridging densities $\{\pibridge{\ell}\}_{\ell=1}^L$ satisfies
		\begin{align}\label{eq:bound_divergence_of_bridging_densities}
			\AlphaDiv\left(\pibridge{\ell} || \pibridge{\ell-1}\right) \leq \eta(L) ,
		\end{align}
		for all $1\leq\ell\leq L$, where $\eta(L)\geq0$ is such that $\eta(L)\rightarrow0$ when $L\rightarrow \infty$.
		Here we used the convention that $\pibridge{0}=\rhoref$.
	\end{assumption}

	Intuitively, Assumption \ref{assu:bound_divergence_of_bridging_densities} means that the complexity of learning $\pi$ is divided into $L$ sub-problems of much lower complexity compared to learning $\pi$ directly.
	Based on this idea, the bridiging densities $\{\pibridge{\ell}\}_{\ell=1}^L$ must be chosen in a way that every sub-problems are ideally such that $\AlphaDiv\left(\pibridge{\ell} || \pibridge{\ell-1}\right) = \eta(L)$ for all $\ell\leq L$. In principle, this can be obtained with an appropriate choice of tempering parameter $\beta_\ell$ in \eqref{eq:tempered} or with an appropriate choice of time step $t_\ell$ in \eqref{eq:density_diffusion}.
	We discuss this further in Section \ref{sec:scheduler}.

	Now we introduce a sequence of diffeomorphic map $\mathcal{Q}_1,\hdots,\mathcal{Q}_L$ so that
	\begin{equation}\label{eq:pitilde}
	 \pibridgetilde{\ell} = (\mathcal{T}_{\ell})_\sharp \rhoref
	 \quad\text{where}\quad
		\mathcal{T}_{\ell} = \mathcal{Q}_{1}\circ\hdots\circ \mathcal{Q}_{\ell}.
	\end{equation}
	\begin{assumption}\label{assump:better_by_w}
		We assume there exists $\omega < 1$ such that
		\begin{align}\label{eq:tmp135908}
			\AlphaDiv\left( \pibridge{\ell}|| \pibridgetilde{\ell} \right) &\leq \omega \AlphaDiv\left( \pibridge{\ell} || \pibridgetilde{\ell-1} \right),
		\end{align}
		for all $1\leq\ell\leq L$ and $\AlphaDiv(\pibridge{0} || \rhoref)\leq \omega$, where we use the convention $\pibridgetilde{0} = \rhoref$.
	\end{assumption}

	By definition \eqref{eq:pitilde} and by Property \eqref{eq:TransportProperty} of $\alpha$-divergences, this is equivalent to
	\begin{align}\label{eq:better_by_w}
			\AlphaDiv\left(\left. \overline{\pi}^{(\ell)} \right\| (\mathcal{Q}_{\ell})_\sharp\rhoref \right) &\leq \omega \AlphaDiv\left(\left. \overline{\pi}^{(\ell)} \right\| \rhoref \right),
		\end{align}
	where $\overline{\pi}^{(\ell)}=(\mathcal{T}_{\ell-1})^\sharp\pibridge{\ell}$ is the pullback density of the $\ell$-th bridiging density by the $(\ell-1)$-th map $\mathcal{T}_{\ell-1}=\mathcal{Q}_{1}\circ\hdots\circ \mathcal{Q}_{\ell-1}$.
	In the light of Remark \ref{rmk:SoSconstainsID}, Assumption
	\ref{assump:better_by_w} is stronger than \eqref{eq:SoSconstainsID}.
	It is worth to note that, for the Hellinger distance ($\alpha=1/2$), Theorem 2.8 from \cite{westermann_measure_2023} gives a convergence rate $\omega=\omega(n)$ for the approximation of smooth densities by a polynomials squared constructed on an adaptive polynomial basis with $n$ term.
	Because the SoS function \eqref{eq:SoS_function} contains the set of polynomial square (simply by letting $A$ be a rank-1 matrix), the results from \cite{westermann_measure_2023} transfers to SoS.
	For instance, using Theorem 2.8 from \cite{westermann_measure_2023}, we have that if $\bm x\mapsto (\overline{\pi}^{(\ell)}(x_1,\hdots,x_d))^{1/2}$ is analytic, then $\omega(n)=\mathcal{O}(\exp(-\beta n^{1/d}))$ for some $\beta>0$.

	Next we assume a triangular like property for the $\alpha$-divergence.
	\begin{assumption}\label{assump:symetric_proj_diff}
		We assume that there exists a constant $\epsilon \geq 0$ so that
		\begin{align}
			\AlphaDiv\left(\pibridge{\ell} || \pibridgetilde{\ell-1}\right)
			\leq  (1+\epsilon) \AlphaDiv\left(\pibridge{\ell} || \pibridge{\ell-1}\right) + \AlphaDiv\left(\pibridge{\ell-1} || \pibridgetilde{\ell-1}\right),
			\label{eq:tmp01375861}
		\end{align}
		for all $1\leq\ell\leq L$.
	\end{assumption}
	In general, such triangle inequality for $\alpha$-divergence does not hold.
	We discuss this assumption further in Section \ref{subsec:convergence_analysis_geometry}.
	We now establish a convergence bound for $\alpha$-divergences.
	\begin{theorem}\label{th:chiu_bound}
		Under Assumptions~\ref{assu:bound_divergence_of_bridging_densities},~\ref{assump:better_by_w} and~\ref{assump:symetric_proj_diff}, we have
		\begin{align}\label{eq:chiu_bound}
			\AlphaDiv\left( \pibridge{L} || \pibridgetilde{L}\right) \leq \frac{\omega (1 + \epsilon)}{1 - \omega } \eta(L).
		\end{align}
	\end{theorem}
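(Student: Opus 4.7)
The plan is to set $a_\ell := \AlphaDiv(\pibridge{\ell} \| \pibridgetilde{\ell})$ for $0 \leq \ell \leq L$ and derive a linear recursion of the form
\[
a_\ell \;\leq\; \omega\, a_{\ell-1} \;+\; \omega(1+\epsilon)\eta(L),
\]
valid for all $1 \leq \ell \leq L$. Since the conventions $\pibridge{0} = \pibridgetilde{0} = \rhoref$ give $a_0 = \AlphaDiv(\rhoref\|\rhoref) = 0$, unrolling this recursion collapses into a geometric sum that yields exactly the bound $\frac{\omega(1+\epsilon)}{1-\omega}\eta(L)$.

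To derive the one-step inequality, I would first apply Assumption~\ref{assump:better_by_w} at step $\ell$ to get $a_\ell \leq \omega \AlphaDiv(\pibridge{\ell} \| \pibridgetilde{\ell-1})$; this is the step where the contraction factor $\omega$ enters. I would then bound $\AlphaDiv(\pibridge{\ell} \| \pibridgetilde{\ell-1})$ using the triangle-like Assumption~\ref{assump:symetric_proj_diff}, which splits it as $(1+\epsilon)\AlphaDiv(\pibridge{\ell} \| \pibridge{\ell-1}) + \AlphaDiv(\pibridge{\ell-1} \| \pibridgetilde{\ell-1})$. The second summand is exactly $a_{\ell-1}$, while the first summand is bounded by $(1+\epsilon)\eta(L)$ via Assumption~\ref{assu:bound_divergence_of_bridging_densities}. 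Chaining these three inequalities gives the claimed recursion.

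Unrolling with $a_0 = 0$ yields
\[
a_L \;\leq\; \omega(1+\epsilon)\,\eta(L) \sum_{k=0}^{L-1} \omega^{k} \;\leq\; \frac{\omega(1+\epsilon)}{1-\omega}\,\eta(L),
\]
where the last step uses $\omega < 1$ to bound the partial geometric sum by $1/(1-\omega)$. This is the desired estimate.

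The argument is essentially a routine telescoping/geometric bound once the recursion is assembled, so I do not anticipate a genuine technical obstacle inside the proof. The only point requiring care is a clean treatment of the base case: the conventions $\pibridge{0}=\pibridgetilde{0}=\rhoref$ make $a_0 = 0$, and the case $\ell=1$ of Assumption~\ref{assump:better_by_w} is precisely the auxiliary condition $\AlphaDiv(\pibridge{0}\|\rhoref)\leq\omega$ stated there, so both recursions start cleanly. The one potentially delicate conceptual point — that $\alpha$-divergences do not satisfy a genuine triangle inequality — has been abstracted away into Assumption~\ref{assump:symetric_proj_diff}, and so the theorem itself is really a direct consequence of the three hypotheses combined by a finite induction on $\ell$.
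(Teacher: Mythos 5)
Your proof is correct and follows essentially the same route as the paper: you combine Assumption~\ref{assump:better_by_w}, Assumption~\ref{assump:symetric_proj_diff}, and Assumption~\ref{assu:bound_divergence_of_bridging_densities} in that order to obtain the one-step recursion $a_\ell \leq \omega a_{\ell-1} + \omega(1+\epsilon)\eta(L)$, then unroll to a geometric sum bounded by $\omega/(1-\omega)$. The only cosmetic difference is the treatment of the base case: the paper bounds $a_1 \leq \omega\,\eta(L)$ directly from Assumptions~\ref{assu:bound_divergence_of_bridging_densities} and~\ref{assump:better_by_w} (bypassing Assumption~\ref{assump:symetric_proj_diff} at $\ell=1$, which is slightly sharper since it avoids the factor $1+\epsilon$), whereas you start from $a_0 = \AlphaDiv(\rhoref\|\rhoref)=0$ and apply the uniform recursion at every step; both yield the same final bound, and your version is arguably a bit cleaner since it does not need a separate base case.
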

	\begin{proof}
		We can write
		\begin{align*}
		\AlphaDiv (\pibridge{\ell} || \pibridgetilde{\ell})
		&\overset{\eqref{eq:tmp135908}}{\leq} \omega \AlphaDiv (\pibridge{\ell} || \pibridgetilde{\ell-1}) \\
		&\overset{\eqref{eq:tmp01375861}}{\leq} \omega \left(  (1+\epsilon) \AlphaDiv(\pibridge{\ell} || \pibridge{\ell-1}) + \AlphaDiv(\pibridge{\ell-1} || \pibridgetilde{\ell-1}) \right)\\
		&\overset{\eqref{eq:bound_divergence_of_bridging_densities}}{\leq} \omega (1+\epsilon) \eta(L) + \omega \AlphaDiv(\pibridge{\ell-1} || \pibridgetilde{\ell-1}) .
		\end{align*}
		A direct recursion gives
		\begin{align*}
		 \AlphaDiv\left( \pibridge{L} || \pibridgetilde{L}\right)
		 &\leq  (1+\epsilon) \eta(L) \left(\omega + \omega^2 + \hdots + \omega^{L-1} \right)
		\end{align*}
		where we used the fact that $\AlphaDiv(\pibridge{1} || \pibridgetilde{1})\leq \omega \eta(L)$ by Assumptions \ref{assu:bound_divergence_of_bridging_densities} and \ref{assump:better_by_w}.
		This gives \eqref{eq:chiu_bound} and concludes the proof.
	\end{proof}

	\begin{remark}[Hellinger distance]\label{remark:comparison_error_bound_Hellinger_geometric}
		The case $\alpha=1/2$, which corresponds to the Hellinger distance, can be treated differently without using Assumption \ref{assump:symetric_proj_diff}. Indeed, since $\mathrm{D}_{1/2}(f||g) = 2\int ( \sqrt{f}-\sqrt{g})^2\d \bm x$, we have that  	$(f,g)\mapsto\sqrt{\mathrm{D}_{1/2}(f||g)}$ satisfies a triangle inequality
		$
		 \sqrt{\mathrm{D}_{1/2}(f||g)} \leq \sqrt{\mathrm{D}_{1/2}(f||h)} + \sqrt{\mathrm{D}_{1/2}(h||g)} .
		$
		Using this inequality instead of \eqref{eq:tmp01375861} in the proof of Theorem \ref{th:chiu_bound}, we obtain
		\begin{equation}
		 \mathrm{D}_{1/2}(\pibridge{L}||\pibridgetilde{L}) = \frac{\omega}{(1-\sqrt{\omega})^2}\eta(L),
		\end{equation}
		see also the analysis proposed in \cite{cui_scalable_2023,cui_self-reinforced_2023} for the Hellinger distance $\Hell( \pi_f || \pi_g ) = (\frac{1}{2}\int ( \sqrt{\pi_f}-\sqrt{\pi_g} )^2\d \bm x )^{1/2}$.
		Compared to \eqref{eq:chiu_bound}, the denominator is larger, but there is no extra-factor $(1+\epsilon)$ in the numerator.
	\end{remark}

\subsection{Validation of Assumption \ref{assu:bound_divergence_of_bridging_densities}}\label{sec:scheduler}
We show existence of tempered and diffusion bridging densities that satisfy Assumption~\ref{assu:bound_divergence_of_bridging_densities} with
\begin{equation}
	\eta(L)=\mathcal{O}(1/L^2)
\end{equation}
for $L \rightarrow \infty$.
Based on our results, we also provide heuristics for choosing sequences of $\beta_\ell$ and $t_\ell$.
We refer to this choice as \emph{scheduling}, which is already a coined term in the diffusion model literature (e.g. see~\cite{kingma2021variational, nichol_improved_2021}).

We start with tempered bridging densities and the following proposition, whose proof is given in Appendix~\ref{proof:scheduler_tempering}.

	\begin{proposition}\label{prop:scheduler_tempering}
	Let $u\mapsto\beta(u)$ be a function such that $\beta(0)=0$ and $\beta(1)=1$ and
	\begin{equation}\label{eq:scheduler_tempering_unnormalized}
		\beta'(u) = \Omega C''(\beta(u))^{-1/2} ,
	\end{equation}
	for some constant $\Omega\geq0$, where $C(\beta)= \int \d\pi_\beta$ is the normalizing constant of the density $\pi_\beta(\bm x)=\mathcal{L}(\bm x)^\beta \pi_0(\bm x)$ and $C''(\beta)=\int \log(\mathcal{L})^2 \d\pi_\beta$ its second-order derivative.
	Let $\beta_0\leq \beta_1\leq \hdots\leq \beta_L$ be a sequence defined by
	\begin{equation}\label{eq:scheduler_beta}
		\beta_\ell = \beta( \ell/L ) ,
	\end{equation}
	Then the tempered bridiging density $\pibridge{\ell}(\bm x)=\mathcal{L}(\bm x)^{\beta_\ell} \d\pi_0(\bm x)$ satisfies
	\begin{equation}\label{eq:control_tempering_unnormalized}
	  \AlphaDiv\left(\pibridge{\ell} || \pibridge{\ell-1}\right) = \frac{\Omega^2}{2L^2} + \mathcal{O}\left(\frac{1}{L^{3}}\right) .
	\end{equation}
	\end{proposition}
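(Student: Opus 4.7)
The plan is to reduce the proposition to a Taylor expansion of $\phi_\alpha$ around $t=1$ combined with a Taylor expansion of the scheduling function $\beta$. First I would substitute the explicit form of the tempered bridging densities into the definition of the unnormalized $\alpha$-divergence from \eqref{eq:def_alpha_divergences}: with $\Delta_\ell = \beta_\ell - \beta_{\ell-1}$, one gets $\pibridge{\ell}(\bm x)/\pibridge{\ell-1}(\bm x) = \mathcal{L}(\bm x)^{\Delta_\ell}$, so that
\begin{equation*}
\AlphaDiv(\pibridge{\ell} \| \pibridge{\ell-1}) = \int \phi_\alpha\bigl(\mathcal{L}(\bm x)^{\Delta_\ell}\bigr)\, \mathcal{L}(\bm x)^{\beta_{\ell-1}} \pi_0(\bm x) \, \d\bm x.
\end{equation*}

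The central observation, noted right after \eqref{eq:def_alpha_divergences_normalized}, is that for every $\alpha \in \R$ one has $\phi_\alpha(1)=0$ and $\phi_\alpha'(1)=0$, while a direct computation from the three cases in \eqref{eq:def_alpha_divergences} gives $\phi_\alpha''(1)=1$. Consequently a second-order Taylor expansion yields $\phi_\alpha(e^s) = \tfrac{1}{2} s^2 + \mathcal{O}(s^3)$ as $s\to 0$. Applying this with $s = \Delta_\ell \log\mathcal{L}(\bm x)$ and integrating gives
\begin{equation*}
\AlphaDiv(\pibridge{\ell} \| \pibridge{\ell-1}) = \frac{\Delta_\ell^2}{2}\int \log^2\mathcal{L}(\bm x)\, \mathcal{L}(\bm x)^{\beta_{\ell-1}} \pi_0(\bm x)\, \d\bm x + \mathcal{O}(\Delta_\ell^3) = \frac{\Delta_\ell^2}{2}\, C''(\beta_{\ell-1}) + \mathcal{O}(\Delta_\ell^3),
\end{equation*}
since differentiating $C(\beta)=\int \mathcal{L}^\beta \pi_0\,\d\bm x$ twice under the integral sign produces exactly $\int \log^2\mathcal{L} \cdot \mathcal{L}^\beta\pi_0\,\d\bm x$.

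The second step is to translate this into the claimed $\mathcal{O}(1/L^2)$ behaviour by using the ODE \eqref{eq:scheduler_tempering_unnormalized}. Since $\beta \in C^1([0,1])$ with bounded derivative, a Taylor expansion of $\beta(u)$ between $u = (\ell-1)/L$ and $u=\ell/L$ gives $\Delta_\ell = \beta'(\ell/L)/L + \mathcal{O}(1/L^2)$, hence $\Delta_\ell^2 = \beta'(\ell/L)^2/L^2 + \mathcal{O}(1/L^3)$. Substituting the defining relation $\beta'(u)^2 = \Omega^2 / C''(\beta(u))$ and using continuity of $C''$ to replace $C''(\beta_{\ell-1})$ with $C''(\beta_\ell) + \mathcal{O}(1/L)$, the product telescopes:
\begin{equation*}
\Delta_\ell^2\, C''(\beta_{\ell-1}) = \frac{\Omega^2}{L^2}\cdot \frac{C''(\beta_{\ell-1})}{C''(\beta_\ell)} + \mathcal{O}(1/L^3) = \frac{\Omega^2}{L^2} + \mathcal{O}(1/L^3),
\end{equation*}
which plugged back into the formula above yields \eqref{eq:control_tempering_unnormalized}.

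The main obstacle is rigorously controlling the error terms. Moving $\mathcal{O}(s^3)$ through the integral and turning it into $\mathcal{O}(\Delta_\ell^3)$ requires a uniform bound on an integral like $\int |\log\mathcal{L}|^3 \mathcal{L}^{\beta_{\ell-1}}\pi_0\,\d\bm x$ on the whole scheduling interval, together with regularity of $C$ (existence and continuity of the third derivative) to make the Taylor expansion of $\beta$ legitimate. I expect the proposition to be stated under implicit integrability/smoothness assumptions on $\mathcal{L}$ and $\pi_0$ that make $\beta \mapsto C(\beta)$ smooth enough on $[0,1]$; the proof in the appendix presumably states these explicitly and then the chain of estimates above closes without difficulty.
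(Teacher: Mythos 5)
Your proposal is correct and follows essentially the same route as the paper's proof: a second-order Taylor expansion of the integrand in $\Delta_\ell$ yields $\tfrac{\Delta_\ell^2}{2}C''(\beta_{\ell-1})+\mathcal{O}(\Delta_\ell^3)$, then $\Delta_\ell = \beta'(\ell/L)/L + \mathcal{O}(1/L^2)$ combined with the scheduler ODE makes the $C''$ factors cancel. Your observation that $\phi_\alpha(1)=\phi_\alpha'(1)=0$ and $\phi_\alpha''(1)=1$ for \emph{all} $\alpha$, giving $\phi_\alpha(e^s)=\tfrac{1}{2}s^2+\mathcal{O}(s^3)$ uniformly, is a slightly cleaner packaging than the paper's explicit expansion of $\mathcal{L}^{\alpha\Delta\beta}$ and $\alpha\mathcal{L}^{\Delta\beta}$ (which, as written, treats only $\alpha\notin\{0,1\}$), but it leads to the identical estimate; and you are right that the remaining rigor gaps (uniform integrability of $|\log\mathcal{L}|^3\mathcal{L}^{\beta}\pi_0$ and smoothness of $C$) are left implicit in the paper as well.
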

	The scheduler \eqref{eq:scheduler_tempering_unnormalized} requires knowing the normalizing constant $C(\beta)$ of $\pi_\beta$ (and its second order derivative), which we may not have access to in practice. We propose the following heuristics
	\begin{itemize}
	\item Since $C(\beta)= \int \mathcal{L}(x)^\beta \d\pi_0(x)$, a naive estimate is $C(\beta) \approx C(1)^\beta$. This estimate is exact at $\beta=1$ and $\beta=0$, provided $\pi_0$ is a normalized probability density. We deduce $C''(\beta) \approx C(1)^\beta \log(C(1))^2$, so that condition \eqref{eq:scheduler_tempering_unnormalized} becomes $\beta'(u) \propto C(1)^{-\beta/2}$ and thus $\beta(u)$ is necessary of the form
	\begin{align}
		\beta(u) = \frac{2\log(1+ (C(1)^{1/2}-1) u)}{\log(C(1))}
	\end{align}
	\item Another option is to assume $C''(\beta)\propto 1/\beta^2$. That way \eqref{eq:scheduler_tempering_unnormalized} becomes $\beta'(u)\propto\beta(u)$ and then
	\begin{align}
		\beta(u) \propto \exp( a u)
	\end{align}
	for some constant $a>1$. Then \eqref{eq:scheduler_beta} yields $\beta_{\ell+1} = \rho_L\beta_\ell$ with $\rho_L=\exp(a/L)$, which is the exponential scheduler used in \cite{cui_deep_2021,cui_scalable_2023}.

	\end{itemize}

	Next, we discuss the case for diffusion bridging densities.

	\begin{proposition}\label{prop:scheduler_diffusion}
		Given a target density $\pi$ and $t\geq0$, consider $\pi_t(\bm x) = \int \kappa_t(\bm x,\bm y)\pi(\bm y) \d \bm y$  where $\kappa_t(\bm x,\bm y)\propto\exp\left( -\frac{\|\bm x-e^{-t}\bm y\|^2}{2(1-e^{-2t})}\right)$.
		Let $u\mapsto t(u)$ be a function such that $t(0)=\infty$ and $t(1)=0$ and which satisfies $t'(u) = - \Omega D(t(u))^{-1}$ for some $\Omega\geq0$, where
		\begin{equation}\label{eq:D}
		 D(t) =
		 \left( \int \left(\nabla\log\pi_t^\top \left(\nabla\log\frac{\pi_t}{\pi_\infty}\right)  +  \trace\left(\nabla^2\log \frac{\pi_t}{\pi_\infty} \right) \right)^2 \d \pi_{t} \right)^{1/2} ,
		\end{equation}
		where $\pi_\infty = \mathcal{N}(0,I_d)$.
		Then, the diffusion bridiging densities $\pi^{(\ell)}=\pi_{t_\ell}$ with $t_\ell$ defined as
		\begin{equation}
		 t_\ell = t(\ell/L),
		\end{equation}
		satisfy
		\begin{equation}\label{eq:scheduler_diffusion}
		D_\alpha(\pi^{(\ell+1)}||\pi^{(\ell)}) = \frac{\Omega^2}{2 L^2}
		+ \mathcal{O}\left(\frac{1}{L^3}\right) .
		\end{equation}

	\end{proposition}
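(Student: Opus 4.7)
The plan is to follow the template of Proposition~\ref{prop:scheduler_tempering}: derive a quadratic expansion
\begin{equation*}
D_\alpha(\pi_{t+\delta}\,\|\,\pi_t) = \frac{\delta^2}{2}\, D(t)^2 + O(\delta^3),
\end{equation*}
and then calibrate $\delta_\ell = t_{\ell+1}-t_\ell$ via the prescribed ODE $t'(u) = -\Omega/D(t(u))$ so that the $D(t_\ell)^2$ factor cancels. A first-order expansion of $t(\cdot)$ gives $\delta_\ell = t'(\ell/L)/L + O(1/L^2) = -\Omega/(L\,D(t_\ell)) + O(1/L^2)$, whence $(\delta_\ell^2/2)\,D(t_\ell)^2 = \Omega^2/(2L^2) + O(1/L^3)$, which is exactly \eqref{eq:scheduler_diffusion}.

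The first substantive step is to identify $D(t)$ with $\|\partial_t\log\pi_t\|_{L^2(\pi_t)}$. Since $\pi_\infty = \mathcal{N}(0,I_d)$ is invariant for the Ornstein--Uhlenbeck semigroup, the Fokker--Planck equation can be written in the McKean form
\begin{equation*}
\partial_t \pi_t = \nabla\cdot\bigl(\pi_t\,\nabla\log(\pi_t/\pi_\infty)\bigr).
\end{equation*}
Dividing by $\pi_t$ and expanding the divergence via $\nabla\cdot(\pi_t v) = \pi_t(\nabla\cdot v) + \pi_t\,v\cdot\nabla\log\pi_t$ with $v=\nabla\log(\pi_t/\pi_\infty)$ yields
\begin{equation*}
\partial_t \log \pi_t = \nabla \log \pi_t^\top \nabla \log(\pi_t/\pi_\infty) + \trace\bigl(\nabla^2 \log(\pi_t/\pi_\infty)\bigr),
\end{equation*}
which is precisely the integrand of the squared $L^2(\pi_t)$-norm defining $D(t)$ in \eqref{eq:D}.

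The second step is a Taylor expansion of $\phi_\alpha$ around $1$. Writing $r = \pi_{t+\delta}/\pi_t = 1 + \delta\,\partial_t\log\pi_t + O(\delta^2)$ and using the easily checked identities $\phi_\alpha(1) = \phi_\alpha'(1) = 0$ and $\phi_\alpha''(1) = 1$ (verified in each of the three cases of \eqref{eq:def_alpha_divergences}), one obtains the pointwise expansion
\begin{equation*}
\phi_\alpha(r)\,\pi_t = \tfrac{1}{2}(r-1)^2\pi_t + O(\delta^3) = \tfrac{\delta^2}{2}(\partial_t\log\pi_t)^2\pi_t + O(\delta^3),
\end{equation*}
and integrating against $\bm x$ gives the quadratic expansion displayed above. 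Substituting $\delta_\ell$ closes the argument. Note that because both $\pi_{t+\delta}$ and $\pi_t$ are probability densities the affine contribution in $\phi_\alpha$ integrates to zero, so there is no discrepancy between using $\phi_\alpha$ or $\phi_\alpha^n$.

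The main obstacle I anticipate is uniformity of the $O(\delta^3)$ remainder over $\ell\in\{1,\dots,L\}$. The OU smoothing degenerates as $t\to 0$, so the bounds on $\partial_t^2\log\pi_t$ and on the ratio $\pi_{t+\delta}/\pi_t$ needed to control the Taylor remainder may deteriorate near the endpoint $t_L=0$; one will likely need a mild regularity assumption on $\pi$ (e.g.\ a bounded smooth log-density) to ensure that $\pi_t$ remains uniformly smooth on $[0,t_1]$. A secondary subtlety is the other endpoint $t(0)=\infty$: the reparameterization $u\mapsto t(u)$ reaches $\infty$ in finite time only if $1/D(t)$ is integrable as $t\to\infty$, which follows from the exponential relaxation $\pi_t\to\pi_\infty$ provided by hypercontractivity of the OU semigroup.
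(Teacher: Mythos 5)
Your proof is correct and reaches the paper's conclusion by a genuinely different and, in my view, cleaner route. The paper's proof computes $\pi_{t+\Delta t}(\bm x)= e^{d\Delta t}\,\E\bigl[\pi_t\bigl(e^{\Delta t}(\bm x-\sqrt{1-e^{-2\Delta t}}Z)\bigr)\bigr]$ via a change of variables in the OU kernel, then Taylor-expands this to obtain $\pi_{t+\Delta t}=\pi_t+\Delta t\,A+\Delta t^2 B+\mathcal{O}(\Delta t^3)$, verifies $\int A\,\d\bm x=\int B\,\d\bm x=0$ by integration by parts, and finally expands the $\alpha$-divergence formula to land on $\frac{\Delta t^2}{2}\int(A/\pi_t)^2\d\pi_t$. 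You instead identify the integrand in $D(t)$ directly with $\partial_t\log\pi_t$ via the McKean form of the Fokker--Planck equation, and Taylor-expand $\phi_\alpha$ around $1$ using $\phi_\alpha(1)=\phi_\alpha'(1)=0$, $\phi_\alpha''(1)=1$. The two integrands do coincide (one checks $\frac{\trace(\nabla^2\pi_t)}{\pi_t}=\trace(\nabla^2\log\pi_t)+\|\nabla\log\pi_t\|^2$ and $\nabla\log(\pi_t/\pi_\infty)=\nabla\log\pi_t+\bm x$), so you are computing the same Fisher-information term. What your route buys is conceptual transparency: it makes manifest that $D(t)^2$ is the Fisher information along the OU flow and that the $\alpha$-independence of the leading order is just $\phi_\alpha''(1)=1$ (the information metric is $\alpha$-independent), rather than a coincidence of a long explicit expansion. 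It also avoids the two integration-by-parts vanishing arguments ($\int A=\int B=0$), replacing them by the trivial $\int(\pi_{t+\delta}-\pi_t)\,\d\bm x=0$. Your closing caveats about uniformity of the $\mathcal{O}(\delta^3)$ remainder as $t\to 0$ and about integrability of $1/D(t)$ near $t=\infty$ are legitimate; the paper's proof tacitly assumes both as well, so this is not a gap in your argument relative to the paper but rather a shared implicit regularity assumption.
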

	The proof is given in Appendix~\ref{proof:scheduler_diffusion}.
	The quantity $D(t)$ is a measure of how close $\pi_t$ is to $\pi_\infty = \mathcal{N}(0,1)$. This convergence of $\pi_t$ to $\pi_\infty$ is known to be exponentially fast, see \emph{e.g.}~\cite{bakry2014analysis}. As to reflect this, we propose the following heuristic
	\begin{equation}
	D(t) \approx \frac{1}{ \exp(\rho t) - B} ,
	\end{equation}
	for some constants $\rho>0$ and $0\leq B < 1$. That way, the condition $t'(u) \propto - 1/ D(t(u))$ simplifies to
	$t'(u) \propto - ( \exp(\rho t(u)) - B)$ and yields the following scheduler
	\begin{equation}\label{eq:proposed_time_diffusion}
		t(u) = \frac{1}{\rho} \log\left( \frac{ B}{ 1-(1-B)^u }\right) .
	\end{equation}

	\subsection{Geometric interpretation of Assumption \ref{assump:symetric_proj_diff}}\label{subsec:convergence_analysis_geometry}

	We explain Assumption~\ref{assump:symetric_proj_diff} using the information geometric properties of $\alpha$-divergences. We refer to~\cite{amari_information_2016} for detailed introduction on information geometry.
	$\alpha$-divergences posses geodesics contained in the space of measures, called $\alpha$-geodesics. Let us state the definition of these geodesics.
	\begin{definition}[$\alpha$ and $\alpha^*$-geodesic~\cite{amari_information_2016}]
		The $\alpha$-geodesic connecting two (possibly unnormalized) densities $f$ and $g$ is given by
		\begin{align*}
			\mu_{t}(\bm x) = \begin{cases}
				\left\{(1-t)f(\bm x)^{1-\alpha} + t g(\bm x)^{1-\alpha} \right\}^{\frac{1}{1-\alpha}} \qquad &\text{for } \alpha \neq 1 \\
				f(\bm x)^{1-t} g(\bm x)^t &\text{for } \alpha = 1
			\end{cases}
		\end{align*}
		for $0\leq t\leq 1$.
		Furthermore, the $(1-\alpha)$-geodesic is the $\alpha^*$-geodesic (dual $\alpha$-geodesic).
	\end{definition}
	In our analysis, we make use of the generalized Pythagorean theorem and projection theorem, which we state in the following.
	\begin{theorem}[generalized Pythagorean theorem~\cite{amari_information_2016}]\label{th:generalizedPythagoreantheorem}
		Given three densities $f,g,h$  so that the $\alpha$-geodesic between $f$ and $g$ and the $\alpha^*$-geodesic between $g$ and $h$ are orthogonal. Then,
		\begin{align}
			\AlphaDiv\left(f || h \right) = \AlphaDiv\left(f || g\right) + \AlphaDiv\left(g || h\right).
		\end{align}
	\end{theorem}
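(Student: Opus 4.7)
The approach is to reduce the Pythagorean identity to a short algebraic computation whose right-hand side can then be recognized as the stated orthogonality condition. I treat $\alpha \notin \{0,1\}$ in detail; the endpoint cases follow either by taking limits or by an entirely analogous expansion using $\phi_0(t)=-\log t+t-1$ or $\phi_1(t)=t\log t-t+1$.

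Using the explicit form of $\phi_\alpha$ from \eqref{eq:def_alpha_divergences}, one rewrites
\begin{align*}
\AlphaDiv(f || g) = \frac{1}{\alpha(\alpha-1)}\int f^\alpha g^{1-\alpha}\,\d\bm x + \frac{1}{\alpha}\int g\,\d\bm x - \frac{1}{\alpha-1}\int f\,\d\bm x.
\end{align*}
Forming $\AlphaDiv(f || h) - \AlphaDiv(f || g) - \AlphaDiv(g || h)$, the linear-in-$f$ and linear-in-$h$ pieces cancel exactly, while the surviving $\int g$ terms combine via $-\tfrac{1}{\alpha}+\tfrac{1}{\alpha-1}=\tfrac{1}{\alpha(\alpha-1)}$ to yield an extra $\tfrac{g}{\alpha(\alpha-1)}$ inside the integrand. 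Substituting $g = g^\alpha g^{1-\alpha}$ then factorizes the integrand and gives the key identity
\begin{align*}
\AlphaDiv(f || h) - \AlphaDiv(f || g) - \AlphaDiv(g || h) = \frac{1}{\alpha(\alpha-1)}\int \bigl(f^\alpha - g^\alpha\bigr)\bigl(h^{1-\alpha}-g^{1-\alpha}\bigr)\,\d\bm x.
\end{align*}
This algebraic identity is the heart of the argument and reduces the theorem to showing that the right-hand integral vanishes under the orthogonality hypothesis.

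For the geometric interpretation, recall that the $\alpha$-geodesic is affine in the representation $r \mapsto r^{1-\alpha}$, so its secant from $g$ is proportional to $h^{1-\alpha}-g^{1-\alpha}$, while the $\alpha^* = (1-\alpha)$-geodesic is affine in the dual representation $r \mapsto r^\alpha$, with secant $f^\alpha - g^\alpha$. These two coordinate systems are Legendre conjugates under the potential $\psi(\theta) = \frac{1}{\alpha}\int f\,\d\bm x$ written in the primal coordinate $\theta = f^{1-\alpha}/(1-\alpha)$, which identifies the bilinear integral above with the Fisher--Rao inner product of the corresponding tangent vectors at $g$, up to the universal factor $\tfrac{1}{\alpha(1-\alpha)}$. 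The orthogonality assumption therefore forces the integral to vanish, completing the proof. The main obstacle is precisely this geometric identification: one must keep careful track of the primal/dual labelling so that the Fisher--Rao inner product of a primal tangent with a dual tangent collapses to the plain $L^2$-pairing of coordinate representatives appearing in the algebraic identity.
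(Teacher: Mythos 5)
The paper does not prove this theorem; it is quoted directly from Amari's book, so there is no internal proof to compare against. Your algebraic identity, however, is correct and worth checking carefully: substituting the explicit form of $\phi_\alpha$ and collecting the $\int f$, $\int g$, $\int h$ terms does give
\begin{align*}
	\AlphaDiv(f || h) - \AlphaDiv(f || g) - \AlphaDiv(g || h) = \frac{1}{\alpha(\alpha-1)}\int \bigl(f^\alpha - g^\alpha\bigr)\bigl(h^{1-\alpha}-g^{1-\alpha}\bigr)\,\d\bm x,
\end{align*}
and indeed the Fisher--Rao pairing at $g$ of the tangents of two dually flat geodesics collapses to precisely this $L^2$ pairing of coordinate secants, up to the factor $1/(\alpha(1-\alpha))$.

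The genuine gap is in the geodesic labelling, which you have reversed relative to the theorem statement. By the paper's definition, the $\alpha$-geodesic is affine in $r\mapsto r^{1-\alpha}$ and the $\alpha^*$-geodesic is affine in $r\mapsto r^\alpha$. Your integrand pairs $f^\alpha-g^\alpha$ (an $\alpha^*$-coordinate secant of the pair $(f,g)$) with $h^{1-\alpha}-g^{1-\alpha}$ (an $\alpha$-coordinate secant of the pair $(g,h)$); hence your argument requires orthogonality of the \emph{$\alpha^*$-geodesic between $f$ and $g$} with the \emph{$\alpha$-geodesic between $g$ and $h$}. The theorem as stated hypothesizes the opposite assignment: $\alpha$-geodesic between $f$ and $g$, $\alpha^*$-geodesic between $g$ and $h$, which corresponds to the vanishing of $\int(f^{1-\alpha}-g^{1-\alpha})(h^\alpha-g^\alpha)\,\d\bm x$ --- a different bilinear condition. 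You can see the mismatch concretely at $\alpha=1$: the algebraic identity reduces to $\KL(f||h)-\KL(f||g)-\KL(g||h)=\int(f-g)(\log g-\log h)$, whose vanishing is orthogonality of the mixture ($\alpha^*=0$) geodesic through $f,g$ with the exponential ($\alpha=1$) geodesic through $g,h$, not the other way around. In fact your version is the one that matches the standard statement in Amari (dual geodesic through the first pair, primal geodesic through the second), so the discrepancy is very likely a transcription slip in the paper's statement rather than an error in your derivation --- but you should flag it explicitly rather than silently proving a differently-labelled hypothesis. I would also tighten the final paragraph: the Legendre/potential digression is not needed once you have the identity; a one-line computation of the two geodesic tangents at $g$ and the Fisher pairing suffices and would make the sign and the $1/(\alpha(1-\alpha))$ factor visible.
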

	\begin{theorem}[Projection theorem~\cite{amari_information_2016}]
		Consider a density $f$ and a set of density $\mathcal{M}$.
		If $\mathcal{M}$ is an $\alpha^*$-flat, meaning that $\mathcal{M}$ contains all the $\alpha^*$-geodesic connecting any two points of $\mathcal{M}$, then
		\begin{align}
			\min_{ g \in \mathcal{M}} \AlphaDiv\left(f || g\right)
		\end{align}
		admits a unique solution which is the $\alpha$-projection of $f$ onto $\mathcal{M}$.
	\end{theorem}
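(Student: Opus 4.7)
The plan is to exploit the dual affine structure of $\alpha$-divergences: the $\alpha^*$-flatness of $\mathcal{M}$, combined with the generalized Pythagorean theorem (Theorem \ref{th:generalizedPythagoreantheorem}), yields both uniqueness of the minimizer and the characterization as the $\alpha$-projection in a single stroke.

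First, I would establish existence. The functional $g\mapsto \AlphaDiv(f||g)$ is convex (by the convexity property already noted in the paper) and lower semicontinuous; in particular, it is strictly convex along $\alpha^*$-geodesics, since in the dual $(1-\alpha)$-affine coordinates on $\mathcal{M}$ the $\alpha$-divergence is a strictly convex potential function. Under standard technical assumptions (closedness of $\mathcal{M}$ in a suitable topology and coercivity of $\AlphaDiv(f||\cdot)$), a minimizer $g^*\in\mathcal{M}$ exists.

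Next, I would characterize $g^*$ via first-order optimality. Since $\mathcal{M}$ is $\alpha^*$-flat, for every $g\in\mathcal{M}$ the $\alpha^*$-geodesic $\{\gamma_t\}_{t\in[0,1]}$ joining $g^*$ to $g$ stays in $\mathcal{M}$. The minimality of $g^*$ gives $\frac{\d}{\d t}\AlphaDiv(f||\gamma_t)|_{t=0}\geq 0$, and since the $\alpha^*$-geodesic can be traversed in both directions while remaining in $\mathcal{M}$, the derivative actually vanishes. Expressing this derivative in dual affine coordinates identifies it (up to a positive factor) with the Fisher inner product between the tangent vector of the $\alpha$-geodesic from $f$ to $g^*$ and the tangent vector of the $\alpha^*$-geodesic from $g^*$ to $g$. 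Hence the two geodesics meet orthogonally at $g^*$ in the Fisher metric, which is precisely the definition of $g^*$ being the $\alpha$-projection of $f$ onto $\mathcal{M}$.

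With orthogonality established, I would apply the generalized Pythagorean theorem to conclude that, for every $g\in\mathcal{M}$,
\begin{align*}
\AlphaDiv(f||g) \;=\; \AlphaDiv(f||g^*) + \AlphaDiv(g^*||g) \;\geq\; \AlphaDiv(f||g^*),
\end{align*}
with equality if and only if $\AlphaDiv(g^*||g)=0$, i.e.\ $g=g^*$. This simultaneously confirms optimality of $g^*$ and uniqueness of the minimizer. The main obstacle is the first-order optimality step: making the notion of orthogonality rigorous requires equipping the space of unnormalized densities with a manifold structure, defining the Fisher metric and the dual $\alpha$/$\alpha^*$ connections carefully, and checking that the coordinate calculation lifts to the (possibly infinite-dimensional) setting of $\mathcal{M}$. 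These are precisely the technicalities developed in \cite{amari_information_2016}, which is why the statement is cited rather than reproved.
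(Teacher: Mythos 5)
The paper itself gives no proof of this theorem; it is stated as a classical result and cited directly to Amari's book~\cite{amari_information_2016}, so there is no in-paper argument to compare your proposal against. That said, your reconstruction is a faithful sketch of the standard information-geometric argument: $\alpha^*$-flatness of $\mathcal{M}$ guarantees that the $\alpha^*$-geodesic between any $g^*,g\in\mathcal{M}$ stays in $\mathcal{M}$, so a first-order optimality condition at a minimizer $g^*$ forces the tangent of the $\alpha$-geodesic from $f$ to $g^*$ to be Fisher-orthogonal to $T_{g^*}\mathcal{M}$ (this is precisely what makes $g^*$ an $\alpha$-projection), and the generalized Pythagorean identity $\AlphaDiv(f||g)=\AlphaDiv(f||g^*)+\AlphaDiv(g^*||g)$ then yields both optimality and uniqueness since $\AlphaDiv(g^*||g)=0$ iff $g=g^*$. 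Two small remarks for rigor: (i) the ``traverse in both directions'' step requires that $g^*$ not sit on the boundary of $\mathcal{M}$, which is implicit when $\mathcal{M}$ is a genuine $\alpha^*$-autoparallel submanifold but is worth stating; (ii) your existence step smuggles in coercivity and closedness hypotheses that the theorem as stated does not list, which is fine as a heuristic but should be flagged as additional assumptions, consistent with your own closing caveat that the full technicalities live in the cited reference.
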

	Let us define an $\alpha$-geodesic $\mu_{t}$ which connects two consecutive bridging densities $\pibridge{\ell}$ and $\pibridge{\ell+1}$, as shown in Figure~\ref{fig:vis_generalized_pythagorean}.
	An approximation of $\pibridge{\ell}$ by $\pibridgetilde{\ell}$ is found by minimizing the variational density estimation problem, which in the geometric interpretation is a projection from $\pibridge{\ell}$ onto the approximation manifold $\mathcal{M}$, following an $\alpha$-geodesic.
	From an information geometric view, this has an unique solution, if $\mathcal{M}$ is $\alpha^*$-flat.
	This is \emph{e.g.} the case for the set $\mathcal{M}$ of SoS densities  \eqref{eq:SoS_function} when $\alpha=1$, but also for the set of squared function $\mathcal{M}=\{g(\bm x) = p(\bm x)^2\rhoref(\bm x), p\in\text{vector space} \}$  for $\alpha=1/2$ which are used in~\cite{cui_scalable_2023, westermann_measure_2023, cui_self-reinforced_2023}.
	
	Next, we project the approximation $\pibridgetilde{\ell}$ back onto a point $f_{\mathrm{proj}}^{(\ell)}$ on the $\alpha$-geodesic between $\pibridge{\ell}$ and $\pibridge{\ell+1}$. 
	This projection is defined by
	\begin{align}\label{eq:backwards_projection_measure}
		f_{\mathrm{proj}}^{(\ell)} = \argmin_{\mu_t \in \alpha\mathrm{-geodesic}(\pi^{(\ell)},\pi^{(\ell+1)})} \AlphaDiv\left(\mu_t || \pibridgetilde{\ell}\right).
	\end{align}
	The density $f_{\mathrm{proj}}^{(\ell)}$ is unique since it is defined by an $\alpha^*$-projection on a $\alpha$-geodesic.
	This allows us to use the generalized Pythagorean theorem between $\pibridgetilde{\ell}$, $f_{\mathrm{proj}}^{(\ell)}$, and $\pibridge{\ell+1}$.
	Based on these ideas, the next proposition allows us to explain Assumption~\ref{assump:symetric_proj_diff} from an information geometric point of view.

	\begin{figure}
		\centering
		\includegraphics{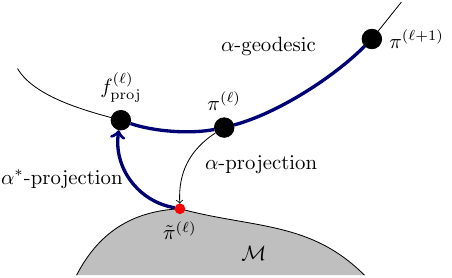}
		\caption{Visualization of the $\alpha$-geodesic going through $\pibridge{\ell}$ and $\pibridge{\ell+1}$ as well as the approximation submanifold $\mathcal{M}$ with the approximation $\pibridgetilde{\ell}$ being the $\alpha$-projection of $\pibridge{\ell}$ onto $\mathcal{M}$. The $\alpha^*$-projection back on the $\alpha$-geodesic is used in order to use the generalized Pythagorean theorem between $\pibridgetilde{\ell}$, $f_{\text{proj}}^{(\ell)}$, and $\pibridge{\ell+1}$.}\label{fig:vis_generalized_pythagorean}
	\end{figure}
	\begin{proposition}\label{prop:symetric_proj_diff}
		We assume that there exists an $\epsilon \geq 0$ so that
		\begin{align}
			\AlphaDiv\left(\pibridge{\ell+1} || f_{\mathrm{proj}}^{(\ell)}\right) \leq \left(1 + \epsilon\right) \AlphaDiv\left(\pibridge{\ell+1} || \pibridge{\ell} \right), \label{eq:tmp30198571}
		\end{align}
		for all $1\leq \ell\leq L$.
		Then Assumption \ref{assump:symetric_proj_diff} is satifed with the same $\epsilon$.
	\end{proposition}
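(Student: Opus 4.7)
The plan is to derive Assumption \ref{assump:symetric_proj_diff} (equivalently, with the index shift $\ell \mapsto \ell+1$) by a single application of Theorem \ref{th:generalizedPythagoreantheorem} combined with the variational characterization of $f_{\mathrm{proj}}^{(\ell)}$. Concretely, it suffices to establish, for each $\ell$, that
\begin{equation*}
\AlphaDiv\left(\pibridge{\ell+1} || \pibridgetilde{\ell}\right) \leq (1+\epsilon)\AlphaDiv\left(\pibridge{\ell+1} || \pibridge{\ell}\right) + \AlphaDiv\left(\pibridge{\ell} || \pibridgetilde{\ell}\right);
\end{equation*}
the boundary case $\ell = 0$ of the original assumption reduces trivially, since $\pibridge{0} = \pibridgetilde{0} = \rhoref$.

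The first step is to invoke Theorem \ref{th:generalizedPythagoreantheorem} with $f = \pibridge{\ell+1}$, $g = f_{\mathrm{proj}}^{(\ell)}$, and $h = \pibridgetilde{\ell}$. By the definition \eqref{eq:backwards_projection_measure}, $f_{\mathrm{proj}}^{(\ell)}$ is the $\alpha^*$-projection of $\pibridgetilde{\ell}$ onto the $\alpha$-geodesic joining $\pibridge{\ell}$ and $\pibridge{\ell+1}$, so the $\alpha^*$-geodesic from $f_{\mathrm{proj}}^{(\ell)}$ to $\pibridgetilde{\ell}$ meets that $\alpha$-geodesic orthogonally at the projection point; the sub-arc from $\pibridge{\ell+1}$ to $f_{\mathrm{proj}}^{(\ell)}$ is itself an $\alpha$-geodesic, so the Pythagorean hypothesis is satisfied. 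This yields the exact decomposition
\begin{equation*}
\AlphaDiv\left(\pibridge{\ell+1} || \pibridgetilde{\ell}\right) = \AlphaDiv\left(\pibridge{\ell+1} || f_{\mathrm{proj}}^{(\ell)}\right) + \AlphaDiv\left(f_{\mathrm{proj}}^{(\ell)} || \pibridgetilde{\ell}\right).
\end{equation*}
Hypothesis \eqref{eq:tmp30198571} then bounds the first summand by $(1+\epsilon)\AlphaDiv(\pibridge{\ell+1} || \pibridge{\ell})$; for the second summand, since $\pibridge{\ell}$ itself lies on the $\alpha$-geodesic (at the endpoint $t=0$), the defining argmin property of $f_{\mathrm{proj}}^{(\ell)}$ from \eqref{eq:backwards_projection_measure} immediately gives $\AlphaDiv(f_{\mathrm{proj}}^{(\ell)} || \pibridgetilde{\ell}) \leq \AlphaDiv(\pibridge{\ell} || \pibridgetilde{\ell})$. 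Adding the two bounds proves the claim.

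The only delicate ingredient is the orthogonality underpinning the Pythagorean decomposition, which is the standard first-order optimality condition for an $\alpha^*$-projection onto an $\alpha$-flat submanifold as developed in \cite{amari_information_2016}. Once this geometric fact is granted, the remainder is a direct algebraic chain.
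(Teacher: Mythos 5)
Your proof is correct and follows essentially the same route as the paper: the exact Pythagorean decomposition
$\AlphaDiv(\pibridge{\ell+1}||\pibridgetilde{\ell}) = \AlphaDiv(\pibridge{\ell+1}||f_{\mathrm{proj}}^{(\ell)}) + \AlphaDiv(f_{\mathrm{proj}}^{(\ell)}||\pibridgetilde{\ell})$,
then the argmin property of \eqref{eq:backwards_projection_measure} to bound the second summand by $\AlphaDiv(\pibridge{\ell}||\pibridgetilde{\ell})$, and the hypothesis \eqref{eq:tmp30198571} to bound the first. The only (welcome) additions beyond the paper's proof are your explicit check of the boundary case $\ell=1$, where $\pibridge{0}=\pibridgetilde{0}=\rhoref$ makes \eqref{eq:tmp01375861} trivial, and your fuller justification of the orthogonality needed for Theorem \ref{th:generalizedPythagoreantheorem}.
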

	\begin{proof}
		By construction, the generalized Pythagorean theorem \ref{th:generalizedPythagoreantheorem} gives
\begin{align*}
 D_\alpha(\pi^{(\ell+1)}||\widetilde \pi^{(\ell)})
 &= D_\alpha(\pi^{(\ell+1)}|| f_{\mathrm{proj}}^{(\ell)} ) + D_\alpha( f_{\mathrm{proj}}^{(\ell)} ||\widetilde \pi^{(\ell)}) \\
 &\overset{\eqref{eq:backwards_projection_measure}}{\leq}  D_\alpha(\pi^{(\ell+1)}|| f_{\mathrm{proj}}^{(\ell)} ) + D_\alpha( \pi^{(\ell)} ||\widetilde \pi^{(\ell)}) \\
 &\overset{\eqref{eq:tmp30198571}}{\leq}  (1+\varepsilon) D_\alpha(\pi^{(\ell+1)}|| \pi^{(\ell)} ) + D_\alpha( \pi^{(\ell)} ||\widetilde \pi^{(\ell)})
\end{align*}
which gives \eqref{eq:tmp01375861} and concludes the proof.
	\end{proof}
	
	\section{Implementing sequential maps for high-dimensional and data-driven problems }\label{sec:highdim}
	In this section, we bridge the gap between the previous presented methodology and analysis to its application.
	First, in Section~\ref{subsec:towards-high-dim}, we present a method to extend the presented method to higher dimensions, then, in Section~\ref{subsec:from_data} we develop an algorithm for learning sequential transport maps from data.

	\subsection{Towards high dimensions using subspace projections: the lazy maps}\label{subsec:towards-high-dim}
	The SoS densities presented do not scale well in high dimension due to the exponential increase of $|\mathcal{K}|$ with respect to $d$, see \eqref{eq:setK}.
	One strategy is to iteratively select a subspace of $\Xc$ in which to perform density estimation.
	In the following, we assume $\Xc = \R^d$.
	Given a matrix $U_\ell \in \R^{d \times r}$ with $r\ll d$ orthogonal columns, we consider a map of the form
	\begin{align}
		\mathcal{Q}_\ell(\bm x) = U_\ell \widetilde{\mathcal{Q}}_\ell(U_\ell^\top \bm x) + \left(1 - U_\ell U_\ell^\top\right) \bm x ,
	\end{align}
	where $\widetilde{\mathcal{Q}}_\ell:\R^r\rightarrow\R^r$ is a low-dimensional map acting on $\R^r$.
	By construction, $\mathcal{Q}_\ell$ is active on the low-dimensional subspace $\text{range}(U_\ell)$, hence called a \emph{lazy map} \cite{brennan_greedy_2020}. By iteratively working on different subspaces, density estimation can be performed in high dimensions.
	
	When working with unnormalized densities,~\cite{brennan_greedy_2020} identifies $U_\ell$ using the score function of the target density. When working with dataset, a similar approach was used in~\cite{baptista_dimension_2022}, where score function is first estimated using a neural network.
	In our work, we chose $U_\ell$ uniformly random on the Stiefel manifold. We let an exploration of other methods open for future work.
	
	\begin{remark}[Lazy maps for conditional density estimation]
		It is also possible to use lazy maps for conditional density estimation.
		In this case, two projections are needed, one on the conditioned variables and a second one on the others.
		Hence, a conditional map writes
		\begin{align}
			\mathcal{Q}_\ell\left(\begin{array}{c}
				\bm x \\
				\bm y
			\end{array}\right) = U_{\ell} \widetilde{\mathcal{Q}}_\ell \left( U_{\ell}^\top \left(\begin{array}{c}
				\bm x \\
				\bm y
			\end{array}\right)\right) + \left(1 - U_{\ell} U_{\ell}^\top \right) \left(\begin{array}{c}
				\bm x \\
				\bm y
			\end{array}\right),
		\end{align}
		where $U_{\ell}$ is a block-matrix of the form
		\begin{align}
			U_{\ell} =
			\begin{pmatrix}
				U_{\ell}^{\Xc} & 0 \\
				0 & U_{\ell}^{\Yc}
			\end{pmatrix},
	\end{align}
	with $U_l^{\Xc} \in \R^{d_x \times r_x}$ is the reduced dimension in $\bm x$ and $U_l^{\Yc} \in \R^{d_y \times r_y}$ is the reduced dimension in $\bm y$.
	\end{remark}

	\subsection{Sequential transport maps from data using diffusion}\label{subsec:from_data}
	
	Learning an inverse diffusion from data would learn a distribution which, when $t\rightarrow0$, is concentrated on a \emph{finite amount} of data points.
	As to avoid this, \cite{nichol_improved_2021} proposes to stop the inverse diffusion at a very small time.
	Similarly, we propose to use smaller and smaller time steps when $t$ approaches $0$, and use a stopping criterion based on cross validation.
	To do so, we modify the bridging density time function $t(u)$ from Eq.~\eqref{eq:proposed_time_diffusion} to
	\begin{equation}
		t_\text{data}(u) = \frac{1}{\rho} \log\left( \frac{1}{ 1-(1-B)^u }\right) ,
	\end{equation}
	so that $t_\text{data}(u) \rightarrow 0$ for $u \rightarrow \infty$ instead of $u \rightarrow 1$.
	\begin{figure}
		\centering
		\includegraphics[width=0.6\textwidth]{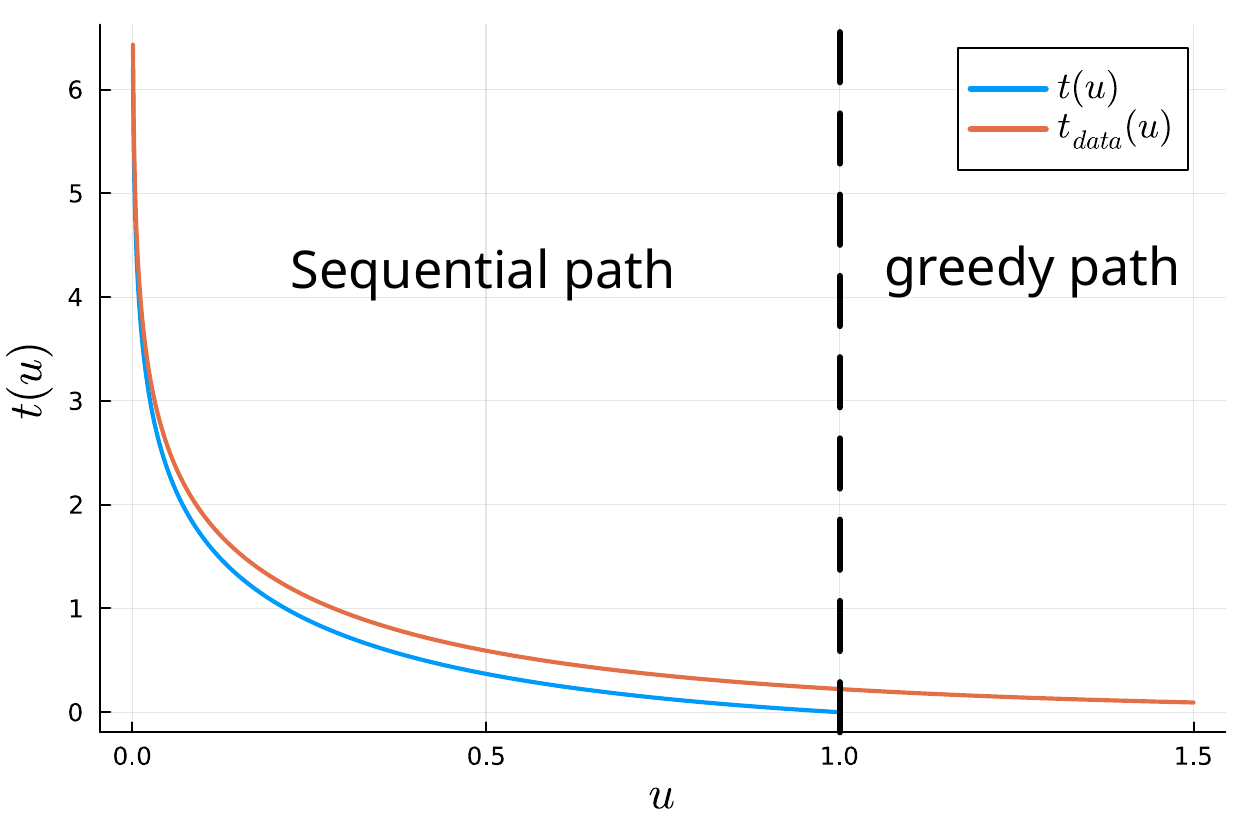}
		\caption{Comparison of $t(u)$ and $t_\text{data}(u)$ for $B=0.8$ and $\rho=1$.}\label{fig:diffusion_time_data}
	\end{figure}
	This scheduler is similar to $t(u)$ when $u \ll 1$, while for larger $u\gg 1$, the bridging density merely change, see Figure~\ref{fig:diffusion_time_data}.
	Given a $L_0$, we define the bridging densities by
	\begin{align}
		\pi^{(\ell)} = \pi_{t_\ell}, \qquad \text{with}\quad t_l = t_\text{data}(\ell/L_0) \quad \text{for}\quad \ell \geq 1.
	\end{align}
	With this choice of bridging densities, $\ell\leq L_0$ recovers the \emph{sequential regime} as analyzed by Theorem \ref{th:chiu_bound}, wheras $\ell\geq L_0$ corresponds to a \emph{greedy regime} where, by Assumption \ref{assump:better_by_w} with $\pi^{(\ell)}\approx\pi$, we obtain the convergence
	$\AlphaDiv ( \pi|| \pibridgetilde{\ell})=\mathcal{O}(\omega^\ell)$.
	
	For the cross validation, we split the data $X = \left\{x_i\right\}_{i=1}^N$, with $x_i \sim \pitar$, into a training $X_{\text{train}}$ and validation set $X_{\text{val}}$ and evaluate the negative log-likelihood on the validation data.
	\begin{algorithm}[h]
		\caption{Sequential transport from data with diffusion process}\label{algo:learning_from_data}
		\begin{algorithmic}[1]
			\Require{$X_{\text{train}}$, $X_{\text{val}}$, $L_0$}
			\State Initialize $\ell = 0$ and $\mathcal{T}_0 = \mathrm{id}$
			\State Define the negative log-likelihood $R(\mathcal{T}) = - \sum_{x\in X_{\text{val}}} \ln \mathcal{T}_\sharp \rhoref(x)$
			\While{$\ell\leq L_0$ or $R(\mathcal{T}_{\ell}) \leq R(\mathcal{T}_{\ell-1})$}
			\State $\ell \gets \ell + 1$
			\State $t_\ell = t_\text{data}(\ell/L_0)$\;
			\State Evolve $X_\ell = \{x^{(\ell)}_i\}$ according to Eq.~\eqref{eq:blurred_samples}
			\State Compute $\pibridgetilde{\ell}(x) = \argmin_{\tilde{\pi} \in \mathrm{SoS}} - \sum_{x \in X_{\text{train}}} \log \pibridgetilde{\ell}(x) + \int \d \tilde{\pi}$
			\State Update $\mathcal{T}_\ell = \mathcal{T}_{\ell-1} \circ \left(\text{KR map of }\pibridgetilde{\ell}(x)\right)$
			\EndWhile
			\State Set $L=\ell$
			\State \Return $\mathcal{T}_L$
		\end{algorithmic}
	\end{algorithm}
	This procedure is summarized in Algorithm~\ref{algo:learning_from_data}.

	\section{Numerical Examples}\label{sec:Numerical_examples}
	We demonstrate the method presented in this paper with numerical examples, implemented in Julia.
	The implementation of our method can be found at~\url{https://github.com/benjione/SequentialMeasureTransport.jl}.
	For solving each SDP problems $\min_{A\succeq0} \widehat{\mathrm{D}}_\alpha(\overline{\pi}^{(\ell)}||g_A)$,
	we use the \textit{JuMP.jl} optimization library~\cite{Lubin2023} together with \textit{Hypatia.jl}, as an interior point solver~\cite{coey2022solving}.
	For more details of how we use SDP, see Appendix~\ref{app:Impl_details}.
	
	\subsection{Multimodal density from data by diffusion process}
	We demonstrate the diffusion-based model \eqref{eq:density_diffusion} by estimating a bimodal density, given by
	\begin{align*}
		\pi_X(\bm x) \quad \text{with } \bm x \sim \begin{cases}
			\mathcal{N}\left(\left(\begin{array}{cc}
				2 & 2
			\end{array}\right)^\top, \text{diag}\left(\begin{array}{cc}
				0.1 & 0.5
			\end{array}\right)\right)  \qquad \text{with probability } 0.5 \\
			\mathcal{N}\left(\left(\begin{array}{cc}
				-2 & -2
			\end{array}\right)^\top, \text{diag}\left(\begin{array}{cc}
				0.5 & 0.1
			\end{array}\right)\right)  \qquad \text{else.} 
		\end{cases}
	\end{align*}
	The density is learned from $1000$ independent samples together with $L=20$ bridging densities.
	The SoS functions we employ for estimating the densities use Legendre polynomials up to order $4$.
	\begin{figure}
		\centering
		\begin{subfigure}{0.49\textwidth}
			\includegraphics[width=\textwidth]{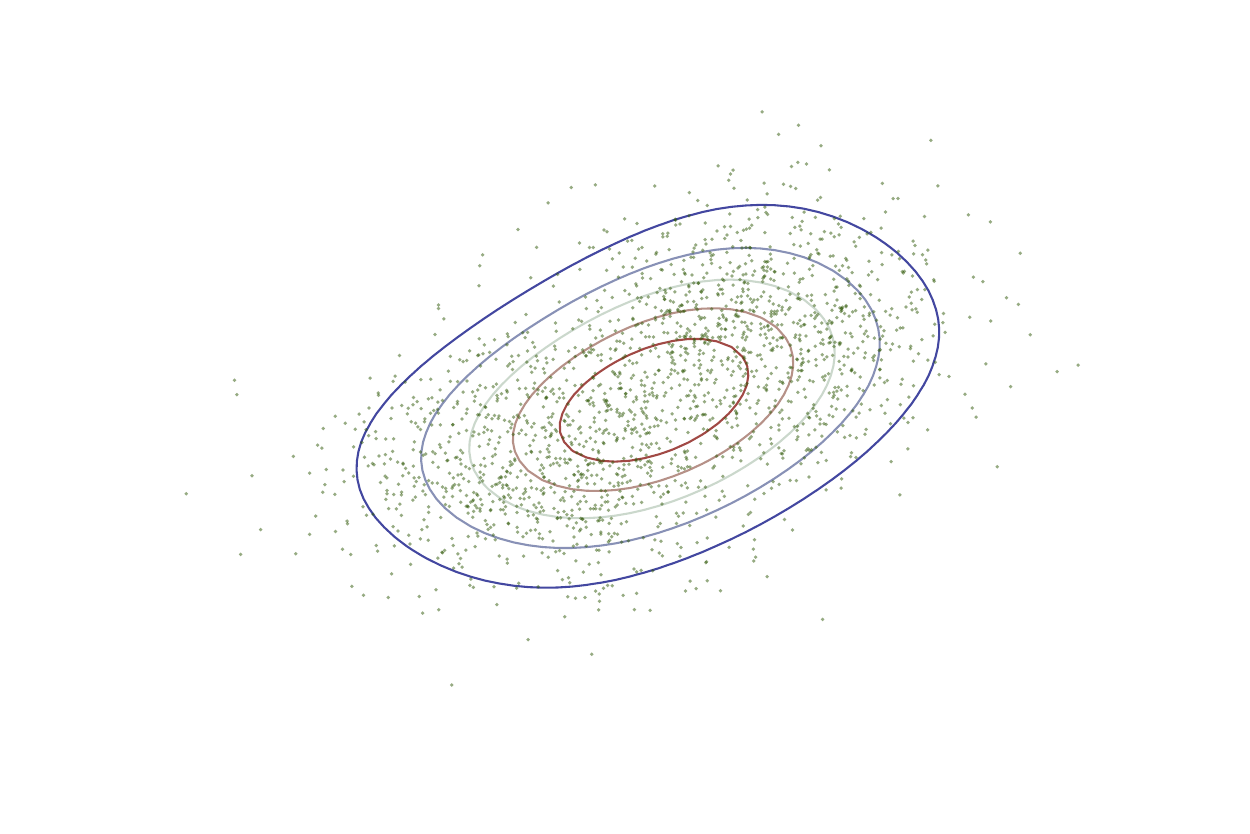}
			\caption{$t_1 \approx 1.0 $}
		\end{subfigure}
		\begin{subfigure}{0.49\textwidth}
			\includegraphics[width=\textwidth]{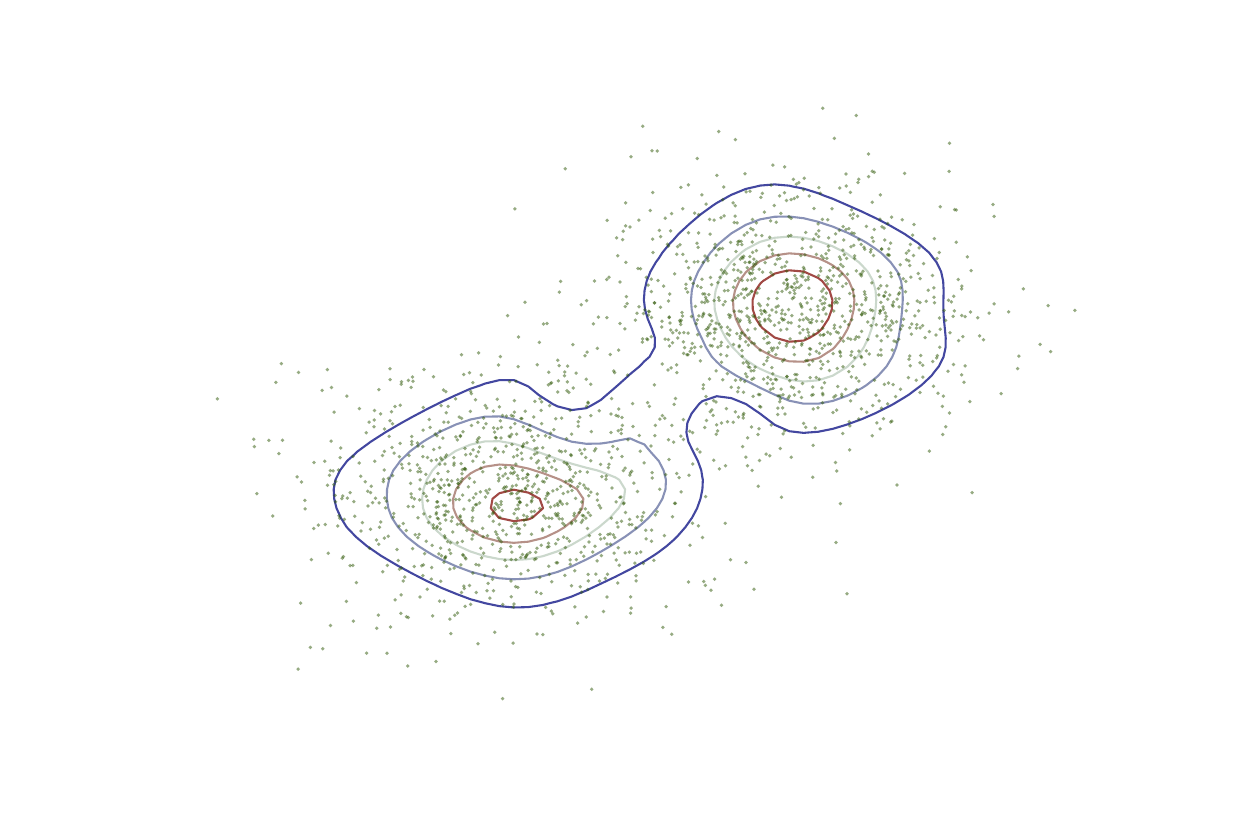}
			\caption{$t_2 \approx 0.71$}
		\end{subfigure}
		\begin{subfigure}{0.49\textwidth}
			\includegraphics[width=\textwidth]{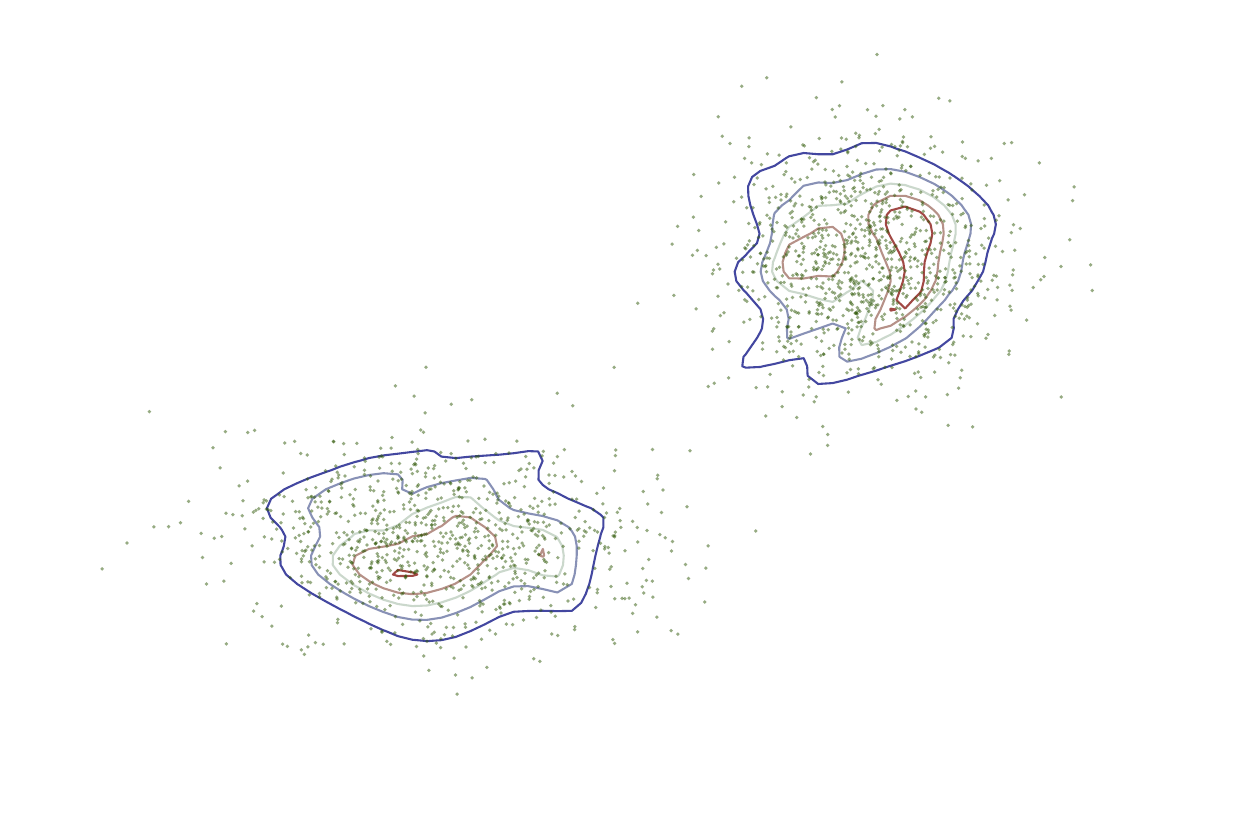}
			\caption{$t_{6} \approx 0.28 $}
		\end{subfigure}
		\begin{subfigure}{0.49\textwidth}
			\includegraphics[width=\textwidth]{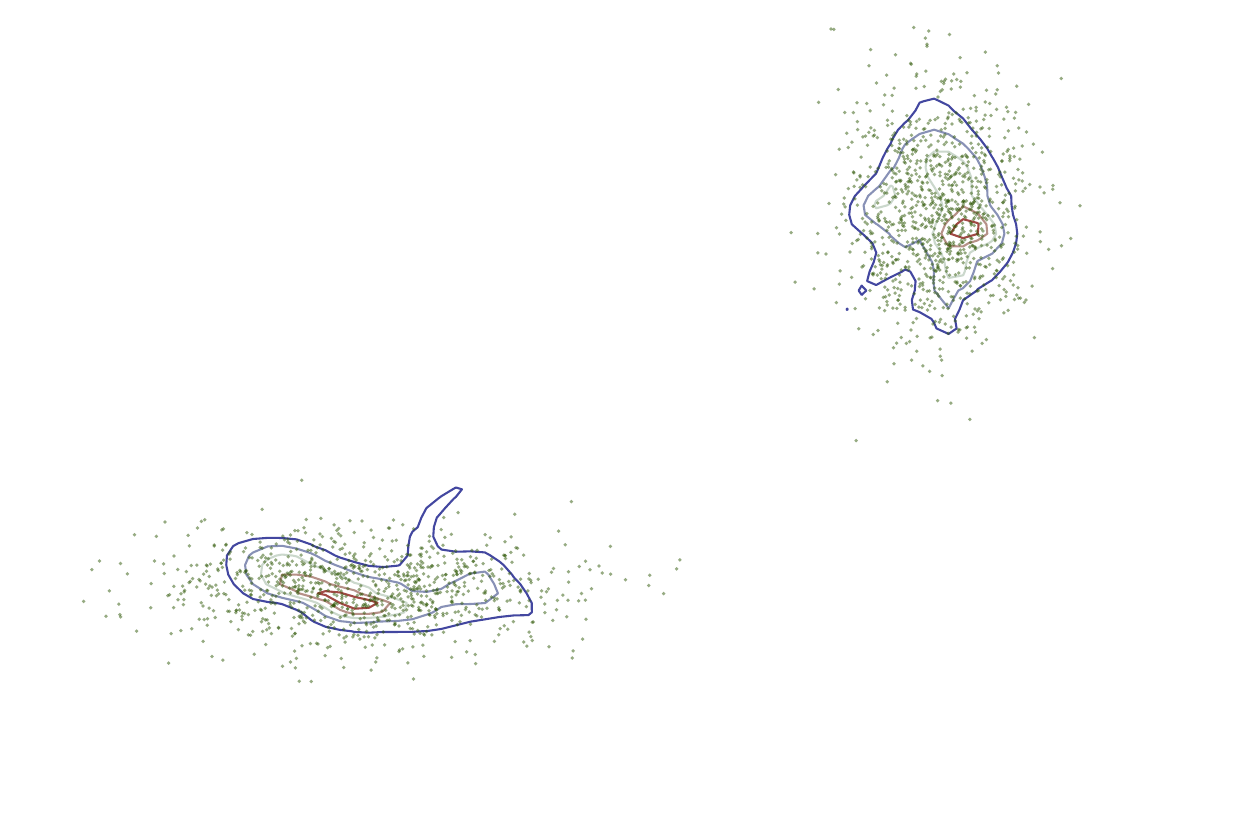}
			\caption{$t_{20} = 0$}
		\end{subfigure}
		\caption{Visualization of the reverse diffusion process at 4 different time steps, where the samples evolved after the given times are visualized as green dots and the estimated density as a contour.}
		\label{fig:multimodal_ML}
	\end{figure}
	In Figure~\ref{fig:multimodal_ML}, we provide a little cartoon of diffused samples and its density estimation at different timesteps in the process.
	In the first timestep, the density is simple enough that our model is able to capture it well.
	Following the reverse process, the previous estimations are used as preconditioners in order to learn the more and more complicated densities in the reverse diffusion process.
	
\subsection{Density from datasets}
We estimate the density of UCI datasets using Algorithm~\ref{algo:learning_from_data} and Lazy maps from Section~\ref{subsec:towards-high-dim}.
For the setup, we follow approaches from~\cite{baptista2023representation} and~\cite{uria_rnade_2014}.
We remove all discrete coefficients and coefficients with Pearson correlation higher than $0.98$.
We then randomly split the data into a training and test dataset, where $90\ \%$ are for training and $10\ \%$ for testing.
We evaluate the negative log likelihood function of the test dataset to determine how well the density fits the data.
We do this a total of $10$ times with different splits in train and test data for a better comparison.
This setup is identical with the one in~\cite{baptista2023representation} and we compare the results of the negative log likelihood directly.
These results are for the adaptive transport map (ATM) algorithm~\cite{baptista2023representation} and a multivariate Gaussian estimation.

ATM estimates a triangular map $\mathcal{T}:\R^d\rightarrow\R^d$ by parametrizing each of its components as $\mathcal{T}_i(x_{1},\hdots,x_i) = f_i(\bm x_{<i},0)+\int_{t=0}^{x_i} g(\partial_i f_i(\bm x_{<i},t)) \d t$ where $g(\xi)=\log(1+\exp(\xi))$ is the softplus function, where $f_i:\R^{i}\rightarrow\R$ is a polynomial function built in an adaptive (downward closed) polynomial set.

For our method, we use Algorithm~\ref{algo:learning_from_data} and apply density estimations with randomly picked $8$-dimensional Lazy maps.
For each of the Lazy maps, we choose polynomials of order up to $3$ so that $A \in \R^{70 \times 70}$.
\begin{table}
	\caption{Comparison of negative log likelihood function for different UCI datasets and density estimation methods. The best results are highlighted in bold.}\label{table:comparison_log_likelihood_UCI_datasets}
	\centering
	\begin{tabular}{p{2.0cm}|| c |c|c|c}
		\hline
		Dataset & (d, N) & SoS & Gaussian & ATM \\ 
		\hline
		White wine & $(11, 4898)$ &  $\mathbf{11.2 \pm 0.2}$ & $13.2 \pm 0.5$ & $\mathbf{11.0 \pm 0.2}$ \\ 
		Red wine & $(11, 1599)$ & $10.4 \pm 0.3$ & $13.2 \pm 0.3$ &  $\mathbf{9.8 \pm 0.4}$ \\ 
		Parkinsons & $(15, 5875)$ & $4.7 \pm 0.2$ & $10.8 \pm 0.4$ & $\mathbf{2.8 \pm 0.4}$ \\ 
		Boston & $(10, 506)$ & $6.6 \pm 0.5$ & $11.3 \pm 0.5$ & $\mathbf{3.1 \pm 0.6}$ \\ 
		\hline
	\end{tabular}
\end{table}
The results are depicted in Table~\ref{table:comparison_log_likelihood_UCI_datasets}.
Note that for this table, the negative log likelihood function is calculated on the normalized data.

Next, we demonstrate conditional density estimation from data. Again, we use the exact same setup as in~\cite{baptista2023representation} and compare our results directly.
The estimated conditional density is $\pi(x_d | \bm x_{< d})$, where $d$ is the dimension of the dataset.
To evaluate the estimation, we use the negative conditional log likelihood, $- \tiny \frac{1}{N}\sum_{i=1}^{N} \ln\left(\widetilde{\pi}^{(L)}(x_d^i | \bm x_{< d}^i)\right)$ with $\bm x$ being the test data from the datasets.
Additionally, we consider an enriched dataset for the intermediate problems where, instead of \eqref{eq:blurred_samples}, we use the samples
\begin{equation}\label{eq:blurred_samples_K}
	\bm X^{(i,k)}_\ell = \exp(-t_\ell)  \bm X^{(i)} + \sqrt{1-\exp(-2 t_\ell)}  \bm Z^{(i,k)} ,
\end{equation}
where $Z^{(i,k)}, (i,k)\in\{1,\hdots,N\}\times\{1,\hdots,K\}$ for are independently sampled from $\mathcal{N}(0,I_d)$.
The effect of enriching this dataset for the yacht dataset is shown in Figure~\ref{fig:yachtconditionallikelihoodpaths} (a).

\begin{figure}
	\centering
	\begin{subfigure}{0.48\textwidth}
		\includegraphics[width=\textwidth]{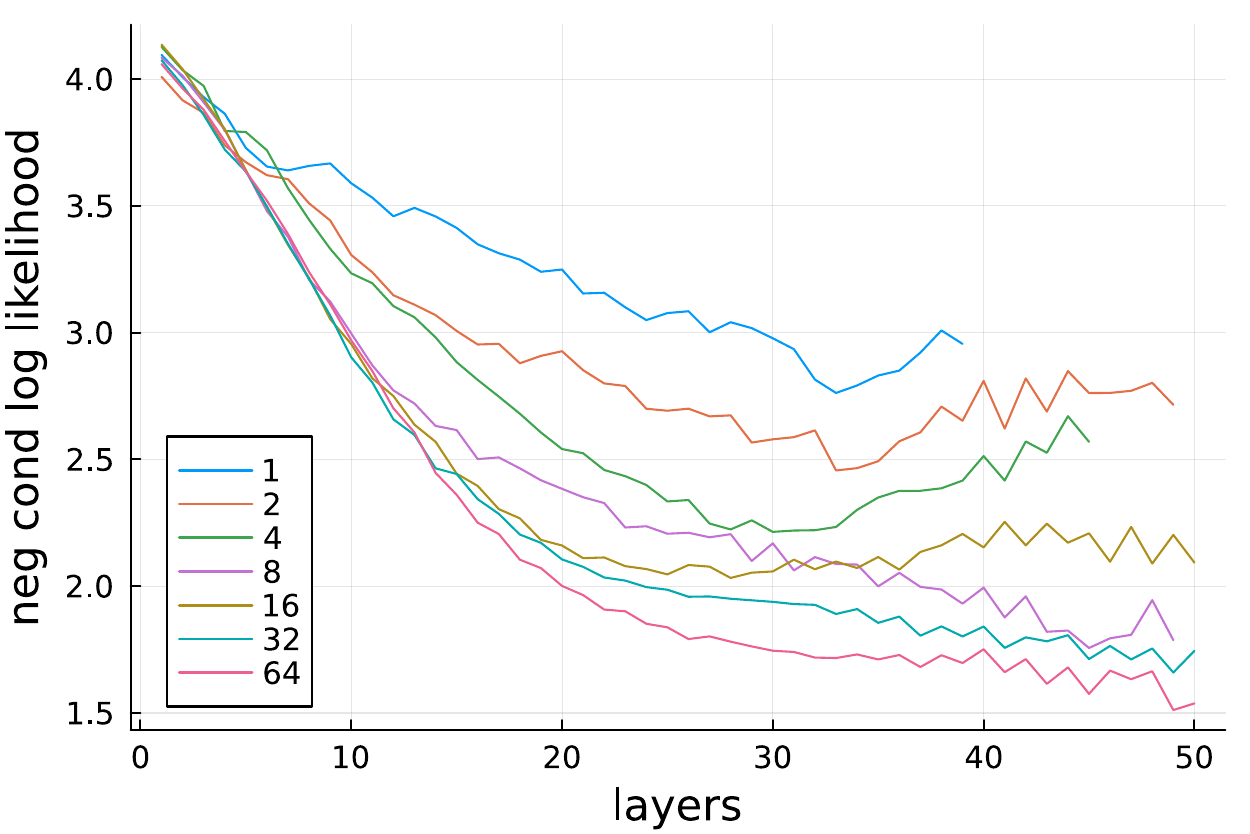}
		\caption{Conditional log likelihood paths for different amount of enrichment.}
	\end{subfigure}%
	\begin{subfigure}{0.48\textwidth}
		\includegraphics[width=\textwidth]{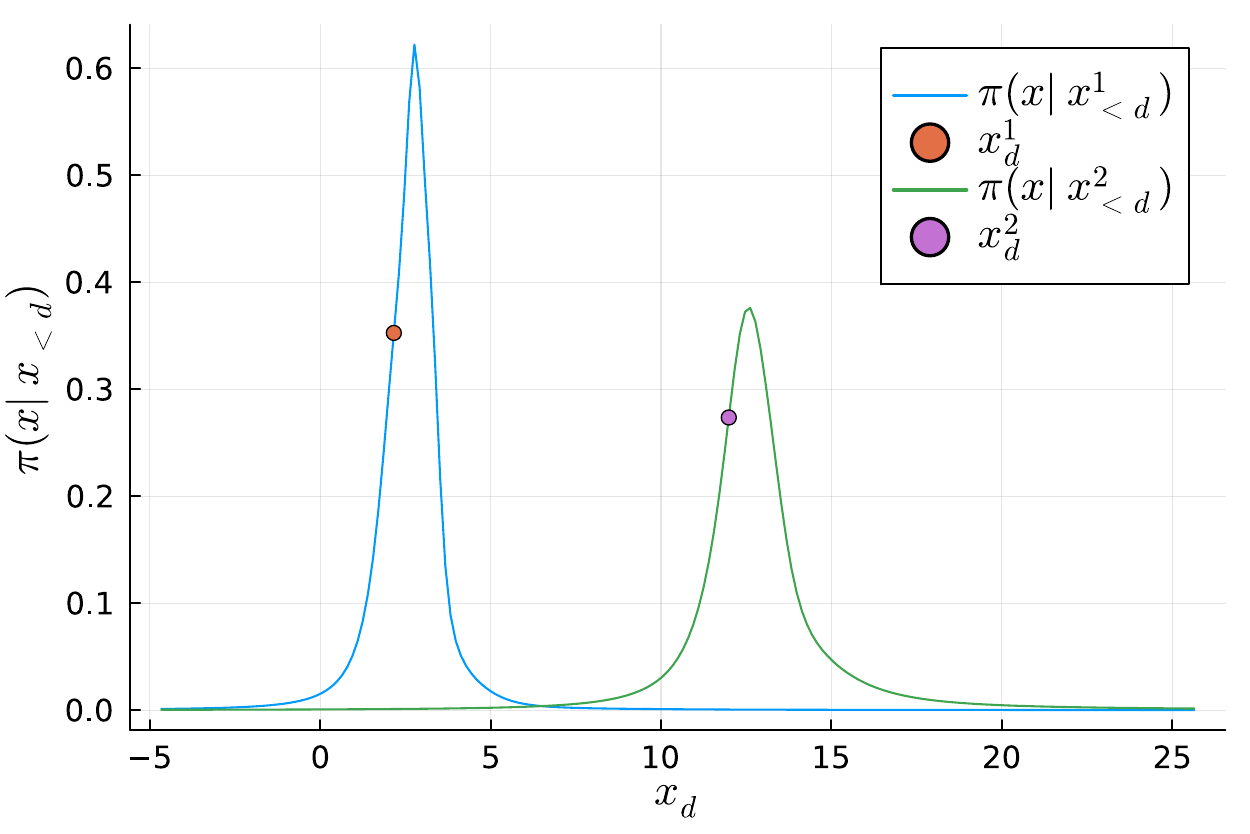}
		\caption{Estimated conditional densities of the yacht dataset evaluated at two test points.}
	\end{subfigure}
	\caption{(a) Conditional negative log likelihood of the validation set of the yacht dataset with different amount $K\in\{1,2,4,8,16,32,64\}$ of enrichment of the dataset \eqref{eq:blurred_samples_K}. (b) Conditional densities for two test points $x^1$ and $x^2$.}
	\label{fig:yachtconditionallikelihoodpaths}
\end{figure}

Once again, we compare our method with ATM~\cite{baptista2023representation} and an implementation of conditional normalizing flow, whose results we also take from~\cite{baptista2023representation}.
We compare with SoS using no enrichment (third column) and with an enrichment of 4 paths per training point (fourth column).
The results are shown in Table~\ref{table:comparison_cond_log_likelihood_UCI_datasets}.
\begin{table}
	\caption{Comparison of conditional negative log likelihood function for different UCI datasets and density estimation methods. The best results are highlighted in bold.}\label{table:comparison_cond_log_likelihood_UCI_datasets}
	\centering
	\begin{tabular}{p{1.5cm}|| c |c|c|c|c}
		\hline
		Dataset & (d, N) & SoS {\tiny(1 sample)} & SoS {\tiny(4 samples)} & ATM & NF \\
		\hline
		Boston & (12, 506) & $2.8 \pm 0.2$ & $\mathbf{2.5 \pm 0.2}$ & $\mathbf{2.6 \pm 0.2}$ & $\mathbf{2.4 \pm 0.1}$ \\
		Concrete & (9, 1030) & $3.4 \pm 0.3$ & $\mathbf{3.1 \pm 0.1}$ & $\mathbf{3.1 \pm 0.1}$ & $\mathbf{3.2 \pm 0.2}$ \\
		Energy & (10, 768) & $2.2 \pm 0.2$ & $\mathbf{1.7 \pm 0.2}$ & $\mathbf{1.5 \pm 0.1}$ & $\mathbf{1.7 \pm 0.3}$ \\ 
		Yacht & (7,308) & $3.4 \pm 1.1$ & $2.0 \pm 0.6$ & $\mathbf{0.5 \pm 0.2}$ & $1.3 \pm 0.5$\\ 
		\hline
	\end{tabular}
\end{table}
SoS with enrichment archives comparable scores to ATM and NF.

In Figure~\ref{fig:yachtconditionallikelihoodpaths} (b) the estimated conditional densities evaluated at two validation points is depicted, which shows that the estimated densities fit the validation points well.

	\subsection{Susceptible-Infected-Removed (SIR) model}
	We consider an example from epidemiology and follow the setting in~\cite{cui_self-reinforced_2023}.
	Having access to the SIR model, a model describing the spread of a disease, see~\cite{kermack_contribution_1927}, we want to calibrate parameters of the model given observations on the amount of infected persons at different points in time.
	The SIR model is given by
	\begin{alignat*}{2}
		& \dot{S} = -\beta I S \\
		& \dot{I} = \beta S I - \gamma I  \\
		& \dot{R} = \gamma I.
	\end{alignat*}
	The parameters to be determined are $\gamma$ and $\beta$.
	The ODE is simulated for $t \in [0, 5]$ with $6$ equidistant observations $y_{j}$ in $I$ perturbed with noise $\epsilon_{j} \sim \mathcal{N}(0, 1)$, so that $y_{j} = I(t_j) + \epsilon_{j}$.
	We pose the calibration problem in an Bayesian setting, where the prior belief on the parameters is uniform in $[0,2]$ for all parameters and the likelihood function is gaussian and given by
	\begin{align*}
		\mathcal{L}(x|y) \propto \exp\left(\frac{- \sum_{j=1}^{6} (I(\frac{5j}{6}; x) - y_{j})^2}{2}\right) ,
	\end{align*}
	We approximate the posterior using $L=4$ layers with downward closed tensorized polynomials in $\Phi$ of maximum degree $6$.
	We choose tempered bridging densities as in Eq.~\eqref{eq:tempered} with $\beta_1 = 1/8$, $\beta_2 = 1/4$, $\beta_3 = 1/2$ and $\beta_4 = 1$ and use $1000$ evaluations of the unnormalized posterior for each layer.
	\begin{figure}[htbp]
		\centering
		\begin{subfigure}{0.49\textwidth}
			\includegraphics[width=\textwidth]{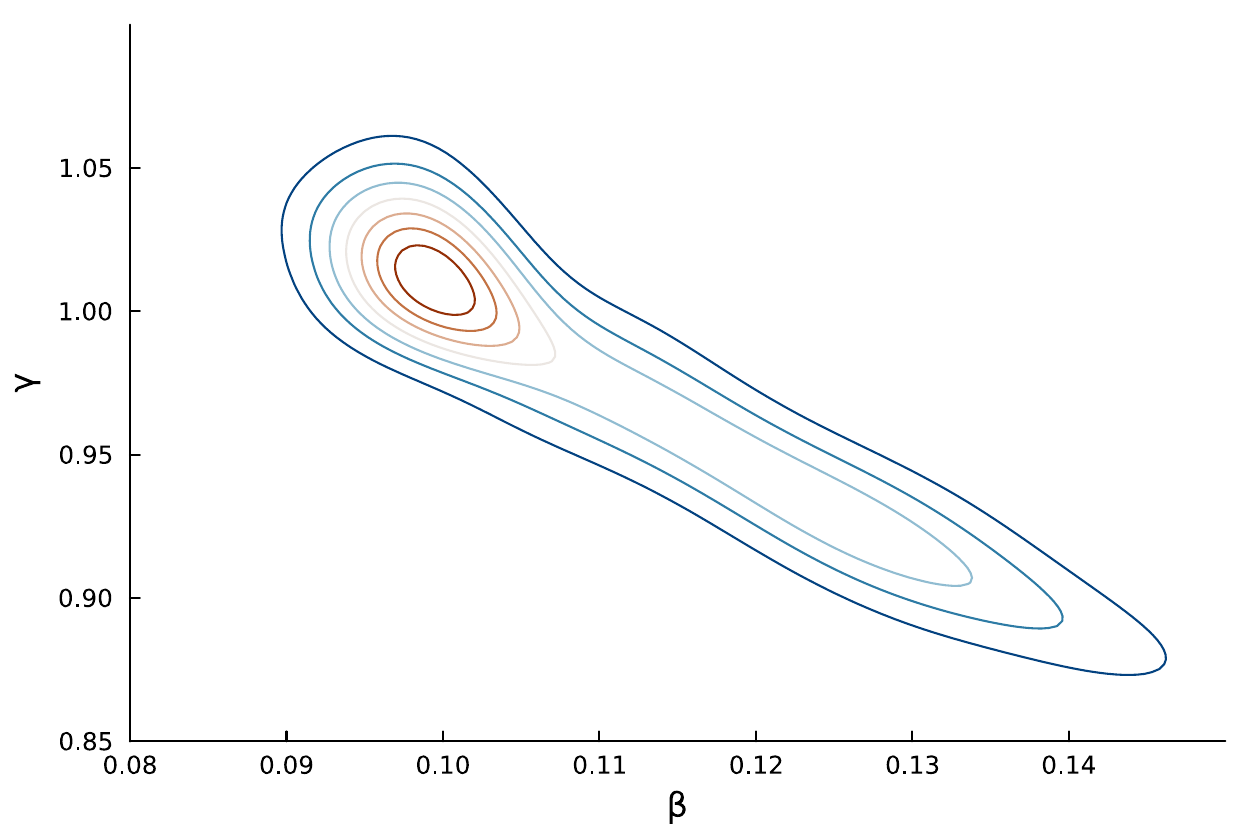}
			\caption{Variational posterior ($\alpha = 2$)}
		\end{subfigure}
		\begin{subfigure}{0.49\textwidth}
			\includegraphics[width=\textwidth]{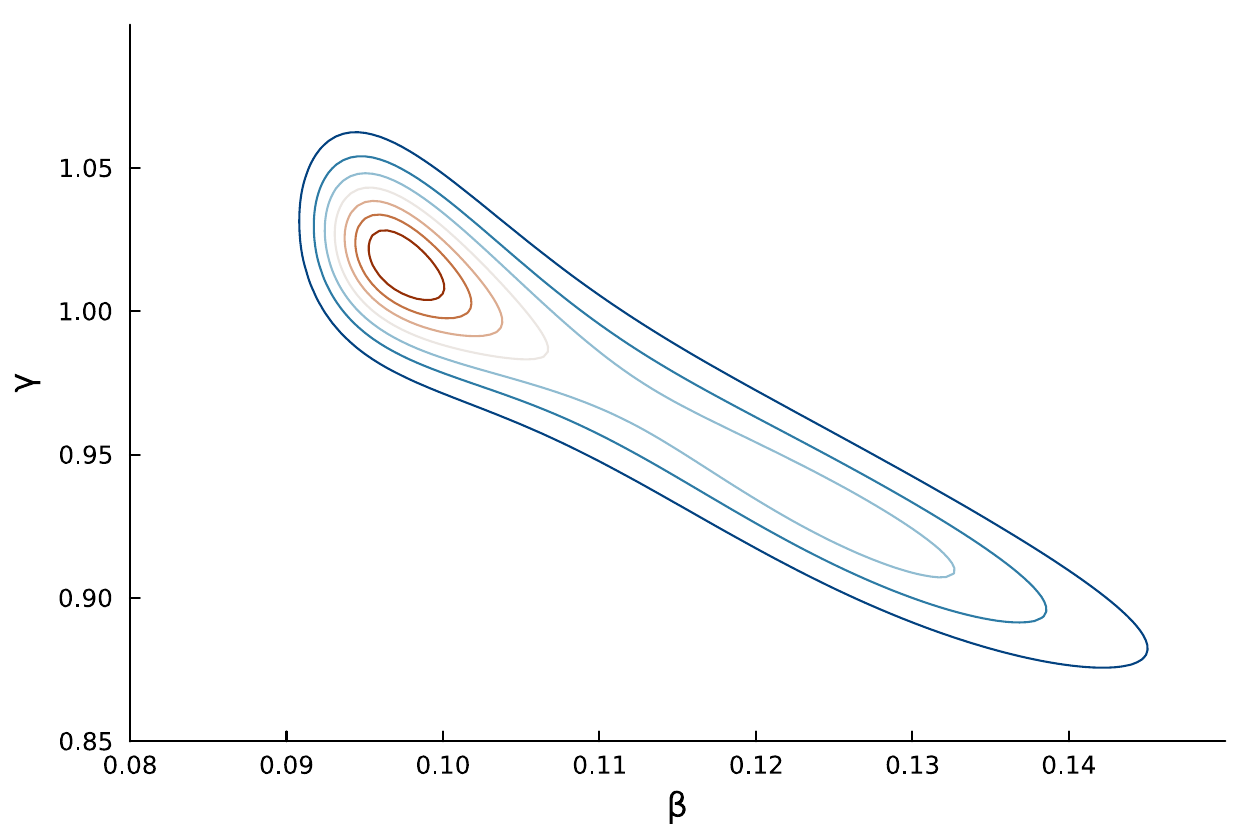}
			\caption{Unnormalized posterior}
		\end{subfigure}
		\caption{Comparison of the variational density (a) and the unnormalized posterior density (b) for the parameter space around the center of mass, for $\gamma \in [0.85, 1.1]$ and $\beta \in [0.08, 0.15]$.}
		\label{fig:SIR_comparison_variational_unnormalized}
	\end{figure}
	A comparison between the contours of the variational posterior and unnormalized posterior around the center of mass is depicted in Figure~\ref{fig:SIR_comparison_variational_unnormalized}.
	Visually, the density estimation fits the true density well. To quantify our results, we consider the estimated density for the task of self-normalized importance sampling and calculate the \textit{effective sampling size} (ESS).
	To do this, we run the density estimation $10$ times with drawing new noise $\epsilon_{j}$ in every iteration and calculate the ESS for density estimation using $\alpha$-divergences between $0.4$ and $3.0$.
	The results are shown in Figure~\ref{fig:ESS_of_alpha_divs}.
	\begin{figure}[htbp]
		\centering
		\includegraphics[width=0.5\textwidth]{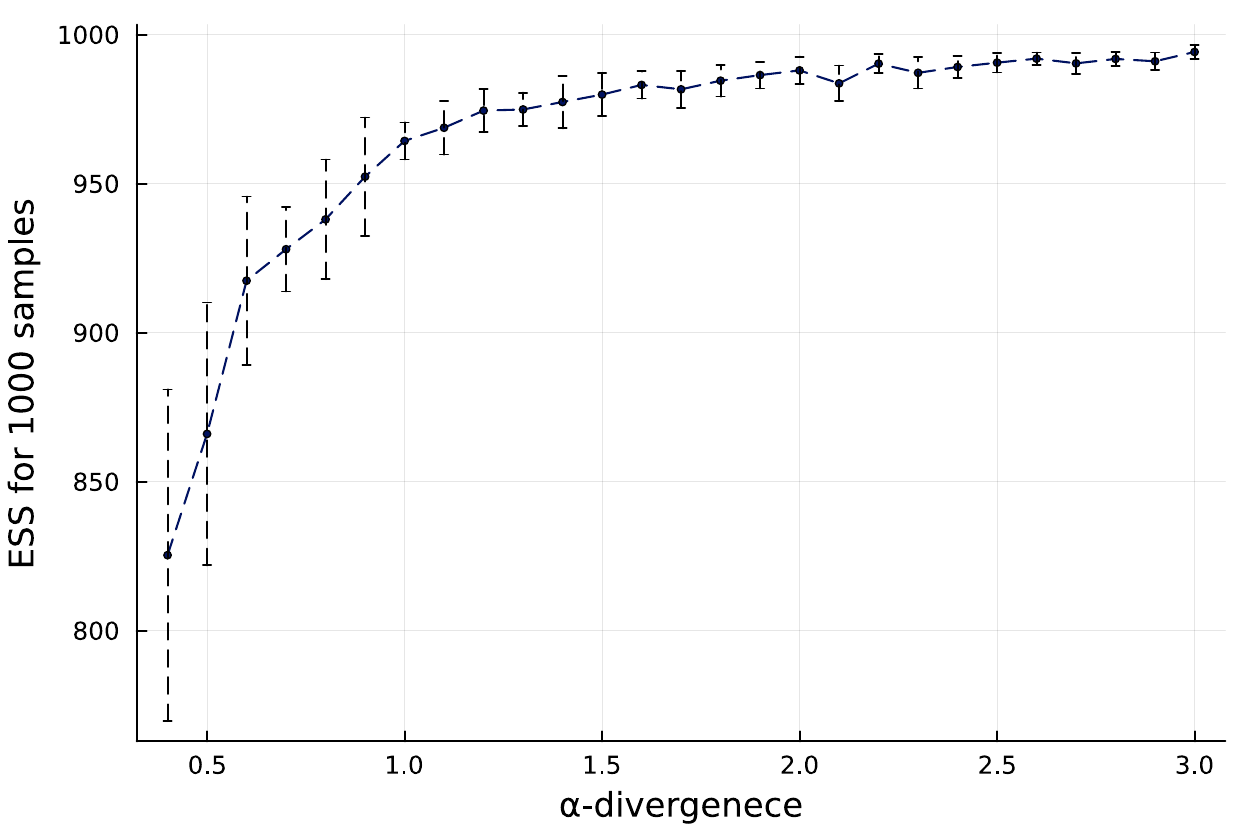}
		\caption{ESS for $\alpha$-divergence between $\alpha=0.4$ and $\alpha=3.0$ for $10$ runs on the SIR model with different realizations of noise. The ESS is given as effective samples for $1000$ samples.}\label{fig:ESS_of_alpha_divs}
	\end{figure}
	As expected, for higher $\alpha$ a better ESS is achieved.
	This is because the ESS can be written as $\text{ESS} = 1/(1 + \Var_{\pitartilde}\left[Z_{f}\right])$ and minimizing the $\chi^2$-divergence is equivalent to minimizing $\Var_{\pitartilde}\left[Z_{f}\right]$ (e.g. see~\cite{minka_divergence_2005}).
	The strong improvement in ESS for $\alpha \geq 1$ can be explained by the zero-avoiding property of the $\alpha$-divergences, which is a requirement for successful importance sampling.

	\section{Conclusion and Outlook}

	We introduced SoS functions for density approximation utilizable with a wide range of statistical divergences.
	We embedded this tool into the framework of measure transport~\cite{marzouk_introduction_2016} and used it in the context of sequential transport, as in~\cite{cui_scalable_2023}.
	The SoS functions we proposed are a generalization of~\cite{cui_self-reinforced_2023} and~\cite{westermann_measure_2023} with an extension to general divergences.
	To the best of our knowledge, our work is the first in measure transport to propose a linear function space in $\R_+^n$ to be utilizable with any convex statistical divergence.
	
	We provided a convergence analysis for sequential density estimation based on information geometry and compared it to existing results for the Hellinger distance in~\cite{cui_scalable_2023}.
	We demonstrated the applicability of our method with numerical examples.

	One of the remaining challenges to improve the scalability of our method is the sampling procedure for the SoS density estimation.
	It is well established that the sample-based approximation of a loss function has a strong influence on the approximation result. For instance, optimal sampling strategies have been proposed for the discretization of least squares problems, see~\cite{cohen_optimal_2017,dolbeault2022optimal}.
	Such strategy has been used from density estimation in the squared Hellinger distance in \cite{westermann_measure_2023,cui_self-reinforced_2023}.
	Adapting such strategies for the discretization of $\alpha$-divergences could greatly improve the efficiency of the proposed method.
	
	Another open topic is the choice of bridging densities.
	In this paper, we employ tempered densities for Bayesian inference (or diffusion-based densities when working with data) because it is the most convenient choice in practice. Recently, diffusion-based densities have been used in~\cite{grenioux2024stochastic,akhound2024iterated} to accelerate MCMC sampling, with an extra computational cost for evaluating the bridiging densities.
	A natural question, left for future work, is whether there exists an optimal choice of bridging densities.
	
	In our numerical experiments, we used Lazy maps with random subspaces to deal with problems of dimension greater than 8.
	More advanced techniques can be employed here, such as tensor-train \cite{cui_deep_2021} or gradient-based dimension reduction techniques \cite{li2024principal}.

	\backmatter
	
%
%
%
	
\bmhead{Acknowledgements}
We would like to thank Ricardo Baptista for helpful discussions and comparisons between ATM and SoS, and Marylou Gabrié for great discussions on sampling with diffusion models.
The authors thank the IDEX for funding BZ a mobility grant and the partial support offered by the School of Mathematics and Statistics during his visit at the University of Sydney.
TC is supported by the ARC Discovery Project DP210103092. 
OZ acknowledges support from the ANR JCJC project MODENA (ANR-21-CE46-0006-01).
	
%
%
%
%
%
%
%
%
	
	\begin{appendices}

		\section{Proofs}

		\subsection{Proof of Equation \eqref{eq:AlphaDivNormalized}}
		\label{proof:measure_prob_bound_alpha_div}
		By definition~\eqref{eq:def_alpha_divergences} of the $\alpha$-divergence, we can write for $\alpha\notin\{0;1\}$
		\begin{align*}
			\AlphaDiv\left(f || g\right)
			&= \int \frac{Z_f^\alpha}{Z_g^\alpha} \frac{\left(\frac{\pi_f}{\pi_g}\right)^\alpha - \left(\frac{Z_f}{Z_g}\right)^{\alpha} - 1 + 1}{\alpha (\alpha - 1)} Z_g \d \pi_g + \int \frac{Z_f}{Z_g} \frac{\frac{\pi_f}{\pi_g} - \frac{Z_f}{Z_g} - 1 + 1}{\alpha - 1} Z_g \d \pi_g \\
			&= \frac{Z_f^{\alpha}}{Z_g^{\alpha - 1}} \AlphaDiv\left(\pi_f || \pi_g\right) + Z_g \left(\frac{\frac{Z_f^\alpha}{Z_g^\alpha} - 1}{\alpha (\alpha - 1)} + \frac{\frac{Z_f}{Z_g} - 1}{\alpha - 1}\right) \\
			&= \frac{Z_f^{\alpha}}{Z_g^{\alpha - 1}} \AlphaDiv\left(\pi_f || \pi_g\right) + Z_g \phi_{\alpha}\left(\frac{Z_f}{Z_g}\right).
		\end{align*}
		Similar derivations yield \eqref{eq:AlphaDivNormalized} for $\alpha\in\{0;1\}$.

		\subsection{Proof of Equation \eqref{eq:TransportProperty}}\label{proof:pushforward_pullback_equiv_in_alpha_div}
		Recall the formula $\mathcal{T}_\sharp g (\bm x)= g\circ \mathcal{T}^{-1}(\bm x) |\nabla \mathcal{T}^{-1}(\bm x)|$ for any unnormalized density $g$. Thus, the change of variable $x=\mathcal{T}(\bm y)$ permits to write
		\begin{align*}
			\AlphaDiv\left(f || \mathcal{T}_\sharp g \right)
			&= \int \phi_{\alpha}\left(\frac{f(\bm x)}{g\circ \mathcal{T}^{-1}(\bm x) |\nabla \mathcal{T}^{-1}(\bm x)|}\right)  g\circ \mathcal{T}^{-1}(\bm x) |\nabla \mathcal{T}^{-1}(\bm x)| \d \bm x \\
			&= \int \phi_{\alpha}\left(\frac{f\circ\mathcal{T}(\bm y)|\nabla \mathcal{T}(\bm y)|}{g(\bm y) }\right)  g(\bm y) \d \bm y \\
			&= \AlphaDiv\left(\mathcal{T}^\sharp f ||  g \right),
		\end{align*}
		which is \eqref{eq:TransportProperty}.

		\subsection{Proof of Proposition~\ref{propo:SoS_marginalization}}\label{proof:marginalization_SoS}
		This proof is inspired by~\cite{cui_self-reinforced_2023} where the marginalization is done for squared polynomials only.
		We can write
		\begin{align*}
			\mathcal{L}g_A(\bm x)&=\int \Phi(\bm x_{-\ell},x_\ell')^\top A \Phi(\bm x_{-\ell},x_\ell') \d\mu_\ell(x_\ell') \\
			&= \sum_{\alpha, \beta \in\mathcal{K}} A_{\sigma(\alpha) \sigma(\beta)} \left(\prod_{i\neq\ell} \phi_{\alpha_i}^i(x_i)\right) \left(\prod_{i\neq\ell} \phi_{\beta_i}^i(x_i)\right) \int \phi_{\alpha_l}(x_l') \phi_{\beta_l}(x_l') \d\mu_\ell(x_l') \\
			&= \sum_{\alpha, \beta\in\mathcal{K}} \hat{\Phi}_{\sigma(\beta)} A_{\sigma(\alpha) \sigma(\beta)} M_{\sigma(\alpha) \sigma(\beta)} \hat{\Phi}_{\sigma(\alpha)} \\
			&= \hat{\Phi}(\bm x_{-\ell})^\top \left(A \odot M\right) \hat{\Phi}(\bm x_{-\ell})
		\end{align*}
		where
		\begin{align*}
			M_{\sigma(\alpha) \sigma(\beta)} = \int \phi_{\alpha_l}(x_l') \phi_{\beta_l}(x_l') \d\mu_\ell(x_l')  ,
		\end{align*}
		and $\hat{\Phi}_{\sigma(\alpha)}(\bm x_{-\ell}) = \prod_{i\neq \ell} \phi_{\alpha_i}^i(x_i)$.
		Note that the vector $\hat{\Phi}(\bm x_{-\ell})$ has size $m=|\mathcal{K}|$ and contains duplicated entries of the vector $\Phi_{-\ell}(\bm x_{-\ell})$ defined by $\hat{\Phi}_{\sigma_{-\ell}(\alpha_{-\ell})}(\bm x_{-\ell}) = \prod_{i\neq \ell} \phi_{\alpha_i}^i(x_i)$.
		By definition \eqref{eq:defP} of the matrix $P\in\R^{|\mathcal{K}_{-\ell}|\times m}$, we have $\hat{\Phi}_{\sigma(\alpha)}(x_{-\ell}) = P^\top \Phi_{-\ell}(x_{-\ell})$. This shows $M^{-\ell} = P(A \odot M)P^\top$ and concludes the proof.

		\subsection{Proof of Proposition \ref{prop:scheduler_tempering}}\label{proof:scheduler_tempering}
Recall the forumla \eqref{eq:def_alpha_divergences} for the $\alpha$-divergence
$$
 D_\alpha(f||g) = \frac{1}{\alpha(\alpha-1)} \left( \int (f/g)^\alpha g - g  \d \bm x\right) - \frac{\int f-g \d \bm x }{\alpha-1} ,
$$
so that for $\pi_{\beta}(\bm x)=\mathcal{L}(\bm x)^\beta\pi_0(\bm x)$ we have
\begin{align*}
 D_\alpha( \pi_{\beta+\Delta\beta} || \pi_{\beta} )
 &= \frac{1}{\alpha(\alpha-1)} \left( \int \mathcal{L}^{\alpha\Delta\beta}  -1 \d \pi_\beta  \right) - \frac{\int \mathcal{L}^{\Delta\beta} -1 \d \pi_\beta }{\alpha-1} \\
 &= \frac{ \int (\mathcal{L}^{\alpha\Delta\beta}   -\alpha  \mathcal{L}^{\Delta\beta}) \d \pi_\beta + (\alpha-1)\int \d \pi_\beta }{\alpha(\alpha-1)} .
\end{align*}
Using the Tayor expansions
\begin{align*}
 \mathcal{L}^{\alpha\Delta\beta}
 &= 1+\alpha\Delta\beta\log(\mathcal{L})+\frac{(\alpha\Delta\beta\log(\mathcal{L}))^2}{2} + \mathcal{O}(\Delta \beta^3) \\
 \alpha\mathcal{L}^{\Delta\beta}
 &= \alpha+\alpha\Delta\beta\log(\mathcal{L})+\frac{\alpha(\Delta\beta\log(\mathcal{L}))^2}{2} + \mathcal{O}(\Delta \beta^3) ,
\end{align*}
we get
\begin{align}
 D_\alpha( \pi_{\beta+\Delta\beta} || \pi_{\beta} )
 &= \frac{1}{\alpha(\alpha-1)} \int \frac{(\alpha\Delta\beta\log(\mathcal{L}))^2}{2} -\frac{\alpha(\Delta\beta\log(\mathcal{L}))^2}{2} \d \pi_\beta  + \mathcal{O}(\Delta \beta^3) \nonumber\\
 &= \frac{\Delta\beta^2}{2} \int \log(\mathcal{L})^2  \d \pi_\beta  + \mathcal{O}(\Delta \beta^3) \nonumber\\
 &= \frac{\Delta\beta^2}{2} C''(\beta)  + \mathcal{O}(\Delta \beta^3) , \label{eq:tmp207895}
\end{align}
where we used the fact that the normalizing constant $C(\beta) = \int \mathcal{L}^\beta \d\pi_0$ is such that $C''(\beta)= \int \mathcal{L}^\beta \log(\mathcal{L})^2 \d\pi_0 = \int \log(\mathcal{L})^2 \d \pi_\beta$.
Replacing $\beta$ with $\beta_\ell$ and $\Delta\beta$ with
$$
 \Delta\beta_\ell = \beta_{\ell+1}-\beta_\ell \overset{\eqref{eq:scheduler_beta}}{=} \frac{\beta'(\ell/L)}{L} + \mathcal{O}\left(\frac{1}{L^2}\right) ,
$$
we have that \eqref{eq:tmp207895} becomes $D_\alpha( \pi_{\beta_{\ell+1}} || \pi_{\beta_\ell} ) = \frac{\beta'(\ell/L)^2 }{2L^2} C''(\beta_\ell)  + \mathcal{O}(1/L^3)$. Then, the choice \eqref{eq:scheduler_tempering_unnormalized} yields \eqref{eq:control_tempering_unnormalized}.

\subsection{Proof of Proposition \ref{prop:scheduler_diffusion}}\label{proof:scheduler_diffusion}

The change of variable $\bm z=\frac{\bm x-e^{-\Delta t}y}{\sqrt{1-e^{-2\Delta t}}}$ permits to write
\begin{align*}
 \pi_{t+\Delta t}(\bm x)
 &= \int \kappa_{t+\Delta t}(\bm x,\bm y)\pi(\bm y) \d \bm y = \int \kappa_{\Delta t}(\bm x,\bm y)\pi_t(\bm y) \d \bm y \\
 &= \frac{1}{(2\pi(1-e^{-2\Delta t}))^{d/2}} \int \exp\left( -\frac{\|\bm x-e^{-\Delta t}\bm y\|^2}{2(1-e^{-2\Delta t})}\right)\pi_t(\bm y) \d \bm y \\
 &= \frac{1}{(2\pi(1-e^{-2\Delta t}))^{d/2}} \int \exp\left( -\frac{\|\bm z\|^2}{2}\right)\pi_t( e^{\Delta t}(x-\sqrt{1-e^{-2\Delta t}}\bm z) ) (e^{\Delta t}\sqrt{1-e^{-2\Delta t}})^d \d \bm z \\
 &= \frac{e^{d\Delta t}}{(2\pi)^{d/2}} \int \exp\left( -\frac{\|\bm z\|^2}{2}\right)\pi_t( e^{\Delta t}(\bm x-\sqrt{1-e^{-2\Delta t}}\bm z) )  \d \bm z \\
 &= e^{d\Delta t}\E\left[ \pi_t \left( e^{\Delta t}(\bm x-\sqrt{1-e^{-2\Delta t}}Z)  \right)   \right]
\end{align*}
Using the Taylor expansions
\begin{align*}
 e^{d\Delta t} &= 1+ d\Delta t + \tfrac{d^2}{2}\Delta t^2 +\mathcal{O}(\Delta t^3) \\
 e^{\Delta t}(\bm x-\sqrt{1-e^{-2\Delta t}}\bm z)
 &= x \underbrace{ -\sqrt{2\Delta t}\bm z+\Delta t \bm x - \tfrac{\sqrt{2}}{2}\Delta t^{3/2}  \bm z + \tfrac{1}{2}\Delta t^2 \bm x - \tfrac{5\sqrt{2}}{24}\Delta t^{5/2} \bm z}_{=\bm u}+ \mathcal{O}(\Delta t^{3})
\end{align*}
we deduce
\begin{align*}
 &\E\left[ \pi_t\left(  \bm x + U + \mathcal{O}(\Delta t^{3}) \right)   \right] \\
 &= \E\left[ \pi_t(\bm x)+\nabla\pi_t^\top U + \frac{U^\top\nabla^2\pi_t U}{2} + \mathcal{O}(\Delta t^{3} )   \right] \\
 &=  \pi_t +\nabla\pi_t^\top \left( \Delta t \bm x + \tfrac{\Delta t^2}{2} \bm x \right) + \frac{2\Delta t \E[Z^\top \nabla^2\pi_t Z] + 2\Delta t^2 \E[Z^\top \nabla^2\pi_t Z] + \Delta t^2 \bm x^\top \nabla^2\pi_t \bm x}{2} + \mathcal{O}(\Delta t^{3} )    \\
 &=  \pi_t +\nabla\pi_t^\top \left( \Delta t \bm x + \tfrac{\Delta t^2}{2} \bm x \right) +  \Delta t \trace(\nabla^2\pi_t) + \Delta t^2 \trace(\nabla^2\pi_t) + \tfrac{\Delta t^2}{2} \bm x^\top \nabla^2\pi_t \bm x + \mathcal{O}(\Delta t^{3} )  \\
 &=  \pi_t
 + \Delta t \left(\nabla\pi_t^\top \bm x + \trace(\nabla^2\pi_t) \right)
 + \Delta t^2 \left( \tfrac{1}{2} \nabla\pi_t^\top \bm x +\trace(\nabla^2\pi_t) + \tfrac{1}{2} \bm x^\top \nabla^2\pi_t \bm x  \right)  + \mathcal{O}(\Delta t^{3})
\end{align*}
and then
\begin{align*}
 \pi_{t+\Delta t}
 &= \left(1+ d\Delta t + d^2\Delta t^2/2 +\mathcal{O}(\Delta t^3)\right) \E\left[ \pi_t\left(  x + U + \mathcal{O}(\Delta t^{3}) \right)   \right] \\
 &= \pi_t
 + \Delta t \left(d\pi_t+\nabla\pi_t^\top \bm x + \trace(\nabla^2\pi_t) \right) \\
 &+ \Delta t^2 \left( d(\nabla\pi_t^\top \bm x + \trace(\nabla^2\pi_t)) + \tfrac{d^2}{2}\pi_t+ \tfrac{1}{2} \nabla\pi_t^\top \bm x +\trace(\nabla^2\pi_t) + \tfrac{1}{2} \bm x^\top \nabla^2\pi_t \bm x)  \right) + \mathcal{O}(\Delta t^{3})\\
 &= \pi_t + \Delta t A +  \Delta t^2  B + \mathcal{O}\left(\Delta t^{3} \right)
\end{align*}
where
\begin{align*}
 A &= d\pi_t+\nabla\pi_t^\top \bm x + \trace(\nabla^2\pi_t)\\
 B &= (d+\tfrac{1}{2})\nabla\pi_t^\top \bm x + (d+1)\trace(\nabla^2\pi_t) + \tfrac{d^2}{2}\pi_t  +  \tfrac{1}{2} \bm x^\top \nabla^2\pi_t
\end{align*}
We deduce
\begin{align*}
 D_\alpha(\pi_{t+\Delta t}||\pi_{t})
 &= \frac{1}{\alpha(\alpha-1)} \left( \int \left(\frac{\pi_{t+\Delta t}}{\pi_{t}}\right)^\alpha \d \pi_{t} - 1  \right) \\
 &= \frac{1}{\alpha(\alpha-1)} \left( \int \left(1+\frac{ \Delta t A + \Delta t^2 B + \mathcal{O}(\Delta t^{3} ) }{\pi_{t}}\right)^\alpha \d \pi_{t} - 1  \right) \\
 &= \frac{1}{\alpha(\alpha-1)}  \int \alpha \left(\frac{ \Delta t A + \Delta t^2 B  }{\pi_{t}}\right) + \frac{\alpha(\alpha-1)}{2}\left(\frac{ \Delta t A  }{\pi_{t}}\right)^2 \d \pi_{t}   + \mathcal{O}(\Delta t^{3} ) \\
 &= \frac{1}{\alpha-1}  \int  \left( \Delta t A + \Delta t^2 B \right)\d \bm x +
 \frac{\Delta t^2}{2}  \int  \frac{ A^2  }{\pi_{t}^2}  \d \pi_{t}   + \mathcal{O}(\Delta t^{3} )
\end{align*}
Note that because $\nabla\pi_t(\bm x)\rightarrow0$ when $\|\bm x\|\rightarrow\infty$, we have
\begin{align*}
 \int \nabla^2\pi_t \d \bm x &= \int \nabla \cdot (\nabla\pi_t) \d \bm x= 0 \\
 \int  \nabla \pi_t^\top \bm x \d \bm x  &= \sum_{i=1}^d  \int  \partial_i \pi_t x_i \d \bm x =  \sum_{i=1}^d  - \int   \pi_t  \d \bm x = -d \\
 \int x^\top \nabla^2\pi_t \bm x \d \bm x &= \sum_{i,j=1}^d \int \partial_{ij} \pi_t x_i x_j \d \bm x
 = \sum_{i=1}^d \left( \int \partial_{ii} \pi_t x_i x_i \d \bm x + \sum_{j\neq i} \int \partial_{ij} \pi_t x_i x_j \d \bm x\right) \\
 &= \sum_{i=1}^d \left( -\int \partial_{i} \pi_t (2x_i) \d \bm x - \sum_{j\neq i} \int \partial_{i} \pi_t x_i  \d \bm x\right) \\
 &= \sum_{i=1}^d \left( 2\int  \pi_t \d \bm x + \sum_{j\neq i} \int  \pi_t  \d \bm x\right) = \sum_{i=1}^d 2+(d-1) = d(d+1).
\end{align*}
We deduce that
\begin{align*}
 \int A \d \bm x &= \int d\pi_t+\nabla\pi_t^\top \bm x + \trace(\nabla^2\pi_t) \d \bm x = d-d+0 = 0 \\
 \int B \d \bm x &= \int (d+\tfrac{1}{2})\nabla\pi_t^\top \bm x + (d+1)\trace(\nabla^2\pi_t) + \tfrac{d^2}{2}\pi_t  +  \tfrac{1}{2} \bm x^\top \nabla^2\pi_t \d \bm x \\
 &= (d+\tfrac{1}{2})(-d)+(d+1)(0)+\tfrac{d^2}{2}+\tfrac{1}{2}d(d+1)  = 0
\end{align*}
so that
\begin{align*}
 D_\alpha(\pi_{t+\Delta t}||\pi_{t})
 &= \frac{\Delta t^2}{2}  \int  \frac{ A^2  }{\pi_{t}^2}  \d \pi_{t}   + \mathcal{O}(\Delta t^{3} ) \\
 &= \frac{\Delta t^2}{2}  \int  \left( d+ \frac{\nabla\pi_t^\top x}{\pi_t} + \frac{\trace(\nabla^2\pi_t)}{\pi_t} \right)^2  \d \pi_{t}   + \mathcal{O}(\Delta t^{3} )\\
 &= \frac{\Delta t^2}{2}  \int  \left( d+\nabla\log\pi_t^\top x + \trace(\nabla^2\log\pi_t) + \|\nabla\log\pi_t\|^2 \right)^2  \d \pi_{t}   + \mathcal{O}(\Delta t^{3} ) \\
 &= \frac{\Delta t^2}{2}  D(t)^2  + \mathcal{O}(\Delta t^{3} ) ,
\end{align*}
where we used the definition \eqref{eq:D} of $D(t)$.
By letting $t=t_\ell$ and $\Delta t = t_{\ell+1}-t_\ell = t'(\ell/L)/L+\mathcal{O}(L^{-2})$ where $t'(u) = - \Omega D(t(u))^{-1/2}$, we obtain \eqref{eq:scheduler_diffusion}. This concludes the proof.

\section{Implementation details}\label{app:Impl_details}
We implement our method in Julia.
To optimize the SoS functions, we use the \textit{JuMP.jl} library~\cite{Lubin2023}, an interface for implementing SDP and other optimization problems. As a solver, we use \textit{Hypatia.jl}~\cite{coey2022solving}, an interior point solver.

Next, we explain in more details how the minimization of $\alpha$-divergences is performed using SDP.
First, we discretize $\alpha$-divergences as in Eq.~\eqref{eq:DalphaHat} where $g$ is an SoS function with $A \succeq 0$.
This is handeled in SDP by using a PSD cone for $A$.
To integrate $\int g / \alpha \d \mu$, we use that $\int g \d \mu = \trace(A)$.
The main difficulty remaining is to map $g(\bm x^i)^{1-\alpha}$ to an SDP problem.
To do so, we use an auxiliary variable $t_i$, so that $t_i$ upper or lower bounds $f(\bm x^i)^\alpha g(\bm x^i)^{1-\alpha}$ and minimize over this variable at the same time.

For this, we use the power cone for $\alpha \neq 0, 1$, given by
\begin{align}
	K_p = \left\{(x,y,z)\in \R^3 : x^p y^{1-p} \geq |z|, x\geq 0, y \geq 0\right\}.
\end{align}
First, consider $\alpha < 0$. We use another variable $t_i$, so that
$t_i \geq f(x^i)^{\alpha} g(x^i)^{1-\alpha}$ and write
\begin{align}
	t_i &\geq f(x^i)^{\alpha} g(x^i)^{1-\alpha} \\
	t_i f(x^i)^{-\alpha} & \geq g(x^i)^{1-\alpha} \\
	t_i^{\frac{1}{1-\alpha}} \left(f(x^i)^{-1}\right)^{\frac{\alpha}{1-\alpha}} &\geq g(x^i).
\end{align}
Since $f(x^i)^{-1}$ can be very high, for numerical reasons it is better to use
\begin{align}
	t_i^{\frac{1}{1-\alpha}}  &\geq f(x^i)^{\frac{\alpha}{1-\alpha}} g(x^i)
\end{align}
and map this to the powercone with $(t_i, 1, f(x^i)^{\frac{\alpha}{1-\alpha}} q(x^i)) \in K_{1/(1-\alpha)}$.

Next, we consider $0 < \alpha < 1$. Note, that $\alpha (\alpha - 1)$ is negative in this region, hence, we use a variable $t_i$, so that $t_i \leq f(x^i)^{\alpha} g(x^i)^{1-\alpha}$. This directly maps to the powercone, by using $(f(x^i), g(x^i), t_i) \in K_{\alpha}$.

Last, we consider $\alpha > 1$ and again we seek for $t_i \geq f(x^i)^{\alpha} g(x^i)^{1-\alpha}$m but we can not use the same formulation as for $\alpha < 0$, since in the powercone $p \in [0, 1]$. Hence, we formulate
\begin{align}
	t_i &\geq f(x^i)^{\alpha} g(x^i)^{1-\alpha} \\
	t_i g(x^i)^{\alpha-1} &\geq f(x^i)^{\alpha} \\
	t_i^{\frac{1}{\alpha}} g(x^i)^{1- \frac{1}{\alpha}} &\geq f(x^i).
\end{align}
This maps to the powercone as $(t_i, g(x^i), f(x^i)) \in K_{1/\alpha}$.

For $\alpha = 0,1$, the powercone does not work. Instead, the exponential cone could be used. However, \textit{JuMP} has the possibility of using the \textit{relative entropy cone}, which directly works with the \textit{Hypatia.jl} solver.
This cone is given by
\begin{align}
	K_{n} = \left\{(u,\bm v,\bm w)\in \R^{1+2n} | u \geq \sum_{i=1}^n \bm w_i \ln\left(\frac{\bm w_i}{\bm v_i}\right)\right\}.
\end{align}
Both the KL-divergence, as well as the reverse KL-divergence, can be implemented using this cone.

\end{appendices}
	
	
	\bibliography{sn-bibliography.bib}


\begin{thebibliography}{53}
\ifx \bisbn   \undefined \def \bisbn  #1{ISBN #1}\fi
\ifx \binits  \undefined \def \binits#1{#1}\fi
\ifx \bauthor  \undefined \def \bauthor#1{#1}\fi
\ifx \batitle  \undefined \def \batitle#1{#1}\fi
\ifx \bjtitle  \undefined \def \bjtitle#1{#1}\fi
\ifx \bvolume  \undefined \def \bvolume#1{\textbf{#1}}\fi
\ifx \byear  \undefined \def \byear#1{#1}\fi
\ifx \bissue  \undefined \def \bissue#1{#1}\fi
\ifx \bfpage  \undefined \def \bfpage#1{#1}\fi
\ifx \blpage  \undefined \def \blpage #1{#1}\fi
\ifx \burl  \undefined \def \burl#1{\textsf{#1}}\fi
\ifx \doiurl  \undefined \def \doiurl#1{\url{https://doi.org/#1}}\fi
\ifx \betal  \undefined \def \betal{\textit{et al.}}\fi
\ifx \binstitute  \undefined \def \binstitute#1{#1}\fi
\ifx \binstitutionaled  \undefined \def \binstitutionaled#1{#1}\fi
\ifx \bctitle  \undefined \def \bctitle#1{#1}\fi
\ifx \beditor  \undefined \def \beditor#1{#1}\fi
\ifx \bpublisher  \undefined \def \bpublisher#1{#1}\fi
\ifx \bbtitle  \undefined \def \bbtitle#1{#1}\fi
\ifx \bedition  \undefined \def \bedition#1{#1}\fi
\ifx \bseriesno  \undefined \def \bseriesno#1{#1}\fi
\ifx \blocation  \undefined \def \blocation#1{#1}\fi
\ifx \bsertitle  \undefined \def \bsertitle#1{#1}\fi
\ifx \bsnm \undefined \def \bsnm#1{#1}\fi
\ifx \bsuffix \undefined \def \bsuffix#1{#1}\fi
\ifx \bparticle \undefined \def \bparticle#1{#1}\fi
\ifx \barticle \undefined \def \barticle#1{#1}\fi
\bibcommenthead
\ifx \bconfdate \undefined \def \bconfdate #1{#1}\fi
\ifx \botherref \undefined \def \botherref #1{#1}\fi
\ifx \url \undefined \def \url#1{\textsf{#1}}\fi
\ifx \bchapter \undefined \def \bchapter#1{#1}\fi
\ifx \bbook \undefined \def \bbook#1{#1}\fi
\ifx \bcomment \undefined \def \bcomment#1{#1}\fi
\ifx \oauthor \undefined \def \oauthor#1{#1}\fi
\ifx \citeauthoryear \undefined \def \citeauthoryear#1{#1}\fi
\ifx \endbibitem  \undefined \def \endbibitem {}\fi
\ifx \bconflocation  \undefined \def \bconflocation#1{#1}\fi
\ifx \arxivurl  \undefined \def \arxivurl#1{\textsf{#1}}\fi
\csname PreBibitemsHook\endcsname

\bibitem[\protect\citeauthoryear{Cui et~al.}{2023a}]{cui_scalable_2023}
\begin{barticle}
\bauthor{\bsnm{Cui}, \binits{T.}},
\bauthor{\bsnm{Dolgov}, \binits{S.}},
\bauthor{\bsnm{Zahm}, \binits{O.}}:
\batitle{Scalable conditional deep inverse {Rosenblatt} transports using tensor
  trains and gradient-based dimension reduction}.
\bjtitle{Journal of Computational Physics}
\bvolume{485},
\bfpage{112103}
(\byear{2023})
\doiurl{10.1016/j.jcp.2023.112103}
\end{barticle}
\endbibitem

\bibitem[\protect\citeauthoryear{Cui et~al.}{2023b}]{cui_self-reinforced_2023}
\begin{botherref}
\oauthor{\bsnm{Cui}, \binits{T.}},
\oauthor{\bsnm{Dolgov}, \binits{S.}},
\oauthor{\bsnm{Zahm}, \binits{O.}}:
Self-reinforced polynomial approximation methods for concentrated probability
  densities.
arXiv
(2023).
\url{http://arxiv.org/abs/2303.02554}
\end{botherref}
\endbibitem

\bibitem[\protect\citeauthoryear{Marzouk
  et~al.}{2016}]{marzouk_introduction_2016}
\begin{bchapter}
\bauthor{\bsnm{Marzouk}, \binits{Y.}},
\bauthor{\bsnm{Moselhy}, \binits{T.}},
\bauthor{\bsnm{Parno}, \binits{M.}},
\bauthor{\bsnm{Spantini}, \binits{A.}}:
\bctitle{Sampling via {Measure} {Transport}: {An} {Introduction}}.
In: \beditor{\bsnm{Ghanem}, \binits{R.}},
\beditor{\bsnm{Higdon}, \binits{D.}},
\beditor{\bsnm{Owhadi}, \binits{H.}} (eds.)
\bbtitle{Handbook of {Uncertainty} {Quantification}},
pp. \bfpage{1}--\blpage{41}.
\bpublisher{Springer},
\blocation{Cham}
(\byear{2016}).
\doiurl{10.1007/978-3-319-11259-6_23-1}
\end{bchapter}
\endbibitem

\bibitem[\protect\citeauthoryear{Grenioux et~al.}{2023}]{grenioux2023sampling}
\begin{bchapter}
\bauthor{\bsnm{Grenioux}, \binits{L.}},
\bauthor{\bsnm{Oliviero~Durmus}, \binits{A.}},
\bauthor{\bsnm{Moulines}, \binits{E.}},
\bauthor{\bsnm{Gabri\'{e}}, \binits{M.}}:
\bctitle{On {Sampling} with {Approximate} {Transport} {Maps}}.
In: \beditor{\bsnm{Krause}, \binits{A.}},
\beditor{\bsnm{Brunskill}, \binits{E.}},
\beditor{\bsnm{Cho}, \binits{K.}},
\beditor{\bsnm{Engelhardt}, \binits{B.}},
\beditor{\bsnm{Sabato}, \binits{S.}},
\beditor{\bsnm{Scarlett}, \binits{J.}} (eds.)
\bbtitle{Proceedings of the 40th International Conference on Machine Learning}.
\bsertitle{Proceedings of Machine Learning Research},
vol. \bseriesno{202},
pp. \bfpage{11698}--\blpage{11733}.
\bpublisher{PMLR}, \blocation{???}
(\byear{2023}).
\burl{https://proceedings.mlr.press/v202/grenioux23a.html}
\end{bchapter}
\endbibitem

\bibitem[\protect\citeauthoryear{Parno and Marzouk}{2018}]{parno2018transport}
\begin{barticle}
\bauthor{\bsnm{Parno}, \binits{M.}},
\bauthor{\bsnm{Marzouk}, \binits{Y.}}:
\batitle{Transport map accelerated {Markov} chain {Monte} {Carlo}}.
\bjtitle{SIAM/ASA Journal on Uncertainty Quantification}
\bvolume{6}(\bissue{2}),
\bfpage{645}--\blpage{682}
(\byear{2018})
\doiurl{10.1137/17M1134640}
\end{barticle}
\endbibitem

\bibitem[\protect\citeauthoryear{Rezende and
  Mohamed}{2015}]{rezende2015variational}
\begin{bchapter}
\bauthor{\bsnm{Rezende}, \binits{D.}},
\bauthor{\bsnm{Mohamed}, \binits{S.}}:
\bctitle{Variational inference with normalizing flows}.
In: \bbtitle{International Conference on Machine Learning},
pp. \bfpage{1530}--\blpage{1538}
(\byear{2015}).
\bcomment{PMLR}
\end{bchapter}
\endbibitem

\bibitem[\protect\citeauthoryear{Maurais and
  Marzouk}{2024}]{maurais2024sampling}
\begin{bchapter}
\bauthor{\bsnm{Maurais}, \binits{A.}},
\bauthor{\bsnm{Marzouk}, \binits{Y.}}:
\bctitle{Sampling in unit time with kernel {F}isher-rao flow}.
In: \beditor{\bsnm{Salakhutdinov}, \binits{R.}},
\beditor{\bsnm{Kolter}, \binits{Z.}},
\beditor{\bsnm{Heller}, \binits{K.}},
\beditor{\bsnm{Weller}, \binits{A.}},
\beditor{\bsnm{Oliver}, \binits{N.}},
\beditor{\bsnm{Scarlett}, \binits{J.}},
\beditor{\bsnm{Berkenkamp}, \binits{F.}} (eds.)
\bbtitle{Proceedings of the 41st International Conference on Machine Learning}.
\bsertitle{Proceedings of Machine Learning Research},
vol. \bseriesno{235},
pp. \bfpage{35138}--\blpage{35162}.
\bpublisher{PMLR}, \blocation{???}
(\byear{2024}).
\burl{https://proceedings.mlr.press/v235/maurais24a.html}
\end{bchapter}
\endbibitem

\bibitem[\protect\citeauthoryear{Cichocki and
  Amari}{2010}]{cichocki_families_2010}
\begin{barticle}
\bauthor{\bsnm{Cichocki}, \binits{A.}},
\bauthor{\bsnm{Amari}, \binits{S.-i.}}:
\batitle{Families of {Alpha}- {Beta}- and {Gamma}- {Divergences}: {Flexible}
  and {Robust} {Measures} of {Similarities}}.
\bjtitle{Entropy}
\bvolume{12}(\bissue{6}),
\bfpage{1532}--\blpage{1568}
(\byear{2010})
\doiurl{10.3390/e12061532}
\end{barticle}
\endbibitem

\bibitem[\protect\citeauthoryear{No{\'e} et~al.}{2019}]{noe2019boltzmann}
\begin{barticle}
\bauthor{\bsnm{No{\'e}}, \binits{F.}},
\bauthor{\bsnm{Olsson}, \binits{S.}},
\bauthor{\bsnm{K{\"o}hler}, \binits{J.}},
\bauthor{\bsnm{Wu}, \binits{H.}}:
\batitle{Boltzmann generators: Sampling equilibrium states of many-body systems
  with deep learning}.
\bjtitle{Science}
\bvolume{365}(\bissue{6457}),
\bfpage{1147}
(\byear{2019})
\doiurl{10.1126/science.aaw1147}
\end{barticle}
\endbibitem

\bibitem[\protect\citeauthoryear{Felardos
  et~al.}{2023}]{felardos_designing_2023}
\begin{botherref}
\oauthor{\bsnm{Felardos}, \binits{L.}},
\oauthor{\bsnm{Hénin}, \binits{J.}},
\oauthor{\bsnm{Charpiat}, \binits{G.}}:
Designing losses for data-free training of normalizing flows on {Boltzmann}
  distributions.
arXiv
(2023).
\url{http://arxiv.org/abs/2301.05475}
\end{botherref}
\endbibitem

\bibitem[\protect\citeauthoryear{Baptista
  et~al.}{2023}]{baptista2023representation}
\begin{barticle}
\bauthor{\bsnm{Baptista}, \binits{R.}},
\bauthor{\bsnm{Marzouk}, \binits{Y.}},
\bauthor{\bsnm{Zahm}, \binits{O.}}:
\batitle{On the {Representation} and {Learning} of {Monotone} {Triangular}
  {Transport} {Maps}}.
\bjtitle{Foundations of Computational Mathematics}
(\byear{2023})
\doiurl{10.1007/s10208-023-09630-x}
\end{barticle}
\endbibitem

\bibitem[\protect\citeauthoryear{Jaini et~al.}{2019}]{jaini2019sum}
\begin{bchapter}
\bauthor{\bsnm{Jaini}, \binits{P.}},
\bauthor{\bsnm{Selby}, \binits{K.A.}},
\bauthor{\bsnm{Yu}, \binits{Y.}}:
\bctitle{Sum-of-squares polynomial flow}.
In: \bbtitle{International Conference on Machine Learning},
pp. \bfpage{3009}--\blpage{3018}
(\byear{2019}).
\bcomment{PMLR}
\end{bchapter}
\endbibitem

\bibitem[\protect\citeauthoryear{Dinh et~al.}{2017}]{dinh_density_2017}
\begin{botherref}
\oauthor{\bsnm{Dinh}, \binits{L.}},
\oauthor{\bsnm{Sohl-Dickstein}, \binits{J.}},
\oauthor{\bsnm{Bengio}, \binits{S.}}:
Density estimation using {Real} {NVP}.
arXiv
(2017).
\url{http://arxiv.org/abs/1605.08803}
\end{botherref}
\endbibitem

\bibitem[\protect\citeauthoryear{Papamakarios
  et~al.}{2021}]{papamakarios_normalizing_2021}
\begin{barticle}
\bauthor{\bsnm{Papamakarios}, \binits{G.}},
\bauthor{\bsnm{Nalisnick}, \binits{E.}},
\bauthor{\bsnm{Rezende}, \binits{D.J.}},
\bauthor{\bsnm{Mohamed}, \binits{S.}},
\bauthor{\bsnm{Lakshminarayanan}, \binits{B.}}:
\batitle{Normalizing {Flows} for {Probabilistic} {Modeling} and {Inference}}.
\bjtitle{Journal of Machine Learning Research}
\bvolume{22}(\bissue{57}),
\bfpage{1}--\blpage{64}
(\byear{2021})
\end{barticle}
\endbibitem

\bibitem[\protect\citeauthoryear{Villani}{2009}]{villani2009optimal}
\begin{bbook}
\bauthor{\bsnm{Villani}, \binits{C.}}:
\bbtitle{Optimal Transport: Old and New}.
\bsertitle{Grundlehren der mathematischen {Wissenschaften}},
vol. \bseriesno{338}.
\bpublisher{Springer},
\blocation{Berlin Heidelberg}
(\byear{2009})
\end{bbook}
\endbibitem

\bibitem[\protect\citeauthoryear{Peyr{\'e}
  et~al.}{2019}]{peyre2019computational}
\begin{barticle}
\bauthor{\bsnm{Peyr{\'e}}, \binits{G.}},
\bauthor{\bsnm{Cuturi}, \binits{M.}}, \betal:
\batitle{Computational optimal transport: With applications to data science}.
\bjtitle{Foundations and Trends{\textregistered} in Machine Learning}
\bvolume{11}(\bissue{5-6}),
\bfpage{355}--\blpage{607}
(\byear{2019})
\end{barticle}
\endbibitem

\bibitem[\protect\citeauthoryear{Dolgov et~al.}{2020}]{dolgov2020approximation}
\begin{barticle}
\bauthor{\bsnm{Dolgov}, \binits{S.}},
\bauthor{\bsnm{Anaya-Izquierdo}, \binits{K.}},
\bauthor{\bsnm{Fox}, \binits{C.}},
\bauthor{\bsnm{Scheichl}, \binits{R.}}:
\batitle{Approximation and sampling of multivariate probability distributions
  in the tensor train decomposition}.
\bjtitle{Statistics and Computing}
\bvolume{30},
\bfpage{603}--\blpage{625}
(\byear{2020})
\end{barticle}
\endbibitem

\bibitem[\protect\citeauthoryear{Cui and Dolgov}{2021}]{cui_deep_2021}
\begin{barticle}
\bauthor{\bsnm{Cui}, \binits{T.}},
\bauthor{\bsnm{Dolgov}, \binits{S.}}:
\batitle{Deep composition of tensor-trains using squared inverse {Rosenblatt}
  transports}.
\bjtitle{Foundations of Computational Mathematics}
(\byear{2021})
\doiurl{10.1007/s10208-021-09537-5}
\end{barticle}
\endbibitem

\bibitem[\protect\citeauthoryear{Cui et~al.}{2024}]{cui2024deep}
\begin{barticle}
\bauthor{\bsnm{Cui}, \binits{T.}},
\bauthor{\bsnm{Dolgov}, \binits{S.}},
\bauthor{\bsnm{Scheichl}, \binits{R.}}:
\batitle{Deep importance sampling using tensor trains with application to a
  priori and a posteriori rare events}.
\bjtitle{SIAM Journal on Scientific Computing}
\bvolume{46}(\bissue{1}),
\bfpage{1}--\blpage{29}
(\byear{2024})
\end{barticle}
\endbibitem

\bibitem[\protect\citeauthoryear{Westermann and
  Zech}{2023}]{westermann_measure_2023}
\begin{botherref}
\oauthor{\bsnm{Westermann}, \binits{J.}},
\oauthor{\bsnm{Zech}, \binits{J.}}:
Measure transport via polynomial density surrogates.
arXiv
(2023).
\url{http://arxiv.org/abs/2311.04172}
\end{botherref}
\endbibitem

\bibitem[\protect\citeauthoryear{Del~Moral
  et~al.}{2006}]{del_moral_sequential_2006}
\begin{barticle}
\bauthor{\bsnm{Del~Moral}, \binits{P.}},
\bauthor{\bsnm{Doucet}, \binits{A.}},
\bauthor{\bsnm{Jasra}, \binits{A.}}:
\batitle{Sequential {Monte} {Carlo} {Samplers}}.
\bjtitle{Journal of the Royal Statistical Society Series B: Statistical
  Methodology}
\bvolume{68}(\bissue{3}),
\bfpage{411}--\blpage{436}
(\byear{2006})
\doiurl{10.1111/j.1467-9868.2006.00553.x} .
Accessed 2024-02-12
\end{barticle}
\endbibitem

\bibitem[\protect\citeauthoryear{Eigel et~al.}{2024}]{eigel_less_2024}
\begin{barticle}
\bauthor{\bsnm{Eigel}, \binits{M.}},
\bauthor{\bsnm{Gruhlke}, \binits{R.}},
\bauthor{\bsnm{Sommer}, \binits{D.}}:
\batitle{Less {Interaction} with {Forward} {Models} in {Langevin} {Dynamics}:
  {Enrichment} and {Homotopy}}.
\bjtitle{SIAM Journal on Applied Dynamical Systems}
\bvolume{23}(\bissue{3}),
\bfpage{1870}--\blpage{1908}
(\byear{2024})
\doiurl{10.1137/23M1546841}
\end{barticle}
\endbibitem

\bibitem[\protect\citeauthoryear{Schwarz et~al.}{2021}]{akata_riemannian_2021}
\begin{bchapter}
\bauthor{\bsnm{Schwarz}, \binits{J.}},
\bauthor{\bsnm{Draxler}, \binits{F.}},
\bauthor{\bsnm{Köthe}, \binits{U.}},
\bauthor{\bsnm{Schnörr}, \binits{C.}}:
\bctitle{Riemannian {SOS}-{Polynomial} {Normalizing} {Flows}}.
In: \beditor{\bsnm{Akata}, \binits{Z.}},
\beditor{\bsnm{Geiger}, \binits{A.}},
\beditor{\bsnm{Sattler}, \binits{T.}} (eds.)
\bbtitle{Pattern {Recognition}}
vol. \bseriesno{12544},
pp. \bfpage{218}--\blpage{231}.
\bpublisher{Springer},
\blocation{Cham}
(\byear{2021}).
\doiurl{10.1007/978-3-030-71278-5_16} .
\bcomment{Series Title: Lecture Notes in Computer Science}
\end{bchapter}
\endbibitem

\bibitem[\protect\citeauthoryear{Sohl-Dickstein et~al.}{2015}]{sohl2015deep}
\begin{bchapter}
\bauthor{\bsnm{Sohl-Dickstein}, \binits{J.}},
\bauthor{\bsnm{Weiss}, \binits{E.}},
\bauthor{\bsnm{Maheswaranathan}, \binits{N.}},
\bauthor{\bsnm{Ganguli}, \binits{S.}}:
\bctitle{Deep unsupervised learning using nonequilibrium thermodynamics}.
In: \bbtitle{International Conference on Machine Learning},
pp. \bfpage{2256}--\blpage{2265}
(\byear{2015}).
\bcomment{PMLR}
\end{bchapter}
\endbibitem

\bibitem[\protect\citeauthoryear{Song et~al.}{2021}]{song2021scorebased}
\begin{bchapter}
\bauthor{\bsnm{Song}, \binits{Y.}},
\bauthor{\bsnm{Sohl-Dickstein}, \binits{J.}},
\bauthor{\bsnm{Kingma}, \binits{D.P.}},
\bauthor{\bsnm{Kumar}, \binits{A.}},
\bauthor{\bsnm{Ermon}, \binits{S.}},
\bauthor{\bsnm{Poole}, \binits{B.}}:
\bctitle{Score-based generative modeling through stochastic differential
  equations}.
In: \bbtitle{International Conference on Learning Representations}
(\byear{2021})
\end{bchapter}
\endbibitem

\bibitem[\protect\citeauthoryear{Grenioux
  et~al.}{2024}]{grenioux2024stochastic}
\begin{botherref}
\oauthor{\bsnm{Grenioux}, \binits{L.}},
\oauthor{\bsnm{Noble}, \binits{M.}},
\oauthor{\bsnm{Gabrié}, \binits{M.}},
\oauthor{\bsnm{Durmus}, \binits{A.O.}}:
Stochastic Localization via Iterative Posterior Sampling
(2024).
\url{https://arxiv.org/abs/2402.10758}
\end{botherref}
\endbibitem

\bibitem[\protect\citeauthoryear{Akhound-Sadegh
  et~al.}{2024}]{akhound2024iterated}
\begin{botherref}
\oauthor{\bsnm{Akhound-Sadegh}, \binits{T.}},
\oauthor{\bsnm{Rector-Brooks}, \binits{J.}},
\oauthor{\bsnm{Bose}, \binits{A.J.}},
\oauthor{\bsnm{Mittal}, \binits{S.}},
\oauthor{\bsnm{Lemos}, \binits{P.}},
\oauthor{\bsnm{Liu}, \binits{C.-H.}},
\oauthor{\bsnm{Sendera}, \binits{M.}},
\oauthor{\bsnm{Ravanbakhsh}, \binits{S.}},
\oauthor{\bsnm{Gidel}, \binits{G.}},
\oauthor{\bsnm{Bengio}, \binits{Y.}},
\oauthor{\bsnm{Malkin}, \binits{N.}},
\oauthor{\bsnm{Tong}, \binits{A.}}:
Iterated Denoising Energy Matching for Sampling from Boltzmann Densities
(2024).
\url{https://arxiv.org/abs/2402.06121}
\end{botherref}
\endbibitem

\bibitem[\protect\citeauthoryear{Zhu and Rohwer}{1995}]{zhu_bayesian_1995}
\begin{barticle}
\bauthor{\bsnm{Zhu}, \binits{H.}},
\bauthor{\bsnm{Rohwer}, \binits{R.}}:
\batitle{Bayesian invariant measurements of generalization}.
\bjtitle{Neural Processing Letters}
\bvolume{2}(\bissue{6}),
\bfpage{28}--\blpage{31}
(\byear{1995})
\doiurl{10.1007/BF02309013}
\end{barticle}
\endbibitem

\bibitem[\protect\citeauthoryear{Nielsen}{2020}]{nielsen_elementary_2020}
\begin{barticle}
\bauthor{\bsnm{Nielsen}, \binits{F.}}:
\batitle{An elementary introduction to information geometry}.
\bjtitle{Entropy}
\bvolume{22}(\bissue{10}),
\bfpage{1100}
(\byear{2020})
\doiurl{10.3390/e22101100}
\end{barticle}
\endbibitem

\bibitem[\protect\citeauthoryear{Minka}{2005}]{minka_divergence_2005}
\begin{botherref}
\oauthor{\bsnm{Minka}, \binits{T.}}:
Divergence measures and message passing.
Technical report,
Technical report, Microsoft Research
(2005)
\end{botherref}
\endbibitem

\bibitem[\protect\citeauthoryear{Amari}{2016}]{amari_information_2016}
\begin{bbook}
\bauthor{\bsnm{Amari}, \binits{S.-i.}}:
\bbtitle{Information {Geometry} and {Its} {Applications}}.
\bsertitle{Applied {Mathematical} {Sciences}},
vol. \bseriesno{194}.
\bpublisher{Springer},
\blocation{Tokyo}
(\byear{2016}).
\doiurl{10.1007/978-4-431-55978-8}
\end{bbook}
\endbibitem

\bibitem[\protect\citeauthoryear{Marteau-Ferey
  et~al.}{2020}]{marteau-ferey_non-parametric_2020}
\begin{bchapter}
\bauthor{\bsnm{Marteau-Ferey}, \binits{U.}},
\bauthor{\bsnm{Bach}, \binits{F.}},
\bauthor{\bsnm{Rudi}, \binits{A.}}:
\bctitle{Non-{Parametric} {Models} for {Non}-{Negative} {Functions}}.
In: \bbtitle{Proceedings of the 34th {International} {Conference} on {Neural}
  {Information} {Processing} {Systems}}.
\bsertitle{{NIPS}'20}.
\bpublisher{Curran Associates Inc.},
\blocation{Red Hook, NY, USA}
(\byear{2020}).
\bcomment{event-place: Vancouver, BC, Canada}
\end{bchapter}
\endbibitem

\bibitem[\protect\citeauthoryear{Lubin et~al.}{2023}]{Lubin2023}
\begin{barticle}
\bauthor{\bsnm{Lubin}, \binits{M.}},
\bauthor{\bsnm{Dowson}, \binits{O.}},
\bauthor{\bsnm{{Dias Garcia}}, \binits{J.}},
\bauthor{\bsnm{Huchette}, \binits{J.}},
\bauthor{\bsnm{Legat}, \binits{B.}},
\bauthor{\bsnm{Vielma}, \binits{J.P.}}:
\batitle{{JuMP} 1.0: {R}ecent improvements to a modeling language for
  mathematical optimization}.
\bjtitle{Mathematical Programming Computation}
(\byear{2023})
\doiurl{10.1007/s12532-023-00239-3}
\end{barticle}
\endbibitem

\bibitem[\protect\citeauthoryear{Grant and Boyd}{2014}]{cvx}
\begin{botherref}
\oauthor{\bsnm{Grant}, \binits{M.}},
\oauthor{\bsnm{Boyd}, \binits{S.}}:
{CVX}: Matlab Software for Disciplined Convex Programming, version 2.1.
\url{https://cvxr.com/cvx}
(2014)
\end{botherref}
\endbibitem

\bibitem[\protect\citeauthoryear{Lasserre}{2010}]{lasserre_moments_2010}
\begin{bbook}
\bauthor{\bsnm{Lasserre}, \binits{J.-B.}}:
\bbtitle{Moments, Positive Polynomials and Their Applications}.
\bsertitle{Imperial {College} {Press} optimization series},
vol. \bseriesno{v. 1}.
\bpublisher{Imperial College Press ; Distributed by World Scientific Publishing
  Co},
\blocation{London : Signapore ; Hackensack, NJ}
(\byear{2010}).
\bcomment{OCLC: ocn503631126}
\end{bbook}
\endbibitem

\bibitem[\protect\citeauthoryear{Putinar}{1993}]{putinar_positive_1993}
\begin{barticle}
\bauthor{\bsnm{Putinar}, \binits{M.}}:
\batitle{Positive {Polynomials} on {Compact} {Semi}-algebraic {Sets}}.
\bjtitle{Indiana University Mathematics Journal}
\bvolume{42}(\bissue{3}),
\bfpage{969}--\blpage{984}
(\byear{1993}).
\bcomment{Publisher: Indiana University Mathematics Department}
\end{barticle}
\endbibitem

\bibitem[\protect\citeauthoryear{Koornwinder}{2013}]{koornwinder_orthogonal_2013}
\begin{bbook}
\bauthor{\bsnm{Koornwinder}, \binits{T.H.}}:
In: \beditor{\bsnm{Schneider}, \binits{C.}},
\beditor{\bsnm{Bl{\"u}mlein}, \binits{J.}} (eds.)
\bbtitle{Orthogonal Polynomials},
pp. \bfpage{145}--\blpage{170}.
\bpublisher{Springer},
\blocation{Vienna}
(\byear{2013}).
\doiurl{10.1007/978-3-7091-1616-6_6}
\end{bbook}
\endbibitem

\bibitem[\protect\citeauthoryear{Shen et~al.}{2014}]{shen_approximations_2014}
\begin{barticle}
\bauthor{\bsnm{Shen}, \binits{J.}},
\bauthor{\bsnm{Wang}, \binits{L.-L.}},
\bauthor{\bsnm{Yu}, \binits{H.}}:
\batitle{Approximations by orthonormal mapped {Chebyshev} functions for
  higher-dimensional problems in unbounded domains}.
\bjtitle{Journal of Computational and Applied Mathematics}
\bvolume{265},
\bfpage{264}--\blpage{275}
(\byear{2014})
\doiurl{10.1016/j.cam.2013.09.024}
\end{barticle}
\endbibitem

\bibitem[\protect\citeauthoryear{Cohen and
  Migliorati}{2018}]{cohen_multivariate_2018}
\begin{bchapter}
\bauthor{\bsnm{Cohen}, \binits{A.}},
\bauthor{\bsnm{Migliorati}, \binits{G.}}:
\bctitle{Multivariate {Approximation} in {Downward} {Closed} {Polynomial}
  {Spaces}}.
In: \beditor{\bsnm{Dick}, \binits{J.}},
\beditor{\bsnm{Kuo}, \binits{F.Y.}},
\beditor{\bsnm{Woźniakowski}, \binits{H.}} (eds.)
\bbtitle{Contemporary {Computational} {Mathematics} - {A} {Celebration} of the
  80th {Birthday} Of {Ian} {Sloan}},
pp. \bfpage{233}--\blpage{282}.
\bpublisher{Springer},
\blocation{Cham}
(\byear{2018}).
\doiurl{10.1007/978-3-319-72456-0_12}
\end{bchapter}
\endbibitem

\bibitem[\protect\citeauthoryear{Rosenblatt}{1952}]{rosenblatt_remarks_1952}
\begin{barticle}
\bauthor{\bsnm{Rosenblatt}, \binits{M.}}:
\batitle{Remarks on a {Multivariate} {Transformation}}.
\bjtitle{The Annals of Mathematical Statistics}
\bvolume{23}(\bissue{3}),
\bfpage{470}--\blpage{472}
(\byear{1952})
\doiurl{10.1214/aoms/1177729394}
\end{barticle}
\endbibitem

\bibitem[\protect\citeauthoryear{Hosseini
  et~al.}{2024}]{hosseini2023conditional}
\begin{botherref}
\oauthor{\bsnm{Hosseini}, \binits{B.}},
\oauthor{\bsnm{Hsu}, \binits{A.W.}},
\oauthor{\bsnm{Taghvaei}, \binits{A.}}:
Conditional Optimal Transport on Function Spaces
(2024).
\url{https://arxiv.org/abs/2311.05672}
\end{botherref}
\endbibitem

\bibitem[\protect\citeauthoryear{Spantini et~al.}{2022}]{spantini2022coupling}
\begin{barticle}
\bauthor{\bsnm{Spantini}, \binits{A.}},
\bauthor{\bsnm{Baptista}, \binits{R.}},
\bauthor{\bsnm{Marzouk}, \binits{Y.}}:
\batitle{Coupling techniques for nonlinear ensemble filtering}.
\bjtitle{SIAM Review}
\bvolume{64}(\bissue{4}),
\bfpage{921}--\blpage{953}
(\byear{2022})
\doiurl{10.1137/20M1312204}
\end{barticle}
\endbibitem

\bibitem[\protect\citeauthoryear{Kingma et~al.}{2021}]{kingma2021variational}
\begin{bchapter}
\bauthor{\bsnm{Kingma}, \binits{D.}},
\bauthor{\bsnm{Salimans}, \binits{T.}},
\bauthor{\bsnm{Poole}, \binits{B.}},
\bauthor{\bsnm{Ho}, \binits{J.}}:
\bctitle{Variational diffusion models}.
In: \beditor{\bsnm{Ranzato}, \binits{M.}},
\beditor{\bsnm{Beygelzimer}, \binits{A.}},
\beditor{\bsnm{Dauphin}, \binits{Y.}},
\beditor{\bsnm{Liang}, \binits{P.S.}},
\beditor{\bsnm{Vaughan}, \binits{J.W.}} (eds.)
\bbtitle{Advances in Neural Information Processing Systems},
vol. \bseriesno{34},
pp. \bfpage{21696}--\blpage{21707}.
\bpublisher{Curran Associates, Inc.}, \blocation{???}
(\byear{2021})
\end{bchapter}
\endbibitem

\bibitem[\protect\citeauthoryear{Nichol and
  Dhariwal}{2021}]{nichol_improved_2021}
\begin{bchapter}
\bauthor{\bsnm{Nichol}, \binits{A.}},
\bauthor{\bsnm{Dhariwal}, \binits{P.}}:
\bctitle{Improved {Denoising} {Diffusion} {Probabilistic} {Models}}.
In: \bbtitle{International Conference on Machine Learning},
pp. \bfpage{8162}--\blpage{8171}
(\byear{2021}).
\bcomment{PMLR}
\end{bchapter}
\endbibitem

\bibitem[\protect\citeauthoryear{Bakry et~al.}{2014}]{bakry2014analysis}
\begin{bbook}
\bauthor{\bsnm{Bakry}, \binits{D.}},
\bauthor{\bsnm{Gentil}, \binits{I.}},
\bauthor{\bsnm{Ledoux}, \binits{M.}}, \betal:
\bbtitle{Analysis and Geometry of Markov Diffusion Operators}
vol. \bseriesno{103}.
\bpublisher{Springer}, \blocation{???}
(\byear{2014})
\end{bbook}
\endbibitem

\bibitem[\protect\citeauthoryear{Brennan et~al.}{2020}]{brennan_greedy_2020}
\begin{bchapter}
\bauthor{\bsnm{Brennan}, \binits{M.C.}},
\bauthor{\bsnm{Bigoni}, \binits{D.}},
\bauthor{\bsnm{Zahm}, \binits{O.}},
\bauthor{\bsnm{Spantini}, \binits{A.}},
\bauthor{\bsnm{Marzouk}, \binits{Y.}}:
\bctitle{Greedy inference with structure-exploiting lazy maps}.
In: \bbtitle{34th {Conference} on {Neural} {Information} {Processing}
  {Systems}}
(\byear{2020})
\end{bchapter}
\endbibitem

\bibitem[\protect\citeauthoryear{Brennan
  et~al.}{2022}]{baptista_dimension_2022}
\begin{bchapter}
\bauthor{\bsnm{Brennan}, \binits{M.}},
\bauthor{\bsnm{Baptista}, \binits{R.}},
\bauthor{\bsnm{Marzouk}, \binits{Y.}}:
\bctitle{Dimension reduction via score ratio matching}.
In: \bbtitle{NeurIPS 2022 Workshop on Score-Based Methods}
(\byear{2022})
\end{bchapter}
\endbibitem

\bibitem[\protect\citeauthoryear{Coey et~al.}{2022}]{coey2022solving}
\begin{barticle}
\bauthor{\bsnm{Coey}, \binits{C.}},
\bauthor{\bsnm{Kapelevich}, \binits{L.}},
\bauthor{\bsnm{Vielma}, \binits{J.P.}}:
\batitle{Solving natural conic formulations with {H}ypatia.jl}.
\bjtitle{INFORMS Journal on Computing}
\bvolume{34}(\bissue{5}),
\bfpage{2686}--\blpage{2699}
(\byear{2022})
\doiurl{10.1287/ijoc.2022.1202}
\end{barticle}
\endbibitem

\bibitem[\protect\citeauthoryear{Uria et~al.}{2014}]{uria_rnade_2014}
\begin{botherref}
\oauthor{\bsnm{Uria}, \binits{B.}},
\oauthor{\bsnm{Murray}, \binits{I.}},
\oauthor{\bsnm{Larochelle}, \binits{H.}}:
RNADE: The real-valued neural autoregressive density-estimator
(2014).
\url{https://arxiv.org/abs/1306.0186}
\end{botherref}
\endbibitem

\bibitem[\protect\citeauthoryear{Kermack
  et~al.}{1927}]{kermack_contribution_1927}
\begin{barticle}
\bauthor{\bsnm{Kermack}, \binits{W.O.}},
\bauthor{\bsnm{McKendrick}, \binits{A.G.}},
\bauthor{\bsnm{Walker}, \binits{G.T.}}:
\batitle{A contribution to the mathematical theory of epidemics}.
\bjtitle{Proceedings of the Royal Society of London. Series A, Containing
  Papers of a Mathematical and Physical Character}
\bvolume{115}(\bissue{772}),
\bfpage{700}--\blpage{721}
(\byear{1927})
\doiurl{10.1098/rspa.1927.0118}
\end{barticle}
\endbibitem

\bibitem[\protect\citeauthoryear{Cohen and
  Migliorati}{2017}]{cohen_optimal_2017}
\begin{barticle}
\bauthor{\bsnm{Cohen}, \binits{A.}},
\bauthor{\bsnm{Migliorati}, \binits{G.}}:
\batitle{Optimal weighted least-squares methods}.
\bjtitle{The SMAI journal of computational mathematics}
\bvolume{3},
\bfpage{181}--\blpage{203}
(\byear{2017})
\doiurl{10.5802/smai-jcm.24}
\end{barticle}
\endbibitem

\bibitem[\protect\citeauthoryear{Dolbeault and
  Cohen}{2022}]{dolbeault2022optimal}
\begin{barticle}
\bauthor{\bsnm{Dolbeault}, \binits{M.}},
\bauthor{\bsnm{Cohen}, \binits{A.}}:
\batitle{Optimal pointwise sampling for l2 approximation}.
\bjtitle{Journal of Complexity}
\bvolume{68},
\bfpage{101602}
(\byear{2022})
\doiurl{10.1016/j.jco.2021.101602}
\end{barticle}
\endbibitem

\bibitem[\protect\citeauthoryear{Li et~al.}{2024}]{li2024principal}
\begin{barticle}
\bauthor{\bsnm{Li}, \binits{M.T.C.}},
\bauthor{\bsnm{Marzouk}, \binits{Y.}},
\bauthor{\bsnm{Zahm}, \binits{O.}}:
\batitle{{Principal feature detection via $\phi$-Sobolev inequalities}}.
\bjtitle{Bernoulli}
\bvolume{30}(\bissue{4}),
\bfpage{2979}--\blpage{3003}
(\byear{2024})
\doiurl{10.3150/23-BEJ1702}
\end{barticle}
\endbibitem

\end{thebibliography}

\end{document}